\documentclass[sn-mathphys-num]{sn-jnl}


\usepackage{graphicx}%
\usepackage{multirow}%
\usepackage{amsmath,amssymb,amsfonts}%
\usepackage{amsthm}%
\usepackage{mathrsfs}%
\usepackage[title]{appendix}%
\usepackage{xcolor}%
\usepackage{textcomp}%
\usepackage{manyfoot}%
\usepackage{booktabs}%
\usepackage{algorithm}%
\usepackage{algorithmicx}%
\usepackage{algpseudocode}%
\usepackage{listings}%
\usepackage{cleveref}


\newtheorem{theorem}{Theorem}
\newtheorem{lemma}{Lemma}
\newtheorem{corollary}{Corollary}
\newtheorem{proposition}[theorem]{Proposition}%
\newtheorem{example}{Example}%
\newtheorem{remark}{Remark}%

\newtheorem{definition}{Definition}%

\raggedbottom


\usepackage{upgreek}
\usepackage{amsfonts, amsmath}
\usepackage{graphicx}
\usepackage{overpic}
\usepackage{epstopdf}
\usepackage{algorithm}
\usepackage{algpseudocode}
\usepackage{enumerate}
\usepackage{bm}
\ifpdf
  \DeclareGraphicsExtensions{.eps,.pdf,.png,.jpg}
\else
  \DeclareGraphicsExtensions{.eps}
\fi

\graphicspath{{image/}}

\makeatletter
\newcommand\xleftrightarrow[2][]{%
  \ext@arrow 9999{\longleftrightarrowfill@}{#1}{#2}}
\newcommand\longleftrightarrowfill@{%
  \arrowfill@\leftarrow\relbar\rightarrow}
\makeatother

\usepackage{verbatim} 

\usepackage{enumitem}
\setlist[enumerate]{leftmargin=.5in}
\setlist[itemize]{leftmargin=.5in}




\newtheorem{assumption}{Assumption}

\usepackage{etoolbox}
\AtEndEnvironment{remark}{\null\hfill$\Diamond$}
\AtEndEnvironment{assumption}{\null\hfill$\Diamond$}

\newcommand{\mcl}{\mathcal}

\newcommand{\mbf}{\mathbf}
\newcommand{\mbb}{\mathbb}

\newcommand{\supp}{{\rm supp\ }}

\newcommand{\dd}{{\rm d}}

\newcommand{\bx}{\mbf x}

\newcommand{\bz}{\mbf z}
\newcommand{\by}{\mbf y}

\newcommand{\bmm}{\mbf m}

\newcommand{\eps}{\epsilon}

\newcommand{\vphi}{\varphi}
\newcommand{\bphi}{{\bm{\phi}}}

\newcommand{\bzeta}{\bm{\zeta}}
\newcommand{\bvphi}{{\bm{\vphi}}}
\newcommand{\btheta}{\bm{\theta}}

\newcommand{\bchi}{\bm{\vphi}}

\newcommand{\mX}{\mcl{X}}
\newcommand{\mY}{\mcl{Y}}
\newcommand{\mH}{\mcl{H}}
\newcommand{\mK}{\mcl{K}}

\newcommand{\mB}{\mcl{B}}

\newcommand{\PP}{\mbb{P}}
\newcommand{\PC}{\mathcal{P}}
\newcommand{\R}{\mbb{R}}
\newcommand{\EE}{\mbb{E}}

\newcommand{\NN}{\mbb{N}}

\newcommand{\Law}{{\rm Law}}

\newcommand{\st}{{\rm\,s.t.}}

\definecolor{darkred}{rgb}{.7,0,0}

\definecolor{darkgreen}{rgb}{.15,.55,0}

\definecolor{darkblue}{rgb}{0,0,0.7}


\usepackage{amsopn}

\DeclareMathOperator*{\argmin}{arg\,min}
\DeclareMathOperator*{\argmax}{arg\,max}
\DeclareMathOperator*{\minimize}{{\rm minimize}}


  \AtEndEnvironment{example}{\null\hfill$\Diamond$}
  \AtEndEnvironment{runningexample}{\null\hfill$\Diamond$}

\begin{document}

\title[GPs conditioned on nonlinear PDEs]{Gaussian Measures Conditioned on Nonlinear Observations:
  Consistency, MAP Estimators, and Simulation}



\author[1]{\fnm{Yifan} \sur{Chen}}\email{yifan.chen@nyu.edu}

\author*[2]{\fnm{Bamdad} \sur{Hosseini}}\email{bamdadh@uw.edu}

\author[3]{\fnm{Houman} \sur{Owhadi}}\email{owhadi@caltech.edu}
\author[3]{\fnm{Andrew M} \sur{Stuart}}\email{stuart@caltech.edu}

\affil[1]{\orgdiv{Courant Institute of Mathematical Science}, \orgname{New York University}, \orgaddress{\city{New York},  \state{NY}, \country{USA}}}

\affil*[2]{\orgdiv{Department of Applied Mathematics}, \orgname{University of Washington}, \orgaddress{\city{Seattle},  \state{WA}, \country{USA}}}

\affil[3]{\orgdiv{Department of Computing and Mathematical Sciences}, \orgname{California Instutute of 
Technology}, \orgaddress{\city{Pasadena},  \state{CA}, \country{USA}}}


\abstract{
The article presents a systematic study of the problem of conditioning  a Gaussian random variable $\xi$
 on  nonlinear observations of the form $F \circ \bphi(\xi)$ where $\bphi: \mX \to \R^N$ is a bounded linear operator and $F$ is nonlinear. Such problems  arise in the context of Bayesian inference and recent machine learning-inspired PDE solvers.  We give a representer theorem for the conditioned random variable $\xi \mid F\circ \bphi(\xi)$, stating that it decomposes as the sum of an infinite-dimensional Gaussian (which is identified analytically) as well as a finite-dimensional non-Gaussian measure. We also introduce a novel notion of the mode of a conditional measure by taking the limit of the natural relaxation of the problem, to which we can apply the existing notion of  maximum a posteriori estimators of posterior measures. Finally, we introduce a variant of the  Laplace approximation for the efficient simulation of the aforementioned conditioned Gaussian random variables towards uncertainty quantification.
}

\keywords{
Gaussian measures, Conditional probability, Bayesian inference
}



\maketitle

\section{Introduction}\label{sec:introduction}

We consider the problem of conditioning a Gaussian measure on a finite set of nonlinear observations
in the form of a nonlinear transformation of bounded linear functionals. 
Let $\{\mX, \langle \cdot, \cdot \rangle_\mX,  \| \cdot \|_\mX\}$ be a
separable Hilbert space with dual $\mX^\ast$ and consider a Gaussian measure 
$\mu = N(0, \mK) \in \PC(\mX)$, where $\PC(\mX)$ denotes the set of all Borel 
probability measures on $\mX.$ 
Let $\mK: \mX \to \mX$ denote the covariance operator under $\mu$. Fix a vector  $\bphi = (\phi_1, \dots, \phi_N) \in (\mX^\ast)^{\otimes N}$ for $N \in \NN$, 
along with a nonlinear map $F: \R^N \to \R^M$ for $M \in \NN.$
Let $\xi \sim \mu$ and $\beta>0$ be a parameter; then our goal in this 
article is to characterize the family of measures 
\begin{equation}
  \label{main-problem}
  \mu^\by_\beta := \Law\{   \xi \mid \by \sim  N( F( \bphi( \xi) ), 
\beta^{2} I) \},
\end{equation}
and their modes, in the limit of small $\beta$. 
The natural candidate for the $\beta=0$ limit is
\begin{equation}
  \label{main-problem2}
  \mu^\by_0 := \Law\{   \xi \mid F( \bphi( \xi) )=\by \}.
\end{equation}
We refer to the measures $\mu^\by_\beta$ for $\beta>0$ as \emph{posteriors} 
and to their $\beta=0$ limit $\mu^\by_0$ as \emph{conditionals.}
The modes of the posterior measures, 
which are the {\it maximum a posteriori (MAP)}
\footnote{This uses a specific
  choice for the definition of MAP estimator in the infinite-dimensional
  setting as there are several in the literature; this is 
discussed in detail in what follows.}
estimators of the measures $\mu_\beta^\by$, 
are defined via the family of optimization problems
\footnote{We will provide details ensuring
that $\mK^{-\frac12}$ and $\mK^{1/2}\mX$ are well-defined.}
\begin{equation}
  \label{MAP-optimization}
  u^\by_\beta := \argmin_{u \in \mK^{1/2}\mX} \: \:  \|\mK^{-1/2} u\|^2_{\mX}
  + \frac{1}{\beta^2} | F(\bphi(u)) - \by |^2.
\end{equation}
The natural candidate for the $\beta=0$ limit of the mode is
\begin{equation}\label{MAP-optimization2}
    \begin{aligned}
      & u^\by_0 := \argmin_{u \in \mK^{1/2}\mX} \: \: \| \mK^{-1/2}u\|_{\mX} \quad \text{subject to }(\st) \quad     F\bigl(\bphi(u)\bigr) = \by.
    \end{aligned}
  \end{equation}

We make the following contributions to understanding the
posteriors, MAP estimators and their $\beta \to 0$ limits:

\begin{enumerate}
\item We establish the existence of appropriate limits
of $\mu^\by_\beta$ and $u^\by_\beta$ as
$\beta \to 0$, making precise the natural candidates for
$\mu^\by_0$ and $u^\by_0$ defined above, and characterizing
$u^\by_0$ as an approximate definition of the MAP estimator
of the conditional measure $\mu^\by_0.$
These relationships, and the
theorems making them explicit, are
summarized in \Cref{fig:commut-diag}.

\item
We show that for $\beta \ge 0$, the posterior measures $\mu^\by_\beta$
can be decomposed in the convolution of a conditional Gaussian measure and a 
non-Gaussian measure that is finite-dimensional; this result is given in \Cref{prop:posterior-decomposition}. This decomposition is analogous to 
representer theorems for  the MAP estimator $u^\by_\beta$, stating that the 
minimizers of \eqref{MAP-optimization} are effectively finite-dimensional; see \Cref{lem:MAP-decomposition}.


\item We introduce a technique for
  generating samples from the posteriors $\mu^\by_\beta$ by decomposing them
  into a finite-dimensional component, which is sampled by standard
  algorithms such as Markov chain Monte Carlo (MCMC) or variational
  inference, and an infinite-dimensional Gaussian component, which may be simulated
 exactly using analytical properties of Gaussian measures; see \Cref{sec:sampling}.
 In particular, we show that the non-Gaussian component is amenable to approximation 
 using a Laplace or Gauss-Newton-type approximation in settings where lots of observations 
 are available, leading to efficient numerical algorithms in applications such as PDE solvers.



\end{enumerate}

\begin{figure}[htp]
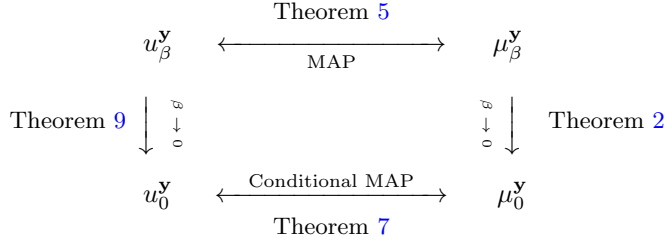

  \centering
  \begin{tabular}[htp]{c c c}
                    & {\scriptsize \Cref{General-MAP-characterization}} & \\
    \hspace{10ex} $u^\by_\beta$  & $\xleftrightarrow[\hspace{6.9ex}{\rm MAP}\hspace{6.9ex}]{}$  &
                                                                                     \hspace{-12ex}               $\mu^\by_\beta$
                                                                                    \vspace{2ex} \\
    \raisebox{2.5ex}{{\scriptsize \Cref{prop:map-convergence-to-conditional-mode}}}
    \rotatebox{90}{$\xleftarrow{ \hspace{4ex}}$} \raisebox{5ex}{ \rotatebox{-90}{\tiny $\beta \to 0$}}    & &
                                                                                                           \raisebox{5ex}{\rotatebox{-90}{\tiny $\beta \to 0$}}   \rotatebox{90}{$\xleftarrow{ \hspace{4ex}}$}   \hspace{1ex}
     \raisebox{2.5ex}{{\scriptsize \Cref{prop:convergence-of-conditionals-and-pushforwards}}}  
                                                                                                                      \vspace{.5ex} \\
    \hspace{10ex} $u^\by_0$  & $\xleftrightarrow{\hspace{2ex} {\text{Conditional  MAP}} \hspace{2ex}}$  &
                                                                                                              \hspace{-12ex}                $\mu^\by_0$ \\
    & {\scriptsize \Cref{prop:conditional-mode-optimization}} & \\
\end{tabular}
  \caption{Diagram relating small-noise limits of posteriors $\mu^\by_\beta$ and their MAP 
estimators $u^\by_\beta$ to their conditional counterparts $\mu^\by_0$ and $u^\by_0$.}
  \label{fig:commut-diag}
\end{figure}

\subsection{Motivating Examples}
Below, we give two motivating examples for the study of posterior measures of the form \eqref{main-problem} with their MAP estimators and conditional counterparts. 

\subsubsection{Inverse Problems}
\label{ssec:BIP}

Fix any $\beta>0$. Then the posterior measures $\mu^\by_\beta$ 
solve the Bayesian inverse problem (BIP) of finding the conditional 
distribution of $u\mid \by$ when
$u \sim \mu$, $\bzeta \sim N(0,\beta^2 I)$ independent of $u$,  and $\by$ (the data) is given 
by the model
\begin{subequations}
\label{eq:G}
\begin{align}
y&=G(u)+ \bzeta,\\
G&:=F\circ \bphi.
\end{align}
\end{subequations}
We can employ Bayes' rule \cite{stuart2010inverse}
to characterize the $\mu^\by_\beta$
via their Radon-Nikodym derivatives with respect to $\mu:$
\begin{equation}
  \label{main-problem-RN}
  \begin{aligned}
    \frac{\dd \mu^\by_\beta}{\dd \mu} (u)
    &= \frac{1}{\omega_\beta(\by)} \exp \left( - \frac{1}{2 \beta^2} | F( \bphi(u)) - \by |^2 \right), \\
    \omega_\beta(\by)
    &:= \EE_{u \sim \mu} \exp \left( - \frac{1}{2 \beta^2} | F( \bphi(u)) - \by |^2 \right).
\end{aligned}
\end{equation}
A common task in solving inverse problems and uncertainty quantification (UQ) is to 
estimate various statistics of the above posterior measures. The MAP $u^\by_\beta$ is a 
popular choice among practitioners, which highlights the importance of understanding 
its properties. Alternatively, one may choose to generate samples from $\mu^\by_\beta$ directly 
using MCMC and then compute empirical statistics such as posterior mean and variance. In either 
case, our finite-dimensional representations of $\mu^\by_\beta$ and its MAP $u^\by_\beta$ offer 
a path towards efficient calculations. Moreover,  it is natural to characterize solutions
of these problems in the small noise limit as $\beta \to 0$ to understand the consistency of 
the underlying inverse problems and their limit behavior.

\subsubsection{Solving PDEs with Gaussian Processes} \label{subsec:motivation}

One of the core problems of the field of scientific machine learning (ML)
is the design of novel algorithms for the solution of PDEs based on ML techniques.
An example of such a methodology was introduced by the authors in 
\cite{chen2021solving} where a Gaussian Process (GP) solver 
was developed for the numerical solution of nonlinear PDEs; henceforth referred to as GP-PDE. 
We briefly recall this
methodology in the context of a specific example from \cite{chen2021solving}. 
Consider the PDE
\begin{equation}
  \label{elliptic-PDE}
  \left\{
    \begin{aligned}
      - \Delta u (\bx) + \tau(u(\bx)) & = f(\bx), && \bx \in (0,1)^2,\\
         u(\bx) & = 0, && \bx \in \partial (0,1)^2,
    \end{aligned}
    \right.
  \end{equation}
  for $\tau:\R \to \R$ and $f: (0,1)^2 \to \R.$ We assume the existence of a unique solution $u^\star$ in the strong/classical sense. Then GP-PDE aims to find a numerical approximation 
$u_\beta^\by$ to $u^\star$ by the following recipe: First, choose a set of $M$
  collocation points $\bx_1, \dots, \bx_M \in [0,1]^2$, with $J$ in the interior and $M-J$ on the boundary, ordered so that $\bx_1, \dots, \bx_J \in (0,1)^2$
  while $\bx_{J+1}, \dots, \bx_{M} \in \partial (0,1)^2$. Then define
$\bphi = ( \phi_1, \dots, \phi_N)$ for $N = J + M$ with the $\phi_j$ defined as
  \begin{equation}
\label{PDE-MAP-optimization1}
    \begin{aligned}
      \phi_j(u) & = u(\bx_j),  && \text{ for } j = 1, \dots, M, \\
      \phi_j(u) & = \Delta u(\bx_{j-M}), && \text{ for } j = M+1, \dots, M+J.
  \end{aligned}
\end{equation}
and the nonlinear function  $F: \R^{M + J} \to \R^M$ defined row-wise as
\begin{equation}
\label{PDE-MAP-optimization2}
  F_j(\bz) := \left\{
    \begin{aligned}
      &  - z_{j + M} + \tau(z_j), &&  1 \le j \le J, \\
      &   z_j, && J+1 \le j \le M,
    \end{aligned}
    \right.
\end{equation}
Furthermore, define the vector $\by \in \R^M$ defined element-wise as
  \begin{equation}
\label{PDE-MAP-optimization3}
  y_j := \left\{
    \begin{aligned}
      &  f(\bx_j), &&  1 \le j \le J, \\
      &   0, && J+1 \le j \le M.
    \end{aligned}
    \right.
  \end{equation}
With these definitions, we may now consider the optimization
problems \eqref{MAP-optimization} and \eqref{MAP-optimization2}.
The two optimization problems define two variants of GP-PDE, one leading to a 
constrained optimization problem, and another being its unconstrained relaxation.

Recalling the discussion earlier in \Cref{sec:introduction} suggests that the resulting 
minimizer identifies the mode of an underlying posterior measure 
$\mu^\by_\beta$. This observation was discussed informally 
in \cite{chen2021solving}, in the setting of the GP-PDE methodology,
and our \Cref{General-MAP-characterization} establishes this connection rigorously. 
The GP-PDE methodology relies on a representer 
theorem (see also \cite{smola1998learning}) that identifies 
the solution of \eqref{MAP-optimization}, in the GP-PDE context, 
via a finite-dimensional optimization problem. In
\cite{chen2021solving} it is argued 
that the natural $\beta \to 0$ limit of \eqref{MAP-optimization}, 
namely \eqref{MAP-optimization2}, 
can also be solved with a representer theorem. 
\Cref{prop:posterior-decomposition} and  \Cref{lem:MAP-decomposition} can be viewed as establishing
Bayesian analogs of these results from \cite{chen2021solving}, 
where the exposition is primarily focused on kernel methods. 
  
\subsection{Literature Review} \label{subsec:lit}
Below we give an overview of the relevant literature to our work with a particular focus on 
the theory of Bayesian inverse problems, GPs, and probabilistic methods in numerical analysis.

\subsubsection{Bayesian Inverse Problems and MAP Estimators}
Bayesian inference \cite{gelman1995bayesian} is a cornerstone of modern statistics and 
data science. When applied in the context of infinite-dimensional or functional inference 
the methodology is best known under the term Bayesian inverse problems \cite{franklin1970well, tarantola2005inverse, kaipio2006statistical, stuart2010inverse}. Over the past decade, the algorithmic development and 
theoretical analysis for Bayesian inverse problems have become mature areas of research. Here, 
Bayesian inference with Gaussian prior measures is by far the most common setup for both algorithms 
and theoretical analysis. The overwhelming majority of 
function space MCMC algorithms \cite{tierney1998note, beskos2011hybrid, cotter2013mcmc, cui2016dimension, beskos2017geometric} are developed specifically for Gaussian priors; see
\cite{hosseini2019two} and references within for examples of algorithms for
non-Gaussian priors. 
The well-posedness theory of Bayesian inverse problems was 
originally developed for the case of Gaussian (or sub-Gaussian) priors \cite{cotter2009bayesian, stuart2010inverse} and was later extended to the non-Gaussian setting 
\cite{dashti2012besov, hosseini2017well, hosseini2017well-2, sullivan2017well, sprungk2020local, latz2020well} but the case of Gaussian priors remains most applicable as it allows for 
the widest range of nonlinear forward maps. From this perspective, this article 
makes important theoretical contributions towards the understanding and characterization of 
Bayesian posteriors under nonlinear observation models with Gaussian priors. Most importantly, 
our second contribution enables the use of finite-dimensional MCMC algorithms for nonlinear 
observation models without the need for direct discretization of the inverse problem. 

Variational methods are also an important family of algorithms for the solution of Bayesian inverse problems. 
Perhaps the most common task here is  computing a MAP estimator. Defining a MAP 
estimator in the function space setting is highly non-trivial. 
Several definitions, and resulting analyses, of modes of measures 
on  infinite-dimensional spaces exist \cite{agapiou2018sparsity,ayanbayev2021convergence,ayanbayev2021gamma,clason2019generalized,dashti2013map,helin2015maximum} where the choice of the notion of the mode is closely tied to the 
choice of the prior measure. Once again, the  Gaussian priors
lead to the most natural definition of a MAP estimator  
\cite{dashti2013map, ikeda2014stochastic} which is the same one we shall use to 
define $u^\by_\beta$ in \eqref{MAP-optimization}. However, to our knowledge, 
a notion of a conditional mode, i.e., a precise definition of $u^\by_0$ as in \eqref{MAP-optimization2}
has not been studied before and constitutes one of our main contributions.

\subsubsection{Gaussian Measures and Processes}
The general theory of Gaussian measures in infinite-dimensional settings 
is a classic subject in probability theory and the theory of stochastic differential equations. 
We refer the reader to the work of Bogachev  \cite{bogachev1998gaussian} for the detailed treatment 
of this subject on topological vector spaces and Maniglia and Rhandi \cite{maniglia2004gaussian} 
and Janson \cite{janson1997gaussian}
for the case of Hilbert spaces.

GPs, as a special instance of Gaussian measures, and, by extension, reproducing kernel Hilbert space (RKHS) methods \cite{kanagawa2018gaussian, van2008reproducing} and support 
vector machines \cite{smola1998learning}, have 
a long history in approximation theory \cite{wendland2004scattered}, 
statistical modeling and inference \cite{gine2021mathematical},
inverse problems \cite{cressie1990origins}, and
machine learning \cite{williams2007gaussian, smola1998learning}. 
While in this article we mainly focus on solving differential equations with GPs as an application 
of our theory \cite{sarkka2011linear, Owhadi:2014, chkrebtii2016bayesian, cockayne2017probabilistic, raissi2018numerical, swiler2020survey, chen2021solving, wang2021bayesian} (see also \Cref{sec:probabilistic-methods-for-solving-PDEs} below), GPs 
have wide applications in many modern areas of scientific computing and machine learning 
such as deep GPs \cite{damianou2013deep, dunlop2018deep, jakkala2021deep, dutordoir2021deep, owhadi2023ideas} as a model for deep learning, vector-valued GPs for operator learning 
\cite{batlle2024kernel} and  generative modeling \cite{murray2008gaussian, casale2018gaussian, fortuin2020gp, pandey2024diffeomorphic}, and graphical models for semi-supervised learning \cite{bertozzi2018uncertainty}.

The reasons for this widespread use of
GPs are their many desirable theoretical 
properties that lead to efficient algorithms. Perhaps the most useful are 
the facts that (1) GPs are completely identified by their mean and covariance operators; (2) 
GPs are closed under affine transformations; and (3) GPs conditioned 
on affine observations are also GPs that can be identified analytically; see \Cref{prop:conditional-Gaussian-direct-sum}. However, GPs conditioned on nonlinear observations are in general no longer 
GPs and cannot be identified analytically. Due to this fact, such conditional measures are 
often characterized computationally using MCMC \cite{robert1999monte, cotter2013mcmc, beskos2017geometric} or variational inference \cite{blei2017variational, pinski2015kullback}.
Such conditional measures are readily common in the field of inverse problems but they 
are increasingly common in modern machine learning applications mentioned in the previous 
paragraph as well. To this end, 
one of the main contributions of this article is to reveal the additional structure of conditioned 
GPs in the nonlinear setting that can be further leveraged by both MCMC and variational algorithms 
to further improve the accuracy and complexity of algorithms.



\subsubsection{The Intersection of Numerical Analysis and Probability}\label{sec:probabilistic-methods-for-solving-PDEs}
As discussed in \cite{owhadi2019statistical}, the fields of numerical approximation and statistical inference, traditionally viewed as distinct, are in fact deeply connected through their 
 common purpose of making estimations with partial information \cite[Chap.~20]{owhadi2019operator}.
 This shared purpose has recently stimulated a growing interest in   learning approaches to solving PDEs \cite{Owhadi:2014, raissi2017inferring}  and in   the merging of numerical errors with modeling errors and UQ \cite{hennig2015probabilistic}.
Although this trend may seem novel, the synergy between numerical approximation and statistical inference has historical roots, dating back to Poincaré's lectures on Probability Theory \cite{Poincare:1896}, and extending through the pioneering work of Sul'din \cite{sul1959wiener}, Palasti and Renyi \cite{PalastiRenyi1956}, Sard \cite{sard1963}, Kimeldorf and Wahba \cite{Kimeldorf70},   and Larkin \cite{larkin1972gaussian}. While these studies initially
``attracted little attention among numerical analysts'' \cite{larkin1972gaussian}, they were revived 
in the fields of Information Based
Complexity \cite{Traub1988}, Bayesian Numerical Analysis \cite{Diaconis:1988}, and more recently  in
 Probabilistic Numerics \cite{hennig2015probabilistic,cockayne2019bayesian}. This connection 
 between inference and numerical approximation is also central to Bayesian/decision-theoretic approaches to solving ODEs \cite{Skilling1992} and PDEs \cite{Owhadi:2014}, in identifying operator adapted wavelets \cite{owhadi2019operator} and designing fast solvers
 for kernel matrices \cite{schafer2017compression, schafer2020sparse,chen2023sparse}, and in parameter estimation \cite{chen2021consistency}.
 
 Another connection between numerical approximation and statistical inference arises in the 
framework of optimal recovery introduced by Micchelli and Rivlin \cite{micchelli1977survey,owhadi2019operator}
and its connection to Bayesian inference and GP regression through 
decision and game theory \cite{Wald:1945, VNeumann28}.
Optimal recovery was initially used for solving linear PDEs \cite{Harder:1972, Duchon:1977, Owhadi:2014}, but was extended to nonlinear PDEs in \cite{chen2021solving} and to general computational graph completion problems in \cite{owhadi2022computational} where the connection between 
optimal recovery and the GP perspective on solving PDEs is made explicit.
Finally, we mention the recent papers \cite{long2022autoip} and \cite{vadeboncoeur2023fully} 
where numerical errors are analyzed as  Bayesian posterior measures.
Further details about the connection between optimal recovery, decision theory, and GPs 
can be found in \Cref{appendix:optimal-recovery}.

\subsection{Notation and Preliminaries} \label{subsec:notation}
We use $|\cdot|$ to denote the finite-dimensional Euclidean norm.
Since $\mX$ is  Hilbertian,
all elements of the dual space $\mX^\ast$ may be 
Reisz-represented by elements of $\mX$ itself; if $\psi \in \mX^\ast$ then
we write $\psi^\ast \in \mX$ for its Reisz-representer.
Likewise, if $\theta \in \mX$ then we write $\theta^\ast$ for 
the dual element  it Reisz-represents. Throughout we will write $B_r(u) \subset \mX$ to 
denote the ball of radius $r \ge 0$ centered at $u$.

We give a brief summary of the notation from Gaussian measure theory 
needed for this paper; we follow \cite[Section 3]{hairer2009introduction} and the
reader seeking more details may consult \cite{bogachev1998gaussian}.
We say that a measure $\mu \in \PP(\mX)$
is a Gaussian measure (process) on $\mX$ if and only if for any $\psi \in \mX^\ast$, the pushforward
measure $ \mu \circ \psi^{-1} =: \psi_\sharp \mu  \in \PP(\R)$ is a Gaussian measure. 
Henceforth we write $\mu = N(m, \mK)$ to denote a Gaussian measure in $\PP(\mX)$
with mean $m \in \mX$ and covariance operator $\mK:\mX \to \mX $.
Whenever $m=0$ we say $\mu$ is a centered Gaussian measure. 
Note that $\mK$ is necessarily compact, indeed it is trace-class, and we may define the symmetric operator  $\mK^{\frac12}$ by spectral calculus;
operator $\mK^{-\frac12}$ can also be densely defined on $\mK^{1/2} \mX.$
Indeed, associated to a centered Gaussian measure $\mu = N(0, \mK)$,
we identify its Cameron-Martin space 
$\mH(\mu) := \mK^{1/2} \mX$ which is Hilbertian 
with corresponding inner product
\begin{equation*}\label{Cameron-Martin-Inner-Product}
  \langle u ,v \rangle_{\mH(\mu)} := \langle \mK^{-1/2} u, \mK^{-1/2} v \rangle_\mX, \qquad \forall u,v \in \mH(\mu); 
\end{equation*}
we write $\| \cdot \|_{\mH(\mu)}$ for the induced norm. 
The Cameron-Martin space is a Reproducing Kernel Hilbert Space (RKHS) if
pointwise evaluation is defined in $\mH(\mu);$ the kernel of the RKHS
is the covariance function associated with $\mu$; see \cite[Sec~2.3]{van2008reproducing}.
For any infinite-dimensional Gaussian measure, it is always true that $\mu\bigl(\mH(\mu)\bigr)=0;$ in contrast, by construction, $\mu(\mX)=1.$
Furthermore $\mH(\mu)$ is compactly embedded into $\mX.$

We will also review some preliminary definitions and results for
conditional measures
identified via a mapping as these ideas are central to our study. 
Our reference for this material is \cite[Sec.~10.4]{bogachev-measuretheory-II}.
Let $\mX, \mY$ be separable Hilbert spaces with $\mB(\mX), \mB(\mY)$ denoting their
respective Borel $\sigma$-algebras together with a measure $\nu \in \PP(\mX)$.
Consider a $(\mB(\mX), \mB(\mY))$--measurable
map  $T: \mX \to \mY$. We then have the following definition of a system of
conditional measures of $\nu$ generated by the mapping $T$:

\begin{definition}
  A function  $(A, y) \mapsto \nu^y(A)$ is a \emph{system of conditional measures} for $\nu$ with respect to the map $T$ if:

  \begin{enumerate}[label=\it (\alph*)]
  \item for every fixed $y \in \mY$ the function $\nu^y \in \PP(\mX)$;
  \item for every fixed $A \in \mB(\mX)$ the function $y \mapsto \nu^y(A)$ is
    measurable with respect to $\mB(\mY)$ and  $T_\sharp\nu$-integrable;
  \item for all $A \in \mB(\mX)$ and $E \in \mB(\mY)$ it holds that
    \begin{equation*}
      \nu(A \cap T^{-1}(E)) = \int_E \nu^y(A) T_\sharp \nu(\dd y).
    \end{equation*}
  \end{enumerate}
\end{definition}

We also use the alternative notation $\nu( \dd \xi \mid T(\xi) =y)$ 
to denote the system of conditional measures in the above definition;
this notation succintly captures  what is behind the definition.
The next result is a consequence of
\cite[Lem.~10.4.3 and Cor.~10.4.10]{bogachev-measuretheory-II}:

\begin{proposition}
\label{p:citethis}
  Consider the above setting and suppose $T: \mX \to \mY$ is $\nu$-measurable.
  Then it holds that:
  \begin{enumerate}[label=\it (\alph*)]
  \item there exists a system of conditional measures $\nu^y$ for $\nu$
    with respect to the map $T$;
  \item the conditional measures $\nu^y$ are essentially unique, i.e., there exists
    a set $Z \in \mB(\mY)$ so that $T_\sharp \nu(Z) = 0$ and the $\nu^y$ are unique for all
    $y \in \mY \setminus Z$ (i.e., essentially unique); 
    
  \item for $T_\sharp \nu$-a.e. $y$ the measures $\nu^y$ concentrate on $T^{-1}(y)$, i.e.,
    $\nu^y(\mX \setminus T^{-1}(y)) = 0$.
  \end{enumerate}
\end{proposition}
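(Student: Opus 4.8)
The plan is to read Proposition~\ref{p:citethis} as a disintegration (regular conditional probability) statement and to extract all three claims from the theory of conditional measures on standard Borel spaces in \cite[Sec.~10.4]{bogachev-measuretheory-II}. The essential structural input is that $\mX$ and $\mY$, being separable Hilbert spaces, are Polish and therefore standard Borel; this is exactly the hypothesis guaranteeing that disintegrations of $\nu$ over the fibres of $T$ exist.

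For existence in~(a), I would introduce the sub-$\sigma$-algebra $\mathcal{G} := T^{-1}(\mB(\mY)) \subset \mB(\mX)$ and build a regular conditional probability $x \mapsto \mu_x$ for $\nu$ given $\mathcal{G}$; such an object exists precisely because $\mX$ is standard Borel, which is the content of \cite[Lem.~10.4.3]{bogachev-measuretheory-II}. Each $\mu_x$ is a probability measure and $\mathcal{G}$-measurable in $x$, so by Doob--Dynkin factorization it depends on $x$ only through $T(x)$; writing $\mu_x = \nu^{T(x)}$ defines the map $(A,y) \mapsto \nu^y(A)$ (extended arbitrarily off the range of $T$). Properties (a)--(c) of the definition then follow by unwinding the defining property of the regular conditional probability together with the change-of-variables identity $\int_\mX g(T(x))\,\nu(\dd x) = \int_\mY g(y)\, T_\sharp\nu(\dd y)$.

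For essential uniqueness in~(b), I would argue from the defining integral identity alone. If $\nu^y$ and $\tilde\nu^y$ are two systems, then for each fixed $A \in \mB(\mX)$ item~(c) of the definition yields $\int_E \nu^y(A)\, T_\sharp\nu(\dd y) = \int_E \tilde\nu^y(A)\, T_\sharp\nu(\dd y)$ for all $E \in \mB(\mY)$, forcing $\nu^y(A) = \tilde\nu^y(A)$ for $T_\sharp\nu$-a.e.\ $y$. To collapse the $A$-dependent null sets into one set $Z$, I would invoke separability of $\mX$: fix a countable algebra generating $\mB(\mX)$, take the union of the corresponding countably many null sets, and extend equality from the generating algebra to all Borel sets using that two Borel probability measures agreeing on a generating algebra coincide.

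For concentration in~(c), I would test item~(c) of the definition with preimages. Choosing $A = T^{-1}(E^{c})$ and conditioning set $E$ gives $0 = \nu\bigl(T^{-1}(E^{c}\cap E)\bigr) = \int_E \nu^y\bigl(T^{-1}(E^{c})\bigr)\, T_\sharp\nu(\dd y)$, so $\nu^y\bigl(T^{-1}(E^{c})\bigr) = 0$ for $T_\sharp\nu$-a.e.\ $y \in E$. Letting $E$ range over a countable family of balls with rational radii centred at a countable dense subset of $\mY$, whose members containing a given $y$ intersect exactly in $\{y\}$, and again collecting the countably many null sets into a single $Z$, one obtains $\nu^y\bigl(T^{-1}(\{y\})\bigr) = 1$, that is $\nu^y(\mX \setminus T^{-1}(y)) = 0$, for all $y \notin Z$; this is \cite[Cor.~10.4.10]{bogachev-measuretheory-II}. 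The main obstacle is the existence step~(a), which genuinely rests on the standard Borel structure of $\mX$; parts (b) and (c) are routine once existence is in hand, their only delicate point being the reduction from uncountably many fibres to a single null set, handled throughout by the separability of $\mX$ and $\mY$.
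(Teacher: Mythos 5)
Your proposal is correct and follows essentially the same route as the paper: the paper's entire ``proof'' of this proposition is the citation of \cite[Lem.~10.4.3 and Cor.~10.4.10]{bogachev-measuretheory-II}, which are exactly the two results you invoke for existence and for concentration on fibres, with your uniqueness and null-set-collection arguments being the standard unpacking of that machinery on standard Borel spaces. Nothing in your write-up deviates from or adds a gap to what the paper relies on.
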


\begin{remark}
In most of this paper we consider $T=G$ where $G$ is defined in
(\ref{eq:G}b); thus $\mY$ is finite-dimensional. However we do make 
some theoretical observations and remarks about the more general 
setting, which includes infinite-dimensional $\mY.$
\end{remark}

\subsection{Outline} \label{subsec:overview}

In \Cref{sec:posterior-analysis} we analyze the posterior measure,
and limits as $\beta \to 0$. \Cref{sec:MAP-analysis} is devoted to
the modes, or MAP estimators, associated with the family of posterior measures,
and their $\beta \to 0$ limit. In \Cref{sec:sampling} we discuss algorithms
to sample the posterior measures, exploiting the special structure of the
observations and the decomposition of posterior measures. Finally, we give our conclusions in \Cref{sec:discussion}.
Proofs of various technical results are collected in the appendix.

\section{Analysis of Posterior and Conditional Measures}\label{sec:posterior-analysis}

In this section we study the posterior measures $\mu^\by_{\beta}$, 
and the conditionals $\mu^\by_{0}$. In  \Cref{subsec:posterior-convergence} 
we prove a form of convergence, suitably defined, of $\mu^\by_{\beta}$
to $\mu^\by_{0}$. \Cref{subsec:conditional-decomposition} studies decompositions of the
conditionals and posteriors respectively into the convolution of finite-dimensional non-Gaussians with an infinite-dimensional Gaussian part.

\subsection{Convergence of Posterior Measures to Conditionals}\label{subsec:posterior-convergence}
In this subsection we show that in the limit $\beta \to 0$ the posterior measures
$\mu^\by_\beta$ converge to the conditional measures $\mu^\by_0$
in an appropriate sense. 
We start by identifying conditions that ensure that
the family of posterior measures $\mu^\by_\beta$ are well-defined for
$\beta>0$. To this end consider the set-up of \Cref{ssec:BIP}. 
We formulate the BIP of determining $u|\by$, from
\eqref{eq:G}, under the following assumptions:

  \begin{assumption}\label{assumption:F-assumptions}
Assume that $u \sim \mu$, $\bzeta \sim \pi_\beta :=N(0,\beta^2 I)$ and $u, \bzeta$ are independent. Assume further that the map $F: \R^N \to \R^M$ is finite 
at some point $\bz' \in \R^N$ and that $F$ is locally Lipschitz, 
i.e., for every $r >0$ there exists $L(r) > 0$ such that
      \begin{equation*}
      \| F(\bz_1) - F(\bz_2) \|_2 \le L(r) \| \bz_1 - \bz_2 \|_2 
       \qquad \forall \bz_1, \bz_2 \in B_r(0).
    \end{equation*}
 \end{assumption}

Note that, since $F$ is finite at one point $\bz' \in \R^N$ then this assumption
implies that $F$ is locally bounded from above, i.e., for every $r >0$ there exists
      $M(r) > 0$ such that
      \begin{equation*}
      \| F(\bz) \|_2 \le M(r) \qquad   \forall \bz \in B_r(0).
    \end{equation*}
Recalling definition (\ref{eq:G}b), we have:
    
    \begin{lemma} \label{lem:citethis}
      Let \Cref{assumption:F-assumptions} hold and consider 
      the BIP for $u|\by$ defined via \eqref{eq:G}. 
      Then, for every $\beta>0$, the posterior distribution $\mu^\by_\beta$
      is given by \eqref{main-problem-RN}. Furthermore, the map $G$ defines a unique 
      (up to equivalence) system of conditional measures of $\mu$, denoted
      $\mu^\by_0:=\mu(\mathrm{d}u \mid G(u)=\by)$.
    \end{lemma}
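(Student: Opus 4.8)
The plan is to treat the two claims separately, since the finite-noise case ($\beta>0$) is handled by a density-based Bayes' rule while the conditional ($\beta=0$) is an instance of the abstract disintegration result \Cref{p:citethis}. A common preliminary step is to record that the forward map $G=F\circ\bphi$ is Borel measurable. Each $\phi_j\in\mX^\ast$ is a bounded, hence continuous, linear functional, so $\bphi:\mX\to\R^N$ is continuous; and by \Cref{assumption:F-assumptions} the map $F$ is locally Lipschitz and finite at one point, which forces it to be finite-valued and continuous on all of $\R^N$. Therefore $G=F\circ\bphi$ is continuous, hence Borel measurable and in particular $\mu$-measurable.

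For the first claim I would form the joint law of the pair $(\xi,\by)$ on $\mX\times\R^M$ induced by the generative model $\xi\sim\mu$ and $\by\mid\xi\sim N(G(\xi),\beta^2 I)$. Since the conditional law of $\by$ given $\xi$ has the Gaussian Lebesgue density $\by\mapsto(2\pi\beta^2)^{-M/2}\exp(-\tfrac{1}{2\beta^2}|\by-G(\xi)|^2)$, the joint law admits a density with respect to $\mu\otimes\lambda$ (with $\lambda$ Lebesgue measure on $\R^M$) that factorizes through $G$. The Bayes' theorem for inverse problems \cite{stuart2010inverse} then identifies the posterior as the disintegration of this joint law against the $\by$-marginal, yielding precisely the Radon--Nikodym derivative in \eqref{main-problem-RN}. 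The only conditions to verify are that the normalizing constant $\omega_\beta(\by)$ is positive and finite: finiteness is immediate because the integrand is bounded above by $1$ and $\mu$ is a probability measure, while positivity follows since $\exp(-\tfrac{1}{2\beta^2}|F(\bphi(\xi))-\by|^2)$ is strictly positive for every $\xi$, so its $\mu$-integral cannot vanish.

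For the second claim I would apply \Cref{p:citethis} directly with $\nu=\mu$, the map $T=G$, and the finite-dimensional target $\mY=\R^M$. Because $G$ is $\mu$-measurable, parts (a) and (b) of that proposition yield both the existence of a system of conditional measures $\mu^\by_0=\mu(\dd u\mid G(u)=\by)$ and its essential uniqueness, i.e.\ uniqueness up to a $G_\sharp\mu$-null set of values $\by$; part (c) additionally records that these measures concentrate on the fibers $G^{-1}(\by)$, matching the interpretation of $\mu^\by_0$ as the $\beta\to0$ conditional.

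The routine ingredients---measurability of $G$ and the integrability bounds---follow immediately from the assumptions, so no single step is a serious analytical obstacle. The point requiring the most care is the rigorous invocation of Bayes' theorem in infinite dimensions: one must ensure that the generative definition of $\mu^\by_\beta$ as a conditional law genuinely coincides with the measure defined by \eqref{main-problem-RN}, which is exactly the disintegration argument underlying \cite{stuart2010inverse}. The local boundedness of $F$ noted after \Cref{assumption:F-assumptions}, though not strictly needed for integrability here since the likelihood is bounded by $1$, is what keeps the construction well-behaved and will be used in the subsequent $\beta\to0$ analysis.
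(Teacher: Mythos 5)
Your proposal is correct and follows essentially the same route as the paper: for $\beta>0$ the paper simply cites the Bayes' theorem for inverse problems (\cite[Thm.~10]{dashti2017bayesian}), whose hypotheses---measurability of $G$ and positivity/finiteness of the normalizing constant $\omega_\beta(\by)$---are exactly the conditions you verify by hand, and for $\beta=0$ both you and the paper apply \Cref{p:citethis} after noting that $G=F\circ\bphi$ is continuous, hence $\mu$-measurable. There are no gaps; your write-up is just a mildly unpacked version of the paper's two citations.
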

    
    \begin{proof}
     \cite[Thm.~10]{dashti2017bayesian} establishes the result for $\beta>0.$
     The result for $\beta=0$ follows from \Cref{p:citethis}, using
     the fact that, under the stated assumptions on $F$, $G$ is continuous and hence $\mu-$measurable as a map from $\mX$ into $\R^M.$
     \end{proof}

We now consider the limit of the measures $\mu^\by_\beta$ as $\beta \to 0.$ It is
convenient to express the result in terms of the joint measure  $\PP(\dd u,\dd \by).$
This joint measure may be factored as $\PP(\dd u|\by)\PP(\dd \by)$ or as $\PP(\dd \by|u)\PP(\dd u).$ The latter necessarily involves a Dirac mass when $\beta=0$ and is not convenient to work with; we hence use the former factorization.

\begin{theorem}\label{prop:convergence-of-conditionals-and-pushforwards}
  Let \Cref{assumption:F-assumptions} hold. Then the measures $\mu^\by_\beta( \dd u) G_\sharp \mu \ast \pi_\beta (\dd \by)$
  converge weakly to $\mu^\by_0(\dd u) G_\sharp \mu(\dd \by)$
  as $\beta \to 0$. That is,
  $\forall f \in C_b(\mX \times \mY)$
  \begin{equation*}
    \lim_{\beta \to 0} \int_{\mY}  \int_\mX  f(u, \by)   \mu^\by_\beta(\dd u)
    G_\sharp \mu \ast \pi_\beta( \dd \by)
    = \int_{\mY} \int_{\mX} f(u, \by)   
\mu^\by_0(\dd u) G_\sharp \mu (\dd \by).
  \end{equation*}
\end{theorem}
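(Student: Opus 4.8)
The plan is to recognize that both joint measures appearing in the statement are the laws of explicit random variables, after which the asserted weak convergence collapses to a routine dominated-convergence argument. The entire content lies in correctly matching up the two disintegrations; the limit itself is the trivial fact that $G(u)+\bzeta \to G(u)$.

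First I would identify the left-hand joint measure. By \Cref{lem:citethis} and the Radon--Nikodym formula \eqref{main-problem-RN}, the posterior $\mu^\by_\beta$ is the conditional law of $u$ given $\by$ in the model $\by = G(u)+\bzeta$ with $u\sim\mu$, $\bzeta\sim\pi_\beta$ independent, while the $\by$-marginal of this model is exactly $G_\sharp\mu\ast\pi_\beta$ (a short computation matches the convolution density $(2\pi\beta^2)^{-M/2}\omega_\beta(\by)$). Consequently the product $\mu^\by_\beta(\dd u)\,G_\sharp\mu\ast\pi_\beta(\dd\by)$ is precisely the joint law of $(u,\by)$, i.e. the pushforward of $\mu\otimes\pi_\beta$ under the measurable map $(u,\bzeta)\mapsto(u,G(u)+\bzeta)$. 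By Fubini, for every bounded measurable $f$,
\begin{equation*}
  \int_\mY\int_\mX f(u,\by)\,\mu^\by_\beta(\dd u)\,G_\sharp\mu\ast\pi_\beta(\dd\by)
  = \int_\mX\int_{\R^M} f\bigl(u,G(u)+\bzeta\bigr)\,\pi_\beta(\dd\bzeta)\,\mu(\dd u).
\end{equation*}

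Second I would identify the right-hand limit measure. Since $G=F\circ\bphi$ is continuous, hence $\mu$-measurable, \Cref{p:citethis} supplies the disintegration of $\mu$ along $G$, so that $\mu^\by_0(\dd u)\,G_\sharp\mu(\dd\by)$ is exactly the joint law of $(u,G(u))$ with $u\sim\mu$; that is, $\int_\mY\int_\mX f(u,\by)\,\mu^\by_0(\dd u)\,G_\sharp\mu(\dd\by) = \int_\mX f(u,G(u))\,\mu(\dd u)$ for bounded measurable $f$. With both identifications in hand, the theorem reduces to showing, for every $f\in C_b(\mX\times\mY)$,
\begin{equation*}
  \lim_{\beta\to 0}\int_\mX\int_{\R^M} f\bigl(u,G(u)+\bzeta\bigr)\,\pi_\beta(\dd\bzeta)\,\mu(\dd u)
  = \int_\mX f\bigl(u,G(u)\bigr)\,\mu(\dd u).
\end{equation*}

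Third I would prove this convergence by applying dominated convergence twice. Substituting $\bzeta=\beta\eta$ with $\eta\sim N(0,I)$, the inner integral becomes $\int_{\R^M} f(u,G(u)+\beta\eta)\,N(0,I)(\dd\eta)$; for each fixed $u$ the integrand converges pointwise in $\eta$ to $f(u,G(u))$ by continuity of $f$ and is dominated by $\|f\|_\infty$, so the inner integral converges to $f(u,G(u))$ for every $u$. Because these inner integrals are uniformly bounded by $\|f\|_\infty$ and $\mu$ is a probability measure, a second application of dominated convergence over $u$ delivers the displayed limit. I do not expect a genuine obstacle: the only point needing care is the well-posedness and measurability of the conditional $\mu^\by_0$, which is furnished by \Cref{lem:citethis} and \Cref{p:citethis}, and the deliberate choice of the factorization $\PP(\dd u\mid\by)\PP(\dd\by)$ rather than $\PP(\dd\by\mid u)\PP(\dd u)$, which avoids the Dirac mass that would otherwise appear at $\beta=0$.
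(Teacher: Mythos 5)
Your proposal is correct and follows essentially the same route as the paper: both identify the left-hand joint measure as the law of $(u, G(u)+\bzeta)$ with $u\sim\mu$, $\bzeta\sim\pi_\beta$ independent (the paper writes this as $(Id\times G)_\sharp\mu\ast\pi_\beta'$ with $\pi_\beta':=\delta_0\times N(0,\beta^2 I)$) and the right-hand measure as the law of $(u,G(u))$, i.e. $(Id\times G)_\sharp\mu$. The only difference is that the paper concludes in one line by noting $\pi_\beta'$ converges weakly to a Dirac at the origin, whereas you spell out that same final step explicitly via the substitution $\bzeta=\beta\eta$ and two applications of dominated convergence.
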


\begin{proof}
It will be helpful to extend $\pi_\beta$ to a measure on $\mX \times \R^M$
by defining $\pi_\beta':=\delta_0 \times N(0,\beta^2 I).$ With this
notation we note that
$$\mu^\by_\beta(\dd u)
    G_\sharp \mu \ast \pi_\beta( \dd \by)=( Id \times G)_\sharp \mu \ast\pi_\beta'(\dd u, \dd \by),$$
    and that
$$\mu^\by_0(\dd u) G_\sharp \mu (\dd \by)=(Id \times G)_\sharp \mu(\dd u, \dd \by).$$
The desired result thus reduces to proving that,
  $\forall f \in C_b(\mX \times \mY),$
  \begin{equation*}
    \lim_{\beta \to 0} \int_{\mY}  \int_\mX  f(u, \by)  
    ( Id \times G)_\sharp \mu \ast\pi_\beta'(\dd u, \dd \by)
    = \int_{\mY} \int_{\mX} f(u, \by)   
(Id \times G)_\sharp \mu(\dd u, \dd \by).
  \end{equation*}
  Noting that $\pi_\beta'$ converges weakly to a Dirac at the origin in
  $\mX \times \R^M$ as $\beta \to 0$ gives the desired result.
\end{proof}

\begin{remark}
    We note that the above result can be interpreted as an ``almost'' weak convergence result 
    for the posterior measures $\mu^\by_\beta$. More precisely, take $f(u, \by) = g(u)h(\by)$
    where $g \in C_b(\mX)$ and $h \in C_b(\mX)$ is a continuous approximation to 
    $\frac{1}{\mu(B_\epsilon(\by'))} \mbf{1}_{B_\epsilon(\by')}$ for some fixed $\by' \in \R^M$
    and $\epsilon >0$. Then 
    \Cref{prop:convergence-of-conditionals-and-pushforwards} tells us that 
    the expectation of $g$ with respect to $\mu^{\by'}_\beta$ converges to the 
    conditional expectation with respect to $\mu^{\by'}_0$ so long as we 
    average $\by$  in a ball with arbitrarily small but positive  radius $\epsilon$ around $\by'$.
\end{remark}

\subsection{Finite-Dimensional Representation of Conditional and Posterior Measures}\label{subsec:conditional-decomposition}

The finite-dimensional representation of the conditionals is
analogous to the family of  representer theorems for kernel methods \cite[Sec.~4.2]{smola1998learning}, generalized to the probabilistic
setting, and is stated as 
\Cref{prop:conditional-measure-decomposition} below. 
To understand this proposition, we first recall a classic lemma 
pertaining to conditioning Gaussian measures on direct sums of Hilbert spaces,
and a corollary thereof.

Let 
$\mX = \mX_1 \oplus \mX_2$ where $\mX_1, \mX_2$ are separable Hilbert spaces
and let $\mu$ be a Gaussian measure on $\mX$  and
 $\Pi_i: \mX \to \mX_i$ denote the natural projection  onto $\mX_i$.  
Then by \cite[Lem.~4.3]{stuart2010inverse} and \cite{owhadi2015conditioning} (see also \cite[Chap.~17.8]{owhadi2019operator})
we have that the conditional measure of $\mu$ with respect  
to the maps $\Pi_i$ is also Gaussian and can be characterized explicitly: 


\begin{lemma}\label{prop:conditional-Gaussian-direct-sum}
  Let $\mu = N(m, \mK) \in \PP(\mX)$ where $\mX = \mX_1 \oplus \mX_2$ as above.
  Write $m=(m_1, m_2)$ for the mean and let $\mK$ be the positive definite covariance operator
  and define $\mK_{ij} = \Pi_i \mK \Pi^\ast_j$. Write $\mu^{x_1}$ for the system of conditional
  measures of $\mu$ with respect to $\Pi_1$. 
  Then for $(\Pi_1)_\sharp \mu$-a.e. $x_1 \in \mX_1$
  it holds that  $\mu^{x_1} = \delta_{x_1} \otimes N(m^{x_1}, \mK_{2|1})$
where $N(m^{x_1}, \mK_{2|1})$ is a Gaussian measure on $\mX_2$ with
mean   $m^{x_1} = m_2 + \mK_{21} \mK_{11}^{-1}( x_1 - m_1)$ and covariance operator $\mK_{2|1} = \mK_{22} - \mK_{21} \mK_{11}^{-1} \mK_{12}.$
\end{lemma}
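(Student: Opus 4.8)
The plan is to prove the claim by constructing an explicit decoupling of $\xi_2$ into a component that is an affine function of $\xi_1$ and an independent Gaussian remainder, exploiting the fact that, for jointly Gaussian variables, vanishing cross-covariance is equivalent to independence. Writing $\xi \sim \mu$ with $\xi_i := \Pi_i \xi$, I would introduce the residual
\begin{equation*}
  \eta := \xi_2 - m_2 - \mK_{21}\mK_{11}^{-1}(\xi_1 - m_1),
\end{equation*}
so that $\xi_2 = m_2 + \mK_{21}\mK_{11}^{-1}(\xi_1 - m_1) + \eta$ by construction. Since $(\xi_1, \eta)$ is the image of $\xi$ under a measurable affine map, the pair is jointly Gaussian, and it suffices to show that $\eta$ is independent of $\xi_1$ and to identify its law.

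The algebraic heart of the argument is a short Schur-complement computation. First I would check that the cross-covariance operator of $\eta$ and $\xi_1$ vanishes, since it equals $\mK_{21} - \mK_{21}\mK_{11}^{-1}\mK_{11} = 0$; by joint Gaussianity $\eta$ and $\xi_1$ are then independent. Next, using $\mK_{12} = \mK_{21}^\ast$ and self-adjointness of $\mK_{11}$, the covariance operator of $\eta$ simplifies to
\begin{equation*}
  \mK_{22} - 2\,\mK_{21}\mK_{11}^{-1}\mK_{12} + \mK_{21}\mK_{11}^{-1}\mK_{11}\mK_{11}^{-1}\mK_{12} = \mK_{22} - \mK_{21}\mK_{11}^{-1}\mK_{12} = \mK_{2|1},
\end{equation*}
while its mean is $0$. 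Hence $\eta \sim N(0,\mK_{2|1})$ independently of $\xi_1$.

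With this decomposition in hand I would verify that $\mu^{x_1} := \delta_{x_1}\otimes N(m^{x_1},\mK_{2|1})$, with $m^{x_1} = m_2 + \mK_{21}\mK_{11}^{-1}(x_1 - m_1)$, satisfies the three defining properties of a system of conditional measures for $\mu$ with respect to $\Pi_1$. Properties (a) and (b) are immediate, as each $\mu^{x_1}$ is a Borel probability measure and $x_1 \mapsto m^{x_1}$ is affine, hence measurable. For the disintegration identity (c), I would substitute $\xi_2 = m^{\xi_1} + \eta$ and use the independence of $\eta$ from $\xi_1$ to factor the joint law, matching $\mu(A \cap \Pi_1^{-1}(E)) = \int_E \mu^{x_1}(A)\,(\Pi_1)_\sharp\mu(\dd x_1)$; essential uniqueness of the resulting system then follows from \Cref{p:citethis}.

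The main obstacle is purely functional-analytic and concerns the meaning of $\mK_{11}^{-1}$ when $\mX_1$ is infinite-dimensional, where $\mK_{11}$ is compact and not boundedly invertible. I would resolve this by reading $\mK_{21}\mK_{11}^{-1}$ as a densely defined operator factoring through $\mK_{11}^{1/2}$: positivity of the joint covariance yields a contraction $R$ with $\mK_{12} = \mK_{11}^{1/2} R\, \mK_{22}^{1/2}$, so that $\mK_{21}\mK_{11}^{-1}\mK_{12} = \mK_{22}^{1/2} R^\ast R\, \mK_{22}^{1/2}$ is bounded and $\mK_{2|1}$ is a bona fide covariance operator, while $\mK_{21}\mK_{11}^{-1}(\xi_1 - m_1)$ is interpreted as the measurable linear transformation $\mK_{22}^{1/2}R^\ast\bigl[\mK_{11}^{-1/2}(\xi_1 - m_1)\bigr]$, defined for $(\Pi_1)_\sharp\mu$-a.e. $x_1$. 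In the finite-dimensional case relevant to the sequel, where the conditioned block $\mX_1$ is spanned by finitely many observation functionals, this subtlety disappears: $\mK_{11}^{-1}$ is an ordinary matrix inverse and the computation above is literal. This is precisely the content we borrow from \cite[Lem.~4.3]{stuart2010inverse} and \cite{owhadi2015conditioning}.
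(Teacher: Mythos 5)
Your proposal is correct. Note, however, that the paper does not actually prove this lemma: it is stated as a known result, quoted directly from \cite[Lem.~4.3]{stuart2010inverse} and \cite{owhadi2015conditioning}, so there is no internal proof to compare against. What you have written is a self-contained version of the standard argument that underlies those references: the decoupling $\xi_2 = m_2 + \mK_{21}\mK_{11}^{-1}(\xi_1 - m_1) + \eta$, the Schur-complement computation showing $\eta \sim N(0,\mK_{2|1})$ with vanishing cross-covariance (hence independence, by joint Gaussianity), and the verification of the three disintegration properties via Fubini. Your closing paragraph is the genuinely important part: in infinite dimensions $\mK_{11}^{-1}$ is unbounded, and the factorization $\mK_{12} = \mK_{11}^{1/2} R\, \mK_{22}^{1/2}$ through a contraction $R$ (Baker's theorem on cross-covariance operators) together with the measurable linear extension of $\mK_{11}^{-1/2}$ is precisely the machinery that \cite{owhadi2015conditioning} develops to make the formula rigorous; without it the "proof" would only be valid in finite dimensions. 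One small inconsistency to fix: when verifying property (b) you assert that $x_1 \mapsto m^{x_1}$ is "affine, hence measurable," but in the infinite-dimensional reading this map is only a $(\Pi_1)_\sharp\mu$-a.e.\ defined measurable affine map (the extension is not continuous), so the measurability of $x_1 \mapsto \mu^{x_1}(A)$ should be justified by the measurability of that extension rather than by continuity of an affine map. Since the paper only ever applies the lemma through \Cref{lem:linear-Gaussian-conditional}, where the conditioned block is $\R^N$ and $\Theta$ is an invertible matrix, the finite-dimensional version of your computation is all that is actually needed downstream.
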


The following corollary may be deduced by applying 
Lemma~\ref{prop:conditional-Gaussian-direct-sum} to the measure
  $\mu \otimes \bphi_\sharp \mu$  on the product space $\mX \times \R^N$,
using the fact that $\bphi_\sharp \mu = N( 0, \Theta)$ and that the tensor 
product of two Gaussian measures is also Gaussian:

\begin{corollary}\label{lem:linear-Gaussian-conditional}
Suppose $\mu = N(0, \mK)$ with $\mK$ a trace-class covariance operator on $\mX$.  Consider the map $\bphi = (\phi_1, \dots, \phi_N) \in (\mX^\ast)^{N}$ and
  define the vector $\btheta=\{\theta_i\}_{i=1}^N \in \mX^N$ and the symmetric
  matrix $\Theta=\{\Theta_{ij}\}_{i,j=1}^N \in \R^{N \times N}$ with entries
  \begin{equation}\label{def:theta-vector-and-matrix}
  \theta_i := \mK \phi_i^\ast, \qquad \text{and} \qquad 
  \Theta_{ij} := \phi_i ( \mK \phi_j^\ast).
\end{equation}
Consider the system of conditional measures $\mu^\bz \equiv \mu\bigl(\dd u \mid\bphi(u)=\bz\bigr)$. If $\Theta$ is invertible then 
$ \mu^\bz=N(u^\bz, \mK^\bphi)$ where  
\begin{subequations}\label{def:u-dagger-and-K-phi}
\begin{align}
  u^\bz &=\btheta^T \Theta^{-1} \bz:= \sum_{i,j=1}^N (\Theta^{-1})_{ij} \theta_i z_j, \quad\\
 \mK^\bphi &= \mK-\btheta^T \Theta^{-1}\btheta^\ast:= \mK -  \sum_{i,j=1}^N (\Theta^{-1})_{ij} \theta_i \theta_j^\ast.
  \end{align}
\end{subequations}
\end{corollary}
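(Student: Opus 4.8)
The plan is to obtain $\mu^\bz$ as a marginal of a conditional Gaussian on a direct-sum space, reducing everything to the explicit formulas of \Cref{prop:conditional-Gaussian-direct-sum}. First I would introduce the joint law $\nu := \Law(\bphi(u), u)$ of the pair on the product space $\R^N \oplus \mX$, where $u \sim \mu$. Because the map $u \mapsto (\bphi(u), u)$ is bounded and linear and $\mu$ is Gaussian, $\nu$ is a centered Gaussian measure on $\R^N \oplus \mX$ whose two marginals are $\bphi_\sharp \mu = N(0,\Theta)$ and $\mu = N(0,\mK)$. I would then organize the direct sum so that $\mX_1 = \R^N$ is the coordinate to be conditioned on and $\mX_2 = \mX$ is the coordinate whose conditional law is sought, matching the roles of $\Pi_1,\Pi_2$ in \Cref{prop:conditional-Gaussian-direct-sum}.

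The second step is to compute the block covariance of $\nu$. Writing $\phi_j(u) = \langle u, \phi_j^\ast\rangle_\mX$ and using that $\mK$ is the covariance operator of $\mu$, short second-moment computations give $\mK_{11} = \Theta$ with $\Theta_{ij} = \EE[\phi_i(u)\phi_j(u)] = \phi_i(\mK\phi_j^\ast)$, $\mK_{22} = \mK$, and the cross block $\mK_{21}: \R^N \to \mX$ acting by $\mK_{21} e_j = \EE[u\,\phi_j(u)] = \mK\phi_j^\ast = \theta_j$; equivalently $\mK_{21} = \btheta$ and $\mK_{12} = \btheta^\ast$. The hypothesis that $\Theta$ is invertible is precisely what is needed to form $\mK_{11}^{-1}$ in the lemma.

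Applying \Cref{prop:conditional-Gaussian-direct-sum} with $m = 0$ then yields, for $(\Pi_1)_\sharp \nu$-a.e. $\bz$, the conditional measure $\delta_\bz \otimes N(m^\bz, \mK_{2|1})$ on $\R^N \oplus \mX$ with $m^\bz = \mK_{21}\Theta^{-1}\bz = \btheta\Theta^{-1}\bz$ and $\mK_{2|1} = \mK - \btheta\Theta^{-1}\btheta^\ast$. Unwinding $\btheta\Theta^{-1}(\cdot)$ in coordinates and using the symmetry $\Theta = \Theta^T$ reproduces exactly $u^\bz$ and $\mK^\bphi$ as in \eqref{def:u-dagger-and-K-phi}. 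It remains to discard the Dirac factor: projecting the conditional system onto $\mX$ and invoking the essential uniqueness in \Cref{p:citethis} identifies $N(m^\bz, \mK_{2|1})$ with the system of conditional measures $\mu(du \mid \bphi(u) = \bz)$, since conditioning the joint law $\nu$ on its $\R^N$-coordinate coincides with conditioning $\mu$ on $\bphi$. I expect the main obstacle to be the operator bookkeeping in matching $\btheta\Theta^{-1}\btheta^\ast$ to the sum $\sum_{i,j}(\Theta^{-1})_{ij}\theta_i\theta_j^\ast$, together with the clean measure-theoretic identification that lets one drop the $\delta_\bz$ factor.
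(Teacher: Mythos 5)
Your proposal is correct and takes essentially the same route as the paper: the paper likewise deduces the corollary by applying \Cref{prop:conditional-Gaussian-direct-sum} to the joint Gaussian law of $(u,\bphi(u))$ on the product space $\mX \times \R^N$ (written there, somewhat loosely, as $\mu \otimes \bphi_\sharp \mu$), with exactly the block-covariance identifications $\mK_{11}=\Theta$, $\mK_{22}=\mK$, $\mK_{21}e_j=\theta_j$ that you compute. If anything your write-up is more careful than the paper's one-sentence deduction, since you make explicit both the cross-covariance calculation and the final measure-theoretic step of projecting away the $\delta_\bz$ factor via the essential uniqueness of \Cref{p:citethis}.
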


\begin{remark}\label{rem:Gaussian-random-mean-shift}
  We often consider the vector of functions $\bchi := \Theta^{-1} \btheta \in \mX^N$; the entries $\vphi_i$ of $\bchi$ are referred to as the {\it Gamblets} in the parlance of \cite{owhadi2019operator}.
We can then write $u^\bz = \bchi^T \bz$ and
  refer to $\bchi^T : \R^N \to \mX$ as the Gamblet reconstruction map.
  In the following it is useful to define $\mu^\bphi:=N(0,\mK^\bphi)$
  and $\eta = N( 0, \Theta)$, noting that the latter is the distribution
  of $\phi_\sharp \mu.$
  Now notice that the measure $\mu$ can be reconstructed as the convolution
  $\mu=\mu^\bphi \ast \bchi^T_\sharp \eta.$ Crucial to this fact is that the $u^\bz$
  depends on $\bz$ whereas $\mK^\bphi$ does not, it only depends on the linear
  map $\bphi$ and  not the vector $\bz$, and that $\bz \sim \eta$ under $\mu.$
\end{remark}
Building on this remark we have
the following useful factorization of the
conditional $\mu^\by_0$ which is one of our main theoretical contributions.

 \begin{theorem}\label{prop:conditional-measure-decomposition}
   Suppose \Cref{assumption:F-assumptions} holds and that  \Cref{lem:linear-Gaussian-conditional} is  satisfied. Then $\mu = \mu^\bphi \ast \bchi^T_\sharp \eta$ and
   $\mu^\by_0 = \mu^\bphi \ast \bchi^T_\sharp \eta^\by_0$, where
   $\eta^\by_0:= \eta( \dd \bz \mid F(\bz) = \by)$ is the system of  conditionals of
   $\eta = N( 0, \Theta)$ with respect to the map $F$.
 \end{theorem}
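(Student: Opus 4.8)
The plan is to treat the first identity $\mu = \mu^\bphi \ast \bchi^T_\sharp \eta$ as already established in \Cref{rem:Gaussian-random-mean-shift}, so that the real content is the factorization of the conditional $\mu^\by_0$. The guiding idea is that, because $G = F \circ \bphi$ factors through the finite-dimensional variable $\bz = \bphi(u)$, conditioning $\mu$ on $G(u) = \by$ can be carried out in two stages: first condition $\mu$ on the linear observation $\bphi(u) = \bz$, which is handled exactly by \Cref{lem:linear-Gaussian-conditional}, and then condition the law $\eta = \bphi_\sharp \mu$ of the intermediate variable on $F(\bz) = \by$. This is a tower, or composition, property of systems of conditional measures, and the bulk of the work is to make it rigorous.

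Concretely, I would first commit to the explicit, everywhere-defined version $\mu^\bz = N(\bchi^T \bz, \mK^\bphi)$ supplied by \Cref{lem:linear-Gaussian-conditional}; crucially this version is defined for every $\bz \in \R^N$ and depends weakly continuously on $\bz$, so for each Borel set $A$ the map $\bz \mapsto \mu^\bz(A)$ is Borel measurable with no $\eta$-almost-everywhere ambiguity. I would then define the candidate measure
\[
  \tilde\mu^\by(A) := \int_{\R^N} \mu^\bz(A)\, \eta^\by_0(\dd \bz), \qquad A \in \mB(\mX),
\]
and verify that $(A,\by) \mapsto \tilde\mu^\by(A)$ is a system of conditional measures for $\mu$ with respect to $G$, in the sense of the definition of a system of conditional measures given above. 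Properties (a) and (b) are immediate, since each $\tilde\mu^\by$ is a probability measure as a mixture of the probability measures $\mu^\bz$, and $\by \mapsto \tilde\mu^\by(A)$ is measurable. For property (c) I would use $G^{-1}(E) = \bphi^{-1}\bigl(F^{-1}(E)\bigr)$ together with the disintegration of $\mu$ along $\bphi$ to write $\mu\bigl(A \cap G^{-1}(E)\bigr) = \int_{F^{-1}(E)} \mu^\bz(A)\, \eta(\dd \bz)$, and then apply the disintegration of $\eta$ along $F$. Since $\eta^\by_0$ concentrates on $F^{-1}(\by)$ by \Cref{p:citethis}(c), the indicator $\mbf{1}_E(F(\bz))$ collapses to $\mbf{1}_E(\by)$ under $\eta^\by_0$, yielding $\mu\bigl(A \cap G^{-1}(E)\bigr) = \int_E \tilde\mu^\by(A)\, G_\sharp \mu(\dd \by)$, where I use that $F_\sharp \eta = G_\sharp \mu$. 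Essential uniqueness of conditional measures, \Cref{p:citethis}(b), then forces $\tilde\mu^\by = \mu^\by_0$ for $G_\sharp \mu$-a.e.\ $\by$.

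It remains to recast $\tilde\mu^\by$ in convolution form. Because the conditional covariance $\mK^\bphi$ is independent of $\bz$, each slice factors as $\mu^\bz = \delta_{\bchi^T \bz} \ast \mu^\bphi$. Substituting this and using bilinearity of the convolution together with Fubini, I would pull the fixed Gaussian factor $\mu^\bphi$ out of the $\bz$-integral, leaving $\int_{\R^N} \delta_{\bchi^T \bz}\, \eta^\by_0(\dd \bz) = \bchi^T_\sharp \eta^\by_0$ by the definition of pushforward. This gives $\mu^\by_0 = \tilde\mu^\by = \mu^\bphi \ast \bchi^T_\sharp \eta^\by_0$, as claimed.

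I expect the main obstacle to be the rigorous composition of the two disintegrations, and in particular the fact that $\eta^\by_0$ typically lives on the $\eta$-null set $F^{-1}(\by)$, so that integrating the only-$\eta$-a.e.-defined map $\bz \mapsto \mu^\bz(A)$ against $\eta^\by_0$ would be ill-posed for a generic version. The resolution, which should be stated carefully, is precisely the choice made above of the canonical, everywhere-defined and weakly continuous representative of $\mu^\bz$ from \Cref{lem:linear-Gaussian-conditional}; with that choice the inner integral is unambiguous and the tower identity goes through. A secondary point to check is that $G$ is $\mu$-measurable, which follows from the continuity of $F$ under \Cref{assumption:F-assumptions}, so that $\mu^\by_0$ is well-defined via \Cref{p:citethis}.
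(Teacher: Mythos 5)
Your proposal is correct and follows essentially the same route as the paper's own proof: a two-stage conditioning argument that first disintegrates $\mu$ along the linear map $\bphi$ via \Cref{lem:linear-Gaussian-conditional}, then conditions $\eta = \bphi_\sharp\mu$ on $F(\bz)=\by$, with the convolution structure coming from the fact that $\mK^\bphi$ does not depend on $\bz$. The paper states this argument in three informal sentences, whereas you additionally verify the disintegration axioms for the candidate measure and invoke essential uniqueness from \Cref{p:citethis}, together with the choice of the everywhere-defined weakly continuous version of $\mu^\bz$ — precisely the measure-theoretic details the paper's proof leaves implicit.
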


 \begin{proof} Let $u \sim \mu.$
 Conditional on $\bphi(u)=\bz$ the distribution of $u$ is $N(u^\bz,\mK^\bphi),$
 by \Cref{lem:linear-Gaussian-conditional}. In the absence of observations,
 $\bz \sim \eta$ and then $u^\bz \sim \bchi^T_\sharp \eta.$ When conditioned on $F(\bz)=\by$, however, we obtain
 $\bz \sim \eta^\by_0$ and $u^\bz \sim \bchi^T_\sharp \eta^\by_0.$ Because $\mu^\bphi$ is independent of $\bz$ the two results follow
 by the properties of convolutions of measures.
 \end{proof}

We may now generalize
\Cref{prop:conditional-measure-decomposition}
to the setting $\beta  > 0$; we show that the posterior measures in \eqref{main-problem-RN} can
be decomposed as the convolution of a finite-dimensional (in general) non-Gaussian measure with an independent centered Gaussian measure. 
The result may also be viewed as a generalization of  
\Cref{lem:linear-Gaussian-conditional}
to nonlinear measurements. 
This theorem is the second major theoretical contribution of our work.

\begin{theorem}\label{prop:posterior-decomposition}
Suppose \Cref{assumption:F-assumptions} holds and that \Cref{lem:linear-Gaussian-conditional}
is satisfied.
  Let $\mu^\by_\beta$ be as in \eqref{main-problem-RN} and let $\Lambda$ denote the
Lebesgue measure. 
  Then $\mu^\by_\beta = \mu^\bphi \ast \bchi^T_\sharp \eta^\by_\beta $
   where 
  $\eta^\by_\beta \in \PP(\R^N)$ has Lebesgue density 
  \begin{equation*}
    \begin{aligned}
    \frac{\dd \eta^\by_\beta}{\dd \Lambda}(\bz) & =
    \frac{1}{\varpi_\beta(\by)} \exp \left( - \frac{1}{2\beta^2} | F(\bz) - \by |^2  - \frac{1}{2} \bz^T \Theta^{-1} \bz \right), \\
    \varpi_\beta(\by) & := \int_{\R^N} \exp \left( - \frac{1}{2\beta^2} | F(\bz) - \by |^2  - \frac{1}{2} \bz^T \Theta^{-1} \bz \right)  \Lambda( \dd \bz).
  \end{aligned}
  \end{equation*}
\end{theorem}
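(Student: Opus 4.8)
The plan is to reduce everything to a finite-dimensional reweighting of $\eta = N(0,\Theta)$, exploiting that the likelihood in \eqref{main-problem-RN} depends on $u$ only through $\bphi(u)$, combined with the convolution identity $\mu = \mu^\bphi \ast \bchi^T_\sharp \eta$ recorded in \Cref{rem:Gaussian-random-mean-shift}. Concretely, I would represent $u \sim \mu$ as $u = w + \bchi^T\bz$ with independent $w \sim \mu^\bphi$ and $\bz \sim \eta$, and test $\mu^\by_\beta$ against an arbitrary $h \in C_b(\mX)$. The whole argument runs in parallel to the proof of \Cref{prop:conditional-measure-decomposition}, but the $\beta>0$ case is in fact cleaner, since we reweight by a bounded density rather than conditioning on the $\mu$-null event $\{F(\bphi(u)) = \by\}$.

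First I would record two identities that together give $\bphi(u) = \bz$ under this decomposition: the deterministic identity $\bphi(\bchi^T\bz) = \bz$, which follows from $\bchi = \Theta^{-1}\btheta$ and $\phi_k(\theta_j) = \Theta_{kj}$ so that $\phi_k(\bchi^T\bz) = \sum_{i,j}(\Theta^{-1})_{ij}\Theta_{kj} z_i = z_k$; and the almost-sure identity $\bphi(w) = 0$ for $w \sim \mu^\bphi$, which follows from the degeneracy of the $\bphi$-marginal of $\mu^\bphi$, namely $\phi_i(\mK^\bphi\phi_j^\ast) = \Theta_{ij} - (\Theta\,\Theta^{-1}\,\Theta)_{ij} = 0$. (Alternatively, $\bphi(u)=\bz$ a.s. is immediate from \Cref{p:citethis}(c) together with \Cref{lem:linear-Gaussian-conditional}, since $\mu^\bz$ concentrates on $\bphi^{-1}(\bz)$.)

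Writing the likelihood as $g\circ\bphi$ with $g(\bz) := \exp\!\left(-\tfrac{1}{2\beta^2}|F(\bz)-\by|^2\right)$ and applying the convolution representation of $\mu$, I would then compute
\[
\int_\mX h(u)\,\mu^\by_\beta(\dd u) = \frac{1}{\omega_\beta(\by)}\int_{\R^N}\!\int_\mX h(w+\bchi^T\bz)\, g\bigl(\bphi(w+\bchi^T\bz)\bigr)\,\mu^\bphi(\dd w)\,\eta(\dd\bz).
\]
Substituting $\bphi(w+\bchi^T\bz) = \bz$ pulls $g(\bz)$ out of the inner integral, so that $g(\bz)\,\eta(\dd\bz) = \omega_\beta(\by)\,\eta^\by_\beta(\dd\bz)$ upon identifying $\eta^\by_\beta$ as the reweighting of $\eta$ by $g/\omega_\beta(\by)$. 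Multiplying the Gaussian Lebesgue density of $\eta = N(0,\Theta)$ by $g$ and absorbing the factor $(2\pi)^{N/2}\sqrt{\det\Theta}$ into the normalizer produces exactly the stated density, with $\varpi_\beta(\by) = \omega_\beta(\by)(2\pi)^{N/2}\sqrt{\det\Theta}$; its finiteness and positivity follow from \Cref{lem:citethis} (equivalently from $0 \le g \le 1$ together with the local boundedness of $F$ in \Cref{assumption:F-assumptions}). What remains is the identity $\int_\mX h\,\dd\mu^\by_\beta = \int_{\R^N}\int_\mX h(w+\bchi^T\bz)\,\mu^\bphi(\dd w)\,\eta^\by_\beta(\dd\bz)$, which is precisely $\int_\mX h\,\dd\bigl(\mu^\bphi \ast \bchi^T_\sharp\eta^\by_\beta\bigr)$; since $h$ is arbitrary the two measures coincide.

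I expect the only delicate point to be the bookkeeping that justifies pushing the reweighting entirely onto the $\eta$-factor: confirming that the likelihood is genuinely $\sigma(\bphi)$-measurable and that the convolution representation of $\mu$ may be used coordinatewise in $(w,\bz)$. The crux is therefore the verification that $\bphi(u) = \bz$ under the decomposition $u = w + \bchi^T\bz$; once this is in hand, everything else is a routine normalization computation.
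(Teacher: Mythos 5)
Your proof is correct, but it takes a genuinely different route from the paper's. You argue by direct computation: starting from the Radon--Nikodym formula \eqref{main-problem-RN} and the convolution representation $\mu = \mu^\bphi \ast \bchi^T_\sharp \eta$ of \Cref{rem:Gaussian-random-mean-shift}, you verify the two identities $\bphi(\bchi^T\bz) = \bz$ and $\bphi_\sharp \mu^\bphi = \delta_0$ (both computations are right), conclude that the likelihood --- which factors through $\bphi$ --- attaches entirely to the $\eta$-component of the decomposition, and finish with Fubini and a test-function argument; the normalization bookkeeping $\varpi_\beta(\by) = \omega_\beta(\by)\,(2\pi)^{N/2}\sqrt{\det\Theta}$ is also correct. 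The paper instead works at the level of joint measures: it forms $\nu(\dd u, \dd \by) = \mu(\dd u)\,\pi_\beta(\dd \by + F(\bphi(u)))$ and $\tilde{\eta}(\dd\bz,\dd\by) = \bphi_\sharp\mu(\dd\bz)\,\pi_\beta(\dd\by + F(\bz))$, identifies $\mu^\by_\beta$ and $\eta^\by_\beta$ as conditionals of these under coordinate projections via Bayes' rule, factors the three-component measure $\tilde{\nu}(\dd u,\dd\bz,\dd\by)$ using the disintegration $(I\times\bphi)_\sharp\mu(\dd u,\dd\bz) = \mu^\bphi(\dd u + \bchi^T\bz)\,\bphi_\sharp\mu(\dd\bz)$ furnished by \Cref{lem:linear-Gaussian-conditional}, and then marginalizes out $\bz$ by pushing forward under $(u,\bz,\by)\mapsto(u,\by)$. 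Both arguments hinge on the same structural fact --- the prior splits as an independent sum of $\mu^\bphi$ and $\bchi^T_\sharp\eta$, with the observation functional reading off the $\bz$-coordinate --- but yours is the more elementary: given \eqref{main-problem-RN}, only Fubini is needed, and the proof isolates exactly where that structure enters. What the paper's heavier scaffolding buys is that the identity is established jointly in $\by$ against the marginal $F_\sharp(\bphi_\sharp\mu)\ast\pi_\beta(\dd\by)$ (so no ``for each fixed $\by$'' caveats about null sets are needed), it makes transparent the parallel with the $\beta=0$ case of \Cref{prop:conditional-measure-decomposition}, where there is no density with respect to the prior and your reweighting argument would not apply verbatim, and it exhibits $\eta^\by_\beta$ as itself the posterior of a finite-dimensional Bayesian inverse problem --- precisely the viewpoint exploited by the sampling algorithms of \Cref{sec:sampling}.
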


  \begin{proof}
    Recall $\pi_\beta$ from \Cref{assumption:F-assumptions} and,
  for any measure $\pi$ on a vector space,
  let $ \pi( \dd \by + \bmm )$  denote the shift of $\pi$ by a vector $\bmm$.
Consider the measure
  $\nu(\dd u, \dd \by)  := \mu( \dd u) \pi_\beta(\dd \by + F(\bphi(u)))$. By Bayes' rule
  the posterior measures $\mu^\by_\beta(\dd u) \otimes \delta_\by (\dd y)$ are precisely the conditionals of $\nu$
  with respect to the projection $\Pi: \mX \times \R^M \to \R^M$, i.e.,
  \begin{equation*}
    \nu( A \cap \Pi^{-1}(E)) = \int_E \big(\mu^\by_\beta \otimes \delta_\by\big)(A)
    \big(F_\sharp(\bphi_\sharp \mu) \ast \pi_\beta \big) (\dd \by), \qquad A \in \mB(\mX \times \R^M), E \in \mB(\R^M).
  \end{equation*}
  Further consider the measure
  $\tilde{\eta}( \dd \bz, \dd \by) := \bphi_\sharp \mu(\dd \bz) \pi_\beta( \dd \by + F(\bz))$.
  Applying Bayes' rule once again we identify $\eta^\by_\beta$ as the conditionals of
  $\tilde{\eta}$  with respect to the projection $\tilde{\Pi}: \R^N \times \R^M \to \R^M$,
  \begin{equation*}
  \begin{aligned}      
    \tilde{\eta} &( B \cap \tilde{\Pi}^{-1}(E)) \\
    & = \int_E \big(\eta^\by_\beta \otimes \delta_\by\big)(B)
    \big(F_\sharp(\bphi_\sharp \mu) \ast \pi_\beta \big) (\dd \by), \qquad B \in \mB(\R^N \times \R^M), E \in \mB(\R^M).
      \end{aligned}
  \end{equation*}
By \Cref{lem:linear-Gaussian-conditional} we have  that
  $(I \times \bphi)_\sharp \mu (\dd u, \dd \bz)  = \mu^\bphi(\dd u +   \bchi^T \bz ) \bphi_\sharp \mu(\dd \bz)$.
  Now define the measure
  $\tilde \nu := ( I \times \bphi)_\sharp \mu (\dd u, \dd \bz) \pi_\beta( \dd \by + F(\bz))
  \in \PP( \mX \times \R^N \times \R^M)$.  We then have, by the above arguments and
  \Cref{rem:Gaussian-random-mean-shift},
  \begin{equation*}
    \begin{aligned}
      \tilde \nu (\dd u, \dd \bz, \dd \by)
      & = \mu^\bphi (\dd u +  \bchi^T \bz ) \bphi_\sharp \mu(\dd \bz) \pi_\beta( \dd \by + F(\bz)) \\
      & = \mu^\bphi(\dd u +  \bchi^T \bz) \tilde{\eta}( \dd \bz, \dd \by) \\
      & = \mu^\bphi(\dd u + \bchi^T \bz)  \eta_\beta^\by (\dd \bz) 
      \big( F_\sharp(\bphi_\sharp \mu) \ast \pi_\beta \big) (\dd \by). 
\end{aligned}
\end{equation*}
Now observe that 
 $\nu = T_\sharp \tilde \nu$ 
 where $T: (u, \bz, \by) \mapsto (u, \by)$ so that we have the desired identity
 \begin{equation*}
   \nu( \dd u, \dd y) = \big( \mu^\bphi \ast \bchi^T_\sharp \eta^\by_\beta ) (\dd u)
   \big( F_\sharp(\bphi_\sharp \mu) \ast \pi \big) (\dd \by).
 \end{equation*}
  {}
\end{proof}

\section{Modes of  Posterior and Conditional Measures}\label{sec:MAP-analysis}

In this section we analyze the modes of the posteriors $\mu^\by_\beta$ (i.e., the MAP estimators)
and the conditionals $\mu^\by_0$.
\Cref{subsec:MAP-estimators} defines the mode of the posterior; subsection
\Cref{subsec:conditional-mode} defines the mode of the conditional; and
\Cref{subsec:MAP-convergence} considers the $\beta \to 0$ limit of the posterior modes.

\subsection{Modes of Measures}\label{subsec:MAP-estimators}
We recall the notion of the mode of a measure employed  
in \cite{dashti2013map}:

\begin{definition}\label{def:MAP}
  Consider a measure $\nu \in \PP(\mX)$.
  Any point $u^\dagger \in \mX$ is a \emph{mode} of $\nu$ if it satisfies
  \begin{equation*}
    \lim_{r \to 0} \frac{\nu(B_r(u^\dagger))}{ \sup_{u \in \mX} \nu(B_r(u))} = 1.
  \end{equation*}
\end{definition}

This formalizes the idea of defining the mode as the centre of
a small ball of maximal probability, in the limit of vanishing radius.
The modes of the posterior measures $\mu^\by_\beta \in \PP(\mX)$ defined in \eqref{main-problem-RN} are referred to as MAP estimators.
The next proposition follows directly from \cite[Cor.~3.10]{dashti2013map}
which allows us to characterize the MAP estimators of $\mu^\by_\beta$
via the optimization problem \eqref{MAP-optimization}.
We emphasize that local minimizers of \eqref{MAP-optimization}
may not be unique, but that a global minimizer exists provided 
that $F \circ \bphi$ is continuous
on $\mX$ \cite{dashti2013map}.

\begin{theorem}\label{General-MAP-characterization}
  Suppose $\mu = N(0, \mK)$, $\mu^\by_\beta$ is defined as in \eqref{main-problem-RN} with $\beta >0$ and
  $\by \in \R^M$, and
  the map $F: \R^N \to \R^M$ satisfies 
  \Cref{assumption:F-assumptions}. Define the
{\it Onsager-Machlup (OM)}
    functional $J^\by_\beta: \mX \to [0,\infty]$ by
     \begin{equation*}
      J^\by_\beta(u) := \left\{ 
        \begin{aligned}
          & \frac{1}{2\beta^2} | F(\bphi(u)) - \by |^2 + \frac{1}{2} \| u \|_{\mH(\mu)}^2,
          && \text{if } u \in \mH(\mu), \\ 
          & + \infty, && \text{if } u \in \mX \setminus \mH(\mu). 
      \end{aligned}
\right.
    \end{equation*}
Then a point $u^\by_\beta \in \mX$ is a MAP estimator  for $\mu^\by_\beta$, according to \Cref{def:MAP},
if and only if it is a  minimizer of $J^\by_\beta$ over $\mX.$
  \end{theorem}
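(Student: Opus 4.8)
The plan is to recognize $\mu^\by_\beta$ as a change of measure from the Gaussian reference $\mu$ of exactly the form treated in \cite{dashti2013map}, and then to verify that the associated likelihood potential satisfies the standing regularity hypotheses of that reference so that its Corollary~3.10 applies. Concretely, from the Radon--Nikodym derivative \eqref{main-problem-RN} I would read off the potential
\[
  \Phi^\by_\beta(u) := \frac{1}{2\beta^2}\,| F(\bphi(u)) - \by |^2,
\]
so that $\dd \mu^\by_\beta / \dd \mu \propto \exp(-\Phi^\by_\beta)$. The Onsager--Machlup functional associated with such a measure is $\Phi^\by_\beta(u) + \tfrac12 \| u \|^2_{\mH(\mu)}$ on $\mH(\mu)$ and $+\infty$ off it, which is precisely $J^\by_\beta$; the quadratic Cameron--Martin term is the contribution of the Gaussian reference measure $\mu$. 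The cited corollary then identifies the modes of $\mu^\by_\beta$ in the sense of \Cref{def:MAP} with the global minimizers of this functional, which is exactly the claimed equivalence.

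The substance of the argument is therefore to confirm that $\Phi^\by_\beta$ meets the hypotheses of \cite{dashti2013map}: nonnegativity (or a lower bound), continuity, local boundedness from above, and local Lipschitz continuity on all of $\mX$. Nonnegativity is immediate. For the remaining properties I would exploit that $\bphi : \mX \to \R^N$ is bounded and linear, so that $|\bphi(u)| \le \|\bphi\|\,\|u\|_\mX$ and $\bphi$ maps the ball $B_r(0) \subset \mX$ into the Euclidean ball of radius $\|\bphi\| r$. On that Euclidean ball, \Cref{assumption:F-assumptions} supplies both a uniform bound on $F$ (the local boundedness noted immediately after the assumption) and a Lipschitz constant $L(\|\bphi\| r)$. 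Composing $\bphi$ with $F$ and with the locally Lipschitz map $\bz \mapsto \tfrac{1}{2\beta^2} | \bz - \by |^2$ then yields continuity of $\Phi^\by_\beta$ on $\mX$, a uniform upper bound on each $B_r(0)$, and local Lipschitz continuity, with constants depending only on $r$, $\beta$, $\|\bphi\|$, $\by$, and $L(\cdot)$.

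With these hypotheses in hand, the equivalence between MAP estimators and minimizers of $J^\by_\beta$ follows directly from \cite[Cor.~3.10]{dashti2013map}, and existence of a global minimizer is guaranteed by continuity of $F \circ \bphi$ on $\mX$, as already observed in the surrounding discussion. I expect the only delicate point to be the faithful transfer of the \emph{local} regularity of $F$ on $\R^N$ into local regularity of $\Phi^\by_\beta$ over balls in $\mX$: one must check that boundedness and Lipschitz estimates over Euclidean balls translate, through the bounded operator $\bphi$, into the corresponding estimates over balls in the (possibly infinite-dimensional) space $\mX$. Since $\bphi$ carries bounded sets to bounded sets this is routine, but it is the precise step at which \Cref{assumption:F-assumptions} is genuinely invoked. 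All of the small-ball asymptotics underlying \Cref{def:MAP} are subsumed in the cited corollary and need not be reproved here.
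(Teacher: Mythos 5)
Your proposal is correct and takes essentially the same route as the paper's own proof: both define the potential $\Phi(u):=\frac{1}{2\beta^2}|F(\bphi(u))-\by|^2$, verify that it is bounded below, bounded above on bounded sets, and Lipschitz on bounded sets in $\mX$ (using \Cref{assumption:F-assumptions} and the boundedness of $\bphi$), and then conclude by a direct application of \cite[Cor.~3.10]{dashti2013map}. The only difference is one of exposition: you spell out how the local regularity of $F$ on Euclidean balls transfers through the bounded linear map $\bphi$ to balls in $\mX$, a step the paper asserts without detail.
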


\begin{proof}
To apply the stated corollary define $\Phi(u):=\frac{1}{2\beta^2} | F(\bphi(u)) - \by |^2.$
Notice that $\Phi$ is bounded below uniformly on $\mX$, is bounded
above on bounded sets in $\mX$ and is Lipschitz on bounded sets in $\mX.$
Then the result follows by a direct application of \cite[Cor.~3.10]{dashti2013map}.
\end{proof}

We now further characterize
 MAP estimators of $\mu^\by_\beta$  via a
 representer theorem for the minimizers of OM 
 functionals. This theorem constitutes our main result towards the 
 finite-dimensional characterization of MAP estimators.
 
\begin{theorem}\label{lem:MAP-decomposition}
  Suppose that the conditions of \Cref{General-MAP-characterization} are satisfied. 
  Then $u^\by_\beta$ is a MAP estimator for $\mu^\by_\beta$
  if $u^\by_\beta = \bchi^T \bz^\by_\beta$ and  $\bz^\by_\beta \in \R^N$  solves
  \begin{equation}
  \label{eqn: posterior modes def}
    \minimize_{\bz \in \R^N} \frac{1}{2\beta^2} | F(\bz) - \by |^2 + \frac{1}{2} \bz^T \Theta^{-1} \bz.
  \end{equation}
\end{theorem}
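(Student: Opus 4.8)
The plan is to recognize \eqref{eqn: posterior modes def} as the finite-dimensional problem produced by a representer theorem applied to the Onsager--Machlup functional, and then to invoke \Cref{General-MAP-characterization} to translate minimizers back into MAP estimators. By \Cref{General-MAP-characterization}, under the stated hypotheses a point is a MAP estimator for $\mu^\by_\beta$ precisely when it minimizes $J^\by_\beta$ over $\mX$; since $J^\by_\beta = +\infty$ off $\mH(\mu)$, it suffices to minimize over $\mH(\mu)$. The crucial structural observation is that the data-misfit term $\tfrac{1}{2\beta^2}|F(\bphi(u)) - \by|^2$ depends on $u$ only through the finite-dimensional vector $\bphi(u) \in \R^N$, so the infinite-dimensional minimization factors as a two-stage problem: first minimize the Cameron--Martin norm over the affine constraint set $\{u \in \mH(\mu) : \bphi(u) = \bz\}$ for each fixed $\bz$, then minimize the resulting function of $\bz$ over $\R^N$.

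First I would carry out the inner minimization, which is the heart of the argument. For fixed $\bz$, the candidate minimizer is $u^\bz = \bchi^T \bz$ from \Cref{lem:linear-Gaussian-conditional} and \Cref{rem:Gaussian-random-mean-shift}; one checks directly that $\bphi(u^\bz) = \bz$ using $\phi_k(\mK \phi_i^\ast) = \Theta_{ki}$ together with $\bchi = \Theta^{-1}\btheta$. To see it is norm-minimal, I would write any competitor as $u = u^\bz + w$ with $w \in \mH(\mu)$ and $\bphi(w) = 0$, and verify the orthogonality $\langle u^\bz, w\rangle_{\mH(\mu)} = 0$. This follows from the reproducing identity $\langle \mK^{1/2}\phi_i^\ast, \mK^{-1/2} w\rangle_\mX = \langle \phi_i^\ast, w\rangle_\mX = \phi_i(w) = 0$, so that $\|u\|_{\mH(\mu)}^2 = \|u^\bz\|_{\mH(\mu)}^2 + \|w\|_{\mH(\mu)}^2$; hence $u^\bz$ is the unique minimizer over the constraint set. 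A short computation using the same identity then yields the minimal value $\|u^\bz\|_{\mH(\mu)}^2 = \bz^T \Theta^{-1} \bz$.

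Substituting this minimal value back, the inner problem contributes $\tfrac{1}{2\beta^2}|F(\bz) - \by|^2 + \tfrac{1}{2}\bz^T\Theta^{-1}\bz$, so the outer minimization over $\bz \in \R^N$ is exactly \eqref{eqn: posterior modes def}; note that $\bphi$ maps $\mH(\mu)$ onto all of $\R^N$ since $\Theta$ is invertible, so no values of $\bz$ are excluded from this reduction. Consequently, if $\bz^\by_\beta$ solves \eqref{eqn: posterior modes def}, then $u^\by_\beta = \bchi^T \bz^\by_\beta$ attains the infimum of $J^\by_\beta$ over $\mH(\mu)$, and by \Cref{General-MAP-characterization} it is a MAP estimator. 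I expect the main obstacle to be the inner minimization step --- establishing rigorously that $u^\bz$ is the Cameron--Martin-norm-minimal interpolant and computing its norm --- since this requires care with the densely-defined operator $\mK^{-1/2}$ on $\mH(\mu)$ and with the Riesz representers $\phi_i^\ast$; the remaining reduction to \eqref{eqn: posterior modes def} is then routine.
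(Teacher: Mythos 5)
Your proposal is correct, and its overall skeleton matches the paper's: invoke \Cref{General-MAP-characterization} to identify MAP estimators with minimizers of the Onsager--Machlup functional, then reduce that infinite-dimensional problem to \eqref{eqn: posterior modes def} via a representer theorem. The difference is in how that second step is handled: the paper simply cites the representer theorem \cite[Prop.~2.3]{chen2021solving} as a black box, whereas you prove it from scratch --- splitting the minimization into an inner problem over $\{u \in \mH(\mu) : \bphi(u) = \bz\}$ and an outer problem over $\bz \in \R^N$, solving the inner one exactly by the Cameron--Martin orthogonality $\|u^\bz + w\|_{\mH(\mu)}^2 = \|u^\bz\|_{\mH(\mu)}^2 + \|w\|_{\mH(\mu)}^2$ for $w \in \ker \bphi$, and computing $\|u^\bz\|_{\mH(\mu)}^2 = \bz^T \Theta^{-1}\bz$ via the reproducing identity. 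Your computations check out: $\bphi(\bchi^T\bz) = \bz$ follows from $\phi_k(\theta_i) = \Theta_{ki}$, the orthogonality from $\langle \mK^{1/2}\phi_i^\ast, \mK^{-1/2}w\rangle_\mX = \phi_i(w)$, and surjectivity of $\bphi$ on $\mH(\mu)$ from invertibility of $\Theta$ (which both proofs need implicitly, since \eqref{eqn: posterior modes def} involves $\Theta^{-1}$; note this hypothesis comes from \Cref{lem:linear-Gaussian-conditional} rather than \Cref{General-MAP-characterization} as literally stated). What each approach buys: the paper's proof is two lines and defers to the kernel-methods literature; yours is self-contained, makes the finite-dimensionality mechanism transparent, and shows exactly where the Gamblet map $\bchi^T$ of \Cref{rem:Gaussian-random-mean-shift} enters as the norm-minimal interpolant.
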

\begin{proof}
  By \Cref{General-MAP-characterization} $u^\by_\beta$ is a MAP estimator for $\mu^\by_\beta$
  if it is a minimizer of the OM functional.
  Applying  \cite[Prop.~2.3]{chen2021solving} to characterize the minimizers of the OM functional yields the desired result.
  {}
\end{proof}

\begin{remark}
Let $\beta>0.$ Note that solutions of the optimization problem \eqref{MAP-optimization} (i.e., 
minimizers of the OM functional) are necessarily in $\mH(\mu)$; samples 
from the posterior $\mu^\by_\beta$ given by \eqref{main-problem}, 
however, are almost surely 
not in $\mH(\mu)$ because the posterior is absolutely
continuous with respect to the prior $\mu$ and $\mu\bigl(\mH(\mu)\bigr)=0.$
Simply put, we need 
$\mX$ to be sufficiently regular so that $\phi_i \in \mX^\ast$ for the 
probabilistic formulation to make sense, however, the optimization problems \eqref{MAP-optimization} 
and \eqref{MAP-optimization2}
require the $\phi_i$ to be bounded 
and linear functionals on both $\mH(\mu)$
and $\mX$.

This observation has important implications in the context of the 
 GP-PDE solver of \Cref{subsec:motivation}.
In order to apply the optimization approaches 
\eqref{MAP-optimization} or \eqref{MAP-optimization2} to solving 
PDEs as in \cite{chen2021solving}, it is necessary that pointwise
evaluation of all derivatives appearing in the PDE is possible in
$\mH(\mu)$. To apply the probabilistic (Bayesian) approach
\eqref{main-problem} or \eqref{main-problem2} 
to the same problem, pointwise evaluation of all derivatives 
appearing in the PDE is needed over the support of $\mu$, i.e., the space $\mX$. Thus
the probabilistic approach places a more stringent requirement on the Gaussian prior
measure $\mu$ than does the optimization approach. 
\end{remark}

\subsection{Modes of Conditional Measures}\label{subsec:conditional-mode}
Here we define a novel notion of a mode for a conditional measure.
We develop a theorem applicable for general maps $T$ with respect to which 
conditional measures are defined and specified to the 
case $T=G$, with $G$ given by (\ref{eq:G}b), in a corollary.

\begin{definition}\label{def:conditional-mode}
 Consider  separable
 Hilbert spaces $\mX, \mY$,
 a measure $\nu \in \PP(\mX)$, and a map $T: \mX \to \mY$.
 Fix a point $y \in \supp T_\sharp \nu$.
  Then any point $u^\dagger \in T^{-1}(y)$ that satisfies
  \begin{equation*}
    \lim_{r \to 0 }  \frac{\nu( B_r(u^\dagger))}{ \sup_{u \in T^{-1}(y)} \nu(B_r(u))} = 1,
  \end{equation*}
  is  a \emph{conditional mode} of $\nu( \dd u  \mid T(u) = y)$.
\end{definition}

  The above definition of the conditional mode is a natural extension of \Cref{def:MAP} and modifies 
  that definition by restricting the feasible set of $u^\dagger$ to the subset $T^{-1}(y) \subseteq \mX$. Below we
  show that this definition leads to a natural characterization of conditional modes of Gaussian measures
  via constrained optimization problems, this is the conditional analog of \Cref{General-MAP-characterization} and constitutes one of our main theoretical contributions in the paper.

\begin{theorem}\label{prop:conditional-mode-optimization}
Let $\mX, \mY$ be separable Hilbert spaces and suppose 
$T: \mX \to \mY$ is continuous. Consider $\mu = N(0, \mK) \in \PP(\mX)$ with Cameron-Martin space $\mH(\mu).$
  Fix a
  point $y \in  T(\mH( \mu)) \cap \supp T_\sharp \mu$, assuming the 
  intersection is non-empty. Then $u^y$ is a conditional mode of $\mu( \dd u \mid T(u) = y)$
  if and only if it solves the optimization problem
  \begin{equation}\label{conditional-map-optimization-problem}
    \begin{aligned}
      \minimize_{u \in \mX} \quad \| u\|_{\mH(\mu)} \quad \st  \quad  T(u) = y.
    \end{aligned}
    \end{equation} 
    \end{theorem}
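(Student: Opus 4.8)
The plan is to read off the conditional mode directly from \Cref{def:conditional-mode} by comparing small-ball probabilities of $\mu$ over the feasible set, and to translate this into the Cameron--Martin norm via the Gaussian Onsager--Machlup asymptotics. Set $C := T^{-1}(y)$, which is closed since $T$ is continuous, let $I(u):=\tfrac12\|u\|_{\mH(\mu)}^2$ for $u\in\mH(\mu)$ and $I(u):=+\infty$ otherwise, and put $m:=\inf_{u\in C} I(u)$. Because $y\in T(\mH(\mu))$ we have $C\cap\mH(\mu)\neq\emptyset$, hence $m<\infty$, and minimizing $\|u\|_{\mH(\mu)}$ subject to $T(u)=y$ is the same as minimizing $I$ over $C$. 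The backbone is the Gaussian small-ball asymptotic (see \cite{dashti2013map}): for every $u\in\mH(\mu)$ one has $\mu(B_r(u))/\mu(B_r(0))\to e^{-I(u)}$ as $r\to0$, whereas for $u\notin\mH(\mu)$ this ratio tends to $0$; moreover $\mu(B_r(u))\le\mu(B_r(0))$ for all $u$ by Anderson's inequality. Since $\mH(\mu)\subseteq\supp\mu$, each feasible Cameron--Martin point gives $\mu(B_r(u))>0$, so the denominator in \Cref{def:conditional-mode} is positive; the hypothesis $y\in\supp T_\sharp\mu$ is what makes $y$ a genuine point of conditioning and the notion non-degenerate. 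The entire statement reduces to proving $\sup_{u\in C}\mu(B_r(u))/\mu(B_r(0))\to e^{-m}$ as $r\to0$.

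The lower bound $\liminf_{r\to0}\sup_{u\in C}\mu(B_r(u))/\mu(B_r(0))\ge e^{-m}$ is immediate: for any $h\in C\cap\mH(\mu)$ the supremum dominates $\mu(B_r(h))$, so the liminf is at least $e^{-I(h)}$, and letting $I(h)\downarrow m$ gives the claim. This already settles the forward implication. If $u^y$ is a conditional mode then $u^y\in C$ and $\mu(B_r(u^y))/\sup_{u\in C}\mu(B_r(u))\to1$; dividing through by $\mu(B_r(0))$ and using the lower bound forces $\lim_{r\to0}\mu(B_r(u^y))/\mu(B_r(0))\ge e^{-m}>0$. Hence $u^y\in\mH(\mu)$ (otherwise this limit would vanish) and $I(u^y)\le m$; since $u^y\in C$ also gives $I(u^y)\ge m$, we conclude $I(u^y)=m$, i.e.\ $u^y$ solves \eqref{conditional-map-optimization-problem}.

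The converse requires the matching upper bound $\limsup_{r\to0}\sup_{u\in C}\mu(B_r(u))/\mu(B_r(0))\le e^{-m}$, and this is the crux of the proof. The difficulty is that the pointwise Onsager--Machlup convergence is not uniform in $u$---at fixed $r$ the ratio can exceed $e^{-I(u)}$---so one cannot simply interchange $\sup$ and $\lim$. I would argue by contradiction: a violation produces $r_n\to0$ and $u_n\in C$ with $\mu(B_{r_n}(u_n))/\mu(B_{r_n}(0))\ge e^{-m+\varepsilon}$. Two ingredients then close the argument. First, such a maximizing sequence must be bounded in $\mH(\mu)$; this is the equicoercivity of the approximate functionals $F_r(u):=-\log\bigl(\mu(B_r(u))/\mu(B_r(0))\bigr)$, and since $\mH(\mu)$ embeds compactly into $\mX$ a subsequence converges in $\mX$ to some $u_*$, which lies in $C$ by closedness, with $I(u_*)\ge m$ by weak lower semicontinuity of the Cameron--Martin norm. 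Second, the $\Gamma$-$\liminf$ inequality for $F_r$ along the pair $(u_n,r_n)$ yields $\limsup_n \mu(B_{r_n}(u_n))/\mu(B_{r_n}(0))\le e^{-I(u_*)}\le e^{-m}$, contradicting $e^{-m+\varepsilon}$. The equicoercivity and $\Gamma$-$\liminf$ statements are exactly the $\Gamma$-convergence theory of Onsager--Machlup functionals for Gaussian measures developed in \cite{ayanbayev2021gamma,ayanbayev2021convergence}, and I expect assembling and invoking these uniform estimates over the constraint set to be the main technical obstacle.

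Combining the two bounds gives $\sup_{u\in C}\mu(B_r(u))/\mu(B_r(0))\to e^{-m}$. For the converse implication, if $u^y$ solves \eqref{conditional-map-optimization-problem} then $I(u^y)=m$, so $\mu(B_r(u^y))/\mu(B_r(0))\to e^{-m}$, and dividing by the supremum asymptotic gives $\mu(B_r(u^y))/\sup_{u\in C}\mu(B_r(u))\to1$; thus $u^y$ is a conditional mode. As a by-product the same compactness argument shows a minimizer of \eqref{conditional-map-optimization-problem} exists whenever $C\cap\mH(\mu)\neq\emptyset$, so the characterization is not vacuous. In summary, the proof rests almost entirely on the uniform/semicontinuous control of the constrained supremum of small-ball probabilities; the remaining steps are bookkeeping around the Gaussian Onsager--Machlup asymptotics and the compact embedding $\mH(\mu)\hookrightarrow\mX$.
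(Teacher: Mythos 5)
Your proposal is correct in substance but follows a genuinely different route from the paper. The paper (Appendix~B, \Cref{prop:main-conditional-map}) constructs, for each radius $r$, a constrained maximizer $u_r = \argmax_{u \in T^{-1}(y)}\mu(B_r(u))$, proves these maximizers exist, shows a subsequence converges strongly in $\mX$ to a point $u_0 \in \mH(\mu)\cap T^{-1}(y)$ that is simultaneously a conditional mode and a minimizer of \eqref{conditional-map-optimization-problem}, and then compares any other conditional mode or minimizer against $u_0$ via \Cref{lem:OM-lemma}. You instead reduce the whole theorem to the single asymptotic statement $\sup_{u\in T^{-1}(y)}\mu(B_r(u))/\mu(B_r(0)) \to e^{-m}$, where $m$ is the minimal value of $\tfrac12\|\cdot\|_{\mH(\mu)}^2$ over $T^{-1}(y)$, and deduce both implications by dividing asymptotics, outsourcing the upper bound to the equicoercivity and $\Gamma$-liminf results of \cite{ayanbayev2021gamma,ayanbayev2021convergence}. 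This buys two things: you never need exact maximizers (near-maximizers suffice in your contradiction argument), so you avoid part (i) of the paper's proposition --- which is, incidentally, the step where the paper's own argument leans on weak closedness of $T^{-1}(y)$, a property that continuity of $T$ alone does not deliver; and your reduction makes the logical structure (forward direction needs only the lower bound, converse needs the matching upper bound) completely transparent. What the paper's approach buys is self-containedness: it works directly from the small-ball lemmas of \cite{dashti2013map} rather than invoking the $\Gamma$-convergence framework, which is itself proved from essentially those same lemmas.

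One step of your upper-bound argument is misstated, although it is repairable with tools you already invoke. From $\mu(B_{r_n}(u_n))/\mu(B_{r_n}(0)) \ge e^{-m+\varepsilon}$ you conclude that $(u_n)$ is ``bounded in $\mH(\mu)$'' and then apply the compact embedding $\mH(\mu)\hookrightarrow\mX$. This cannot be right as written: the points $u_n$ need not belong to $\mH(\mu)$ at all (a small-ball ratio bounded below at positive radius says nothing about membership in the Cameron--Martin space), so neither the $\mH(\mu)$-bound nor the compact-embedding step applies. What equicoercivity actually gives (via \Cref{lem:small-ball-probability-bound}) is a uniform bound on $\|u_n\|_\mX$, hence only weak subsequential convergence $u_n \rightharpoonup u_*$ in $\mX$; and norm closedness of $T^{-1}(y)$ does not place a weak limit in $T^{-1}(y)$. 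The repair is exactly \Cref{lem:ON-of-points-outside-CM} and \Cref{lem:weak-but-not-strong-conv-of-maps}: because the ratios stay bounded below, the weak limit $u_*$ must lie in $\mH(\mu)$ and the convergence must in fact be strong in $\mX$ --- this is also how the weak-topology $\Gamma$-liminf inequality in \cite{ayanbayev2021gamma} is established --- after which $u_*\in T^{-1}(y)$ by norm closedness, and the $\Gamma$-liminf inequality yields $m-\varepsilon \ge I(u_*) \ge m$, the desired contradiction. With that substitution your proof is complete.
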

The proof follows by adapting the proof techniques 
of \cite[Cor.~3.10]{dashti2013map}
to our definition of a conditional mode. 
The details are summarized  in 
Appendix~\ref{appendix:Proof-of-conditional-mode-optimization} for brevity.

\begin{remark}
The preceding theorem requires both that
$y \in \supp T_\sharp \mu$ and that $y \in  T(\mH( \mu)).$ 
The first condition is natural: we want the data to have
arisen, in principle, from a map $T$ applied to the
realization of the measure $\mu.$
The second condition, however, says that it must also be
realized as an application of the map $T$  to a point in the Cameron-Martin
space $\mH( \mu).$ Recall that $\mu(\mH( \mu))=0.$
Requiring both of these conditions to hold leads to restrictions on the map $T$.

Consider the following example of a Gaussian measure
from \cite{dashti2017bayesian}.
Assume a centered Gaussian measure $\mu$ with a covariance operator which is the inverse of $-\frac{d^2}{dx^2}$ on $I:=(0,1)$, with
homogeneous Dirichlet boundary conditions; this is a compact operator from $L^2(I)$ into
itself. Thus $\mu$ is the 
Brownian bridge and we may take $\mX=H^s(I)$, for any $s<\frac12$ since
all such  Sobolev spaces are in the support of $\mu$. Furthermore
any draw from $\mu$ is almost surely \emph{not} an element of $H^s(I)$ for
any $s \ge \frac12.$ In particular the Cameron-Martin space is
$H^1_0(I)$ and $\mu(H^1_0(I))=0.$ 
Now define $t:\R \to \R$ by $t(u)={\rm min}(1,u)$
and  $T:\mX \to \mX$ by $T(u)(x):=t\bigl(u(x)\bigr).$
Applying such a function $t(\cdot)$ pointwise to any draw from $\mu$ results, 
almost surely, in a function with no more than $s<\frac12$ weak derivatives
in $L^2(I)$. Such a function cannot simultaneously
be the image under a globally Lipschitz $T(\cdot)$ 
of an element of $H^1_0(I).$ Thus the preceding theorem
cannot be applied.

On the other hand, working with the same measure $\mu$,
taking $\mY=\R$ and $T(u)=u(\frac12)$ it follows from
the previous regularity discussions, and the properties
of Brownian bridge at any point in the open interval $I$,
that any $y \in \R$ is also in
$T(\mH( \mu)) \cap \supp T_\sharp \mu.$
Thus the theorem can be applied.
\end{remark}

Noting the ideas underlying the preceding remark,
the following corollary of Theorem \ref{prop:conditional-mode-optimization}
is immediate, noting the finite-dimensionality of the image of 
$T:=F \circ \bphi.$

\begin{corollary}\label{General-MAP-characterization2}
Consider $\mu = N(0, \mK) \in \PP(\mX)$ with Cameron-Martin space $\mH(\mu),$
and map $F: \R^N \to \R^M$ satisfying \Cref{assumption:F-assumptions}.
  Suppose  $\mu^\by_0$ is defined as in \Cref{lem:citethis} for some $\by \in F\Bigl(\bphi\bigl(\mH(\mu)\bigr)\Bigr) \subseteq \R^M$ and with $\phi_i \in \mX^\star$. Then a point $u^\by_0 \in \mX$ is a conditional mode  for $\mu^\by_0$, according to \Cref{def:conditional-mode},
    if and only if it is a  minimizer of the constrained optimization problem
    \begin{equation}\label{conditional-mode-problem}
    \begin{aligned}
      \minimize_{u \in \mX} \quad \| u\|_{\mH(\mu)} \quad \st  \quad  F\bigl(\bphi(u)\bigr) = \by.
    \end{aligned}
  \end{equation}
\end{corollary}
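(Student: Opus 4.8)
The plan is to deduce the corollary from \Cref{prop:conditional-mode-optimization} by specializing to $\mY = \R^M$ and $T := F \circ \bphi$, so that the constrained problem \eqref{conditional-map-optimization-problem} of the theorem becomes precisely \eqref{conditional-mode-problem}. To invoke the theorem I must check its two hypotheses for this choice of $T$: that $T$ is continuous, and that the prescribed $\by$ lies in $T(\mH(\mu)) \cap \supp T_\sharp \mu$.

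Continuity is immediate. Since each $\phi_i \in \mX^\ast$, the map $\bphi : \mX \to \R^N$ is bounded and linear, and $F$ is continuous by the local Lipschitz hypothesis of \Cref{assumption:F-assumptions}; hence $T = F \circ \bphi$ is continuous. Likewise $\by \in T(\mH(\mu))$ is just the standing assumption $\by \in F(\bphi(\mH(\mu)))$, because $T(\mH(\mu)) = F(\bphi(\mH(\mu)))$.

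The one substantive point, and the place where the finite dimensionality of the image enters, is verifying $\by \in \supp T_\sharp \mu$. First I would write $T_\sharp \mu = F_\sharp \eta$ with $\eta := \bphi_\sharp \mu = N(0, \Theta)$. Viewing $\bphi$ as a bounded operator $\Phi : \mX \to \R^N$ with adjoint $\Phi^\ast$, we have $\Theta = \Phi \mK \Phi^\ast = (\Phi \mK^{1/2})(\Phi \mK^{1/2})^\ast$, and the standard identity $\text{Range}(A A^\ast) = \text{Range}(A)$ (valid here as the range is closed in $\R^N$) gives $\text{Range}(\Theta) = \text{Range}(\Phi \mK^{1/2}) = \bphi(\mH(\mu))$. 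Since the support of the finite-dimensional Gaussian $\eta$ equals the closed range of its covariance, $\supp \eta = \text{Range}(\Theta)$, we conclude $\bphi(\mH(\mu)) \subseteq \supp \eta$. Now write $\by = F(\bz')$ with $\bz' \in \bphi(\mH(\mu)) \subseteq \supp \eta$: for any open $U \ni \by$ the preimage $F^{-1}(U)$ is open and contains $\bz'$, whence $F_\sharp \eta(U) = \eta(F^{-1}(U)) > 0$. Thus $\by \in \supp F_\sharp \eta = \supp T_\sharp \mu$, as desired.

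Both hypotheses being confirmed, $\by$ lies in the (necessarily nonempty) intersection $T(\mH(\mu)) \cap \supp T_\sharp \mu$, and \Cref{prop:conditional-mode-optimization} applies directly to give the stated equivalence. The main obstacle is precisely this support computation: in infinite dimensions the image under a nonlinear map of the measure-zero Cameron-Martin space may fall outside the support of the pushforward, as the Brownian-bridge example preceding the corollary shows, so it is the finite-dimensional reduction through the linear $\bphi$ --- making $\supp \eta$ the full closed range of $\Theta$ and letting continuity of $F$ carry support-membership forward --- that renders the two hypotheses compatible and the corollary immediate. I note that this argument never requires $\Theta$ to be invertible, since it works directly with $\text{Range}(\Theta)$.
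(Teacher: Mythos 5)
Your proposal is correct and takes essentially the same route as the paper, which simply declares the corollary ``immediate'' from \Cref{prop:conditional-mode-optimization} upon ``noting the finite-dimensionality of the image of $T = F \circ \bphi$.'' Your support computation---identifying $\supp \bphi_\sharp\mu$ with $\mathrm{Range}(\Theta) = \bphi(\mH(\mu))$ and pushing support-membership through the continuous $F$---is precisely the verification that the paper's one-line argument leaves implicit, and it is carried out correctly (including the observation that invertibility of $\Theta$ is not needed).
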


Using the representer theorem \cite[Prop.~2.3]{chen2021solving},
we can further characterize the
conditional modes $u^{\by}_0$ via a finite-dimensional optimization problem.
We recall this result for convenience.

\begin{proposition}\label{prop:representer-theorem-conditional-mode}
Suppose \Cref{General-MAP-characterization2}
is satisfied.  Then every conditional mode $u^\by_0$  of $\mu^\by_0$ 
can be written as  $u^\by_0 = \bchi^T \bz^\by_0$ where $\bz^\by_0$ is a solution of
  \begin{equation*}
      \minimize_{\bz \in \R^N} \quad \bz^T \Theta^{-1} \bz \quad \st  \quad  F(\bz) = \by.
  \end{equation*}
\end{proposition}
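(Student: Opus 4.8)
The plan is to pass from the measure-theoretic statement to the constrained variational problem \eqref{conditional-mode-problem} via \Cref{General-MAP-characterization2}, and then to run a representer-theorem argument that collapses this infinite-dimensional problem onto the finite-dimensional one. First I would fix an arbitrary conditional mode $u^\by_0$; by \Cref{General-MAP-characterization2} it minimizes $\|u\|_{\mH(\mu)}$ over the feasible set $\{u \in \mH(\mu) : F(\bphi(u)) = \by\}$. The structural observation driving everything is that both objective and constraint see $u$ only through the finite vector $\bz = \bphi(u)$, since $F(\bphi(u))$ depends on $u$ solely via $\bphi(u)$.

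To exploit this I would record that each functional $\phi_i$ acts on the Cameron--Martin space as an inner product against $\theta_i = \mK \phi_i^\ast$: for $u \in \mH(\mu)$,
\[
  \phi_i(u) = \langle \phi_i^\ast, u \rangle_\mX = \langle \mK^{1/2}\phi_i^\ast, \mK^{-1/2} u \rangle_\mX = \langle \theta_i, u \rangle_{\mH(\mu)},
\]
using $\mK^{-1/2}\theta_i = \mK^{1/2}\phi_i^\ast$; in particular $\theta_i \in \mH(\mu)$ and $\langle \theta_i, \theta_j \rangle_{\mH(\mu)} = \Theta_{ij}$. Setting $V = \mathrm{span}\{\theta_1, \dots, \theta_N\} \subset \mH(\mu)$ and decomposing $u = u_V + u_\perp$ orthogonally in $\mH(\mu)$, the displayed identity gives $\bphi(u) = \bphi(u_V)$, so $u_V$ is feasible whenever $u$ is, while $\|u\|_{\mH(\mu)}^2 = \|u_V\|_{\mH(\mu)}^2 + \|u_\perp\|_{\mH(\mu)}^2$. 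Minimality of $u^\by_0$ then forces $u_\perp = 0$, i.e. $u^\by_0 \in V$.

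Next I would parametrize $u^\by_0 = \btheta^T \bc$ for some $\bc \in \R^N$ and set $\bz^\by_0 := \bphi(u^\by_0) = \Theta \bc$, so that $\bc = \Theta^{-1} \bz^\by_0$ (invertibility of $\Theta$ being implicit in the Gamblet notation of \Cref{rem:Gaussian-random-mean-shift}). This yields $u^\by_0 = \btheta^T \Theta^{-1} \bz^\by_0 = \bchi^T \bz^\by_0$ together with $\|u^\by_0\|_{\mH(\mu)}^2 = \bc^T \Theta \bc = (\bz^\by_0)^T \Theta^{-1} \bz^\by_0$, and feasibility gives $F(\bz^\by_0) = F(\bphi(u^\by_0)) = \by$, so $\bz^\by_0$ is admissible for the finite problem.

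It remains to verify that $\bz^\by_0$ actually minimizes $\bz^T \Theta^{-1} \bz$ subject to $F(\bz) = \by$, which is where the value equivalence of the two problems --- the content of the representer theorem \cite[Prop.~2.3]{chen2021solving} --- enters. The enabling identity is the interpolation property $\bphi(\bchi^T \bz) = \bz$, following from $\phi_i(\vphi_j) = \delta_{ij}$: it shows that any admissible $\bz'$ for the finite problem lifts to a feasible point $\bchi^T \bz' \in \mH(\mu)$ with $\|\bchi^T \bz'\|_{\mH(\mu)}^2 = (\bz')^T \Theta^{-1} \bz'$, so the two problems share the same infimum and their minimizers correspond under lift and restriction. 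Optimality of $u^\by_0$ thus transfers to $\bz^\by_0$. I expect the only delicate point to be the clean bookkeeping of this correspondence --- confirming that restriction to $V$ never shrinks the feasible set and that the lifted objectives match exactly --- rather than any genuine analytic difficulty; alternatively, once \eqref{conditional-mode-problem} is in hand one may simply invoke \cite[Prop.~2.3]{chen2021solving} verbatim.
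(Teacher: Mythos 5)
Your proposal is correct and takes essentially the same route as the paper: reduce the conditional mode to a minimizer of the constrained problem \eqref{conditional-mode-problem} via \Cref{General-MAP-characterization2}, then collapse it to the finite-dimensional problem by the representer theorem. The only difference is that you spell out the representer argument (the orthogonal decomposition in $\mH(\mu)$ against ${\rm span}\{\theta_1,\dots,\theta_N\}$ and the lift $\bz' \mapsto \bchi^T \bz'$) inline, whereas the paper simply invokes \cite[Prop.~2.3]{chen2021solving} at this point.
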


\subsection{Convergence of MAP Estimators to Conditional Modes}\label{subsec:MAP-convergence}

Finally, we establish the convergence of the MAP estimators 
$u^\by_\beta$ to the conditional modes $u^\by_0$ 
 in the setting where
$T=G$, with $G$ given by (\ref{eq:G}b). 

\begin{theorem}\label{prop:map-convergence-to-conditional-mode}
Consider $\mu = N(0, \mK) \in \PP(\mX)$ with Cameron-Martin space $\mH(\mu),$
and  a map $F: \R^N \to \R^M$ satisfying \Cref{assumption:F-assumptions}.
  Fix a point $\by \in G\bigl(\mH(\mu)\bigr)$ and consider the posteriors $\mu^\by_\beta$ and
  their MAP estimators $u^\by_\beta$, along with 
  the conditional measures $\mu^\by_0$ and their conditional modes $u^\by_0$.
  Then for any sequence of $\beta \to 0$ there exists a subsequence $\beta_n \to 0$
  so that $u^\by_{\beta_n}$ converges to a conditional mode $u^\by_0$.
\end{theorem}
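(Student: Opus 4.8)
The plan is to treat this as a penalty-to-constraint limit in the spirit of $\Gamma$-convergence, using the Onsager--Machlup characterization of \Cref{General-MAP-characterization} to work with the functionals $J^\by_\beta$ in place of the measures. The cornerstone is an a priori bound in $\mH(\mu)$ that is uniform in $\beta$. Since the hypothesis gives $\by \in G\bigl(\mH(\mu)\bigr)$, I first fix a competitor $w \in \mH(\mu)$ with $F(\bphi(w)) = \by$, so that its data-misfit term vanishes. Testing the minimality of $u^\by_\beta$ against $w$ yields $\tfrac12 \|u^\by_\beta\|_{\mH(\mu)}^2 \le J^\by_\beta(u^\by_\beta) \le J^\by_\beta(w) = \tfrac12\|w\|_{\mH(\mu)}^2$, which simultaneously produces the uniform bound $\|u^\by_\beta\|_{\mH(\mu)} \le \|w\|_{\mH(\mu)}$ and the misfit estimate $|F(\bphi(u^\by_\beta)) - \by|^2 \le \beta^2\|w\|_{\mH(\mu)}^2$. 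Here the minimizers $u^\by_\beta$ exist and lie in $\mH(\mu)$ by \Cref{General-MAP-characterization}, since $F\circ\bphi$ is continuous under \Cref{assumption:F-assumptions}.

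Next, given any sequence $\beta \to 0$, the uniform bound makes $\{u^\by_\beta\}$ bounded in the Hilbert space $\mH(\mu)$, so I extract a subsequence $\beta_n \to 0$ with $u^\by_{\beta_n} \rightharpoonup u^\dagger$ weakly in $\mH(\mu)$. The key structural point is that $\bphi : \mH(\mu) \to \R^N$ is a finite collection of bounded linear functionals, hence weakly continuous, so $\bphi(u^\by_{\beta_n}) \to \bphi(u^\dagger)$ strongly in $\R^N$; continuity of $F$ then gives $F(\bphi(u^\by_{\beta_n})) \to F(\bphi(u^\dagger))$. On the other hand the misfit estimate forces $F(\bphi(u^\by_{\beta_n})) \to \by$, and uniqueness of limits yields $F(\bphi(u^\dagger)) = \by$. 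Thus $u^\dagger$ is feasible for the constrained problem \eqref{conditional-mode-problem}.

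It remains to show $u^\dagger$ is a minimizer. For any feasible $v \in \mH(\mu)$ one has $J^\by_{\beta_n}(v) = \tfrac12\|v\|_{\mH(\mu)}^2$, so minimality gives $\|u^\by_{\beta_n}\|_{\mH(\mu)} \le \|v\|_{\mH(\mu)}$; choosing $v$ to be a conditional mode $u^\by_0$ solving \eqref{conditional-mode-problem} and invoking weak lower semicontinuity of the norm gives $\|u^\dagger\|_{\mH(\mu)} \le \liminf_n \|u^\by_{\beta_n}\|_{\mH(\mu)} \le \|u^\by_0\|_{\mH(\mu)}$. Since $u^\dagger$ is itself feasible, the reverse inequality holds by definition of the minimum, so $u^\dagger$ attains it and solves \eqref{conditional-mode-problem}; by \Cref{General-MAP-characterization2} it is a conditional mode of $\mu^\by_0$, which I rename $u^\by_0$. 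The same chain of inequalities pinches $\|u^\by_{\beta_n}\|_{\mH(\mu)} \to \|u^\dagger\|_{\mH(\mu)}$, so weak convergence upgrades to strong convergence in $\mH(\mu)$, and hence in $\mX$ via the compact embedding. I expect the main obstacle to be the a priori bound: it rests entirely on the hypothesis $\by \in G\bigl(\mH(\mu)\bigr)$ furnishing a zero-misfit competitor \emph{inside} the Cameron--Martin space, without which $J^\by_\beta$ need not remain uniformly bounded as $\beta \to 0$. The subsequent passage of feasibility through the nonlinear map $F$ is then painless precisely because $\bphi$ has finite-dimensional range, so weak convergence in $\mH(\mu)$ already delivers the strong convergence in $\R^N$ on which continuity of $F$ can act.
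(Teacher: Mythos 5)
Your proof is correct, and its skeleton---a uniform bound obtained from a zero-misfit competitor in $\mH(\mu)$, weak compactness, and lower semicontinuity of the norm---is the same penalty-to-constraint argument the paper gives. The meaningful divergence is in how you pass feasibility to the limit. The paper invokes the compact embedding $\mH(\mu) \hookrightarrow \mX$ to extract a subsequence converging \emph{strongly in $\mX$}, and then argues by contradiction: if the limit were not in $G^{-1}(\by)$, continuity of $G$ on $\mX$ would force $J_{\beta_n}(u^\by_{\beta_n}) \ge \epsilon/\beta_n^2$, violating the uniform bound. You instead note that each $\phi_i \in \mX^\ast$ restricts to a bounded linear functional on $\mH(\mu)$, so weak convergence in $\mH(\mu)$ already gives $\bphi(u^\by_{\beta_n}) \to \bphi(u^\dagger)$ in $\R^N$, and feasibility follows directly from continuity of $F$ together with your explicit misfit estimate $|F(\bphi(u^\by_\beta)) - \by| \le \beta\|w\|_{\mH(\mu)}$. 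This exploits the finite-rank structure of the observation map, never uses compactness of the embedding, and so would survive in settings where that embedding is only continuous; the paper's route, by contrast, works for a general continuous $G$ on $\mX$ but leans on compactness. Your optimality step is the paper's weak-lsc argument recast as a direct pinching rather than a contradiction, which is cosmetic. One genuine bonus: the pinching also gives $\|u^\by_{\beta_n}\|_{\mH(\mu)} \to \|u^\dagger\|_{\mH(\mu)}$, and weak convergence plus norm convergence upgrades to \emph{strong} convergence in $\mH(\mu)$---a strictly stronger mode of convergence than the $\mX$-convergence the paper asserts, and one the paper could only reach through the compact embedding.
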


\begin{proof}
First define
$J_\beta(u):=\| u \|_{\mH(\mu)}^2
  + \frac{1}{\beta^2} | G(u) - \by |^2,$
recalling that $\mH(\mu)=\mK^{\frac12}\mX$ is compactly embedded into $\mX.$
Note that $u^\by_\beta$ is a minimizer of $J_\beta$ over $\mX$ and
 that $u^\by_0$ minimizes $\| u \|_{\mH(\mu)}$ in $G^{-1}(\by) \subseteq \mX$. Hence, it holds that
 \begin{equation}
\label{eq:willcite2}
\|  u^\by_\beta\|_{\mH(\mu)}^2 \le J_\beta(u^\by_\beta) \le J_\beta(u^\by_0)=\|  u^\by_0\|_{\mH(\mu)}^2. 
\end{equation}
Thus we have that 
  $ \| u^\by_\beta \|_{\mH(\mu)} \le \| u^\by_0 \|_{\mH(\mu)}$ for all $\beta>0.$ Since $\mH(\mu)$ is a compact subset of $\mX$ we have convergence of
$u^\by_\beta$ in $\mX$ to a limit $u_\ast \in \mH(\mu),$
  as well as weak convergence in $\mH(\mu)$, along a subsequence $\beta_n.$ 
It is immediate that $u_\ast \in G^{-1}(\by)$ as otherwise
  (along a further relabelled subsequence) there is $\epsilon>0$ and $N \in \mathbb{N}$ such that $J_{\beta_n}(u^\by_{\beta_n}) \ge \epsilon/\beta_n^2$
  for all $n \ge N$, which contradicts \eqref{eq:willcite2} for all $n$ such that $\beta_n$ is sufficiently small. To show that $u_\ast$
  is equal to a minimizer of $\| u \|_{\mH(\mu)}$ in $G^{-1}(\by)$ we assume
  for contradiction that $\| u_\ast \|_{\mH(\mu)}>\| u_0^\by \|_{\mH(\mu)}.$
  By \eqref{eq:willcite2} we have
  \begin{equation*}
\|  u^\by_{\beta_n}\|_{\mH(\mu)}^2 \le J_{\beta_n}(u^\by_{\beta_n}) \le J_{\beta_n}(u^\by_0)=\|  u^\by_0\|_{\mH(\mu)}^2<\|  u_\ast\|_{\mH(\mu)}^2. 
\end{equation*}
However, by lower semi-continuity of Hilbert space norms, we also have
  $${\rm liminf}_{\beta_n \to 0}\, \|  u^\by_{\beta_n}\|_{\mH(\mu)}^2 \ge 
  \|u_{\ast}\|_{\mH(\mu)}^2,$$
  giving the desired contradiction.
  {}
\end{proof}

\section{Algorithms} \label{sec:sampling}
In this section, we discuss algorithms to sample the posterior and conditional measures of Gaussian priors. According to \Cref{prop:conditional-measure-decomposition,prop:posterior-decomposition}, both measures can be represented by a convolution of a finite-dimensional measure that is possibly non-Gaussian, and an infinite-dimensional Gaussian measure that can be identified analytically. Our goal here is 
to exploit this structure to design efficient algorithms 
for simulation of the aforementioned
posterior and conditional measures as 
summarized in \Cref{ssec:1,ssec:2}.
In \Cref{sec:numerics} we present more concrete examples 
where posterior measures arising within the
GP-PDE methodology are simulated.

\subsection{Sampling Strategies for Posterior Measures ($\beta^2  >0$)}
\label{ssec:1}
The key idea behind our proposed numerical algorithms is the observation that
\Cref{prop:posterior-decomposition} enables the decomposition
$\mu^\by_\beta = \mu^\bphi \ast \bchi^T_\sharp \eta^\by_\beta$ where 
$\mu^\bphi = N(0, \mK^\bphi)$ is a Gaussian whose covariance 
operator is given by \eqref{def:u-dagger-and-K-phi}, in terms of the 
measurement operator $\bphi$ and the prior covariance matrix $\mK$. Thus, the 
measure $\mu^\bphi$ can be simulated via standard techniques for discretization and 
sampling of Gaussian processes and measures 
\cite{williams2007gaussian, betz2014numerical, snelson2007local}. Furthermore, the map $\bvphi$ 
(recall \Cref{rem:Gaussian-random-mean-shift}) is also defined using 
$\bphi$ and $\mK$ and so can be approximated via appropriate discretization.
It remains to simulate $\eta^\by_\beta$ which is, in general, non-Gaussian. We recall that 
\Cref{prop:posterior-decomposition} identifies $\eta^\by_\beta \in \PP(\R^N)$ via its Lebesgue density
  \begin{equation*}
    \frac{\dd \eta^\by_\beta}{\dd \Lambda}(\bz)  \propto
    \exp \left( - \frac{1}{2\beta^2} | F(\bz) - \by |^2  - 
    \frac{1}{2} \bz^T \Theta^{-1} \bz \right)\, .
  \end{equation*}
At this level any sampling algorithm of choice such as MCMC \cite{robert1999monte}, sequential Monte Carlo \cite{doucet2001introduction},
or variational inference \cite{blei2017variational} can be used to simulate samples from $\eta^\by_\beta$, leading 
to an algorithm for simulating posterior samples as summarized in \Cref{alg:MCMC}.
While this approach is accurate up to the discretization errors of $\mK^\bphi$ and $\bvphi$ 
and the convergence of the utilized sampling algorithms for $\eta^\by_\beta$, it has limited utility 
in the limit $\beta \to 0$ which is particularly important in the context of 
the GP-PDE solver of \Cref{subsec:motivation}. This is due to the well-understood 
phenomenon that as $\beta \to 0$ the measure $\eta^\by_\beta$ concentrates on 
the set $F^{-1}(\by)$ which may have very small prior measure, leading to poor 
convergence rates for sampling algorithms such as MCMC. 

\begin{algorithm}[htp]
\footnotesize
\caption{Recipe for generating  samples from $\mu^\by_\beta$ using MCMC on $\eta^\by_\beta$}\label{alg:MCMC}
\begin{algorithmic}[1]
\State {\bf Input:} Prior covariance $\mK$, maps $F, \bphi$, and $\beta > 0$
\State {\bf Output:} Samples $u_j \sim \mu^\by_\beta$
\State Discretize the operators $\mK^\bphi$ and $\bvphi$ as $\widehat{\mK}^\bphi$ and 
$\widehat{\bvphi}$ 

\For{$j=1, \dots,$ Number of samples }
\State Simulate $w_j \sim \mu^\bphi$ 
by setting $w_j = (\widehat{\mK}^\bphi)^{1/2} \xi_j$ 
where $\xi_j \sim N(0, I)$
\State Simulate $v_j \sim \eta^\by_\beta$ using MCMC (or similar algorithm)
\State Set $u_j = w_j + \widehat{\bvphi}^T v_j$
\EndFor
\end{algorithmic}
\end{algorithm}

Under the  conjecture that $\eta^\by_\beta$ approaches a Gaussian measure in 
the limit of large data and small noise, we propose to replace Step 6 of \Cref{alg:MCMC}
with a Gaussian approximation step at the mode; this is sometimes referred to  as the Laplace approximation 
to $\eta^\by_\beta$ \cite{kass1991laplace}. More precisely, letting $\bz^\by_\beta$ be a mode of $\eta^\by_\beta$
obtained by solving \eqref{eqn: posterior modes def}, we define the Gaussian measure 
\begin{equation}
  \label{eqn: sampling posteriors density Laplace approximation}
    \begin{aligned}
    &\frac{\dd \overline{\eta}^\by_\beta}{\dd \Lambda}(\bz) \propto \\
    &  
    \exp \left(  - \frac{1}{2 \beta^2}  (\bz - \bz^\by_\beta)^T 
    \left( 
    \nabla F(\bz^\by_\beta)^T \nabla F(\bz^\by_\beta) + 
    D^2 F(\bz^\by_\beta) (F(\bz^\by_\beta) - \by)  
    \right)
    (\bz - \bz^\by_\beta)
    \right)\, .
  \end{aligned}
  \end{equation}
  The above Laplace approximation leads to an efficient sampling algorithm for the posterior 
  since $\overline{\eta}^\by_\beta$ is Gaussian and can be simulated exactly given access to 
  the second variation $D^2 F$. In situations where this second variation is expensive to compute we propose an 
  alternative approximation to $\eta^\by_\beta$ as follows: 
    \begin{equation}
  \label{eqn: sampling posteriors density Gauss-Newton-Laplace approximation}
    \begin{aligned}
    \frac{\dd \widetilde{\eta}^\by_\beta}{\dd \Lambda}(\bz)  \propto
    \exp \left( - \frac{1}{2\beta^2} | F(\bz^\by_\beta) + 
    \nabla F(\bz^\by_\beta)^T (\bz-\bz^\by_\beta) - \by |^2  - \frac{1}{2} \bz^T \Theta^{-1} \bz \right)\, .
  \end{aligned}
  \end{equation}
  We refer to this measure as the Gauss-Newton approximation to $\eta^\by_\beta$ as it 
  arises from the probabilistic interpretation of the Gauss-Newton algorithm of \cite{chen2021solving}
  that was proposed for finding the mode $\bz^\by_\beta$. The advantage of the Gauss-Newton 
  approximation over the regular Laplace approximation is that it only uses $\nabla F$ and not its second 
  variation,
  
  The Laplace and Gauss-Newton approximations are related to each other, indeed we have 
  \begin{equation*}
  \frac{\dd \overline{\eta}^\by_\beta}{\dd \Lambda}(\bz)  \propto
      \frac{\dd \tilde{\eta}^\by_\beta}{\dd \Lambda}(\bz) \exp \left( -\frac{1}{2\beta^2}(\bz-\bz^\by_\beta)^T
      \left[ D^2 F(\bz^\by_\beta) (F(\bz^\by_\beta)-\by) \right] (\bz-\bz^\by_\beta) \right)
    \, ,
  \end{equation*}
 implying that the Gauss-Newton approximation is close to Laplace whenever $F(\bz^\by_\beta) - \by$
 is small. We anticipate that this approximation is accurate in the regimes where 
  density $\eta^\by_\beta$ would concentrate around the set 
 $F^{-1}(\by)$. Our numerical experiments indicate that 
 this happens in the GP-PDE setting when we have a lot of observation points 
 and $\beta \to 0$, however, we do not expect this approximation to be good in the 
 setting where $\beta \to 0$, but only a few observations are available.

\subsection{Sampling Strategies for Conditional Measures ($\beta^2 = 0$)}
\label{ssec:2}
The conditional measure $\mu^\by_0$ can be simulated using 
similar ideas from the previous section. By \Cref{prop:conditional-measure-decomposition}, we can write $\mu^\by_0 = \mu^\bphi \ast \bchi^T_\sharp \eta^\by_0$. 
Once again, the measure $\mu^\bphi$ can be simulated (up to discretization errors) exactly and so it remains to generate samples from
 $\eta^\by_0= \eta( \dd \bz | F(\bz) = \by)$, the conditional of
   $\eta = N( 0, \Theta)$ with respect to the map $F$. To do so, 
   we will identify an explicit expression for the 
   Lebesgue density of this conditional. For simplicity we assume that there is a decomposition 
$\bz = (\bz_1, \bz_2)$ 
such that $F(\bz)-\by = 0$ is equivalent to $\bz_2-G(\bz_1; \by)=0$ for some mapping $G$ depending on $\by$ (and implicitely $F$). Here $\bz_1 \in \R^{N_1}, \bz_2 \in \R^{N_2}$ such that $N=N_1+N_2$. Such a decomposition is often easy to obtain 
in many practical applications including the GP-PDE 
example of \Cref{subsec:motivation} and can generally be guaranteed by the implicit function theorem under mild 
conditions on $F$. 



With this decomposition, and a slight abuse of notation, we have $$\eta^\by_0= \eta( \dd \bz | \bz_2=G(\bz_1; \by)) = \eta( \dd \bz_1 | \bz_2=G(\bz_1; \by)) \updelta_{G({\bz_1; \by})}(\bz_2).$$ In the following proposition, we identify the formula for $\eta( \dd \bz_1 | \bz_2=G(\bz_1))$ by taking the limit of $\beta \to 0$ for $\eta({\rm d}\bz_1|\bz_2=G(\bz_1)
    +\beta\xi)$ where $\xi \sim N(0,I)$ is the centered Gaussian distribution with identity covariance in $\R^N$.

\begin{proposition}
\label{prop: density of conditionals}
    Let $\bz = (\bz_1, \bz_2) \sim N(0,\Theta)$, where $\bz_1 \in \R^{N_1}, \bz_2 \in \R^{N_2}$, and $\Theta$ is non-singular. Consider the measure  
    $\breve{\eta}_\beta({\rm d}\bz_1) := \eta({\rm d}\bz_1|\bz_2=G(\bz_1)+
    \beta\xi)$ where $\xi \sim N(0,I)$ and $G$ is a measurable function\footnote{Note that the dependence of $G$ on $\by$ is suppressed here since the theorem holds for arbitrary measurable maps $G$} in $\R^{N_1}$. Then 
    the density of $\breve{\eta}_\beta$ converges uniformly as $\beta \to 0$ to a density  $\breve{\eta}_0$, where
    \begin{equation}
    \label{eqn: conditional measure formula}
        \frac{\dd \breve{\eta}_0}{\dd \Lambda}(\bz_1) \propto \exp\left(-\frac{1}{2}\left(\bz_1,G(\bz_1)\right)\Theta^{-1} \begin{pmatrix}\bz_1 \\ G(\bz_1)\end{pmatrix}\right) \, .
    \end{equation}
\end{proposition}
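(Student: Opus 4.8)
The plan is to reduce the statement to an explicit finite-dimensional Gaussian integral and then to analyze that integral as $\beta \to 0$. First I would make the noisy conditioning precise. Taking $\xi \sim N(0,I)$ independent of $\bz = (\bz_1,\bz_2) \sim N(0,\Theta)$, the relation $\bz_2 = G(\bz_1) + \beta\xi$ is a codimension-$N_2$ constraint on $(\bz_1,\bz_2,\xi)$; disintegrating the joint law along this constraint (equivalently, applying Bayes' rule with the additive Gaussian likelihood supplied by $\beta\xi$) and marginalizing out $\bz_2$ gives
\[
\frac{\dd \breve\eta_\beta}{\dd\Lambda}(\bz_1) \propto \int_{\R^{N_2}} \exp\!\Big(-\tfrac12 (\bz_1,\bz_2)\,\Theta^{-1}\!\begin{pmatrix}\bz_1\\ \bz_2\end{pmatrix} - \tfrac{1}{2\beta^2}|\bz_2 - G(\bz_1)|^2\Big)\,\dd\bz_2,
\]
where the proportionality constant depends only on $\beta$. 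The substitution $\bz_2 = G(\bz_1)+\beta\xi$ confirms that this is exactly the $\bz_1$-marginal of the Bayesian posterior with the constraint $\bz_2 - G(\bz_1)=0$ observed in noise $\pi_\beta$, consistent with the rest of the paper.

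Second, since the integrand is Gaussian in $\bz_2$, I would evaluate the integral in closed form. Writing $\Theta^{-1} = \begin{pmatrix} A & B \\ B^T & C\end{pmatrix}$ in blocks of sizes $N_1, N_2$, completing the square in $\bz_2$ produces a determinant prefactor $\det(\beta^{-2}I + C)^{-1/2}$ that is independent of $\bz_1$ (hence absorbed into the normalization) times $\exp(E_\beta(\bz_1))$. Using $P_\beta := (I+\beta^2 C)^{-1}$ together with $P_\beta - I = -\beta^2 P_\beta C$ collapses the exponent to $E_\beta(\bz_1) = -\tfrac12\, w(\bz_1)^T(\Theta^{-1} - \beta^2 Q_\beta)\,w(\bz_1)$, where $w(\bz_1) := (\bz_1, G(\bz_1))$ and $Q_\beta := \begin{pmatrix}B\\ C\end{pmatrix} P_\beta \begin{pmatrix}B^T & C\end{pmatrix} \succeq 0$. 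Letting $\beta \to 0$ (so $P_\beta \to I$) recovers $E_0(\bz_1) = -\tfrac12\, w(\bz_1)^T\Theta^{-1} w(\bz_1)$, which is precisely the exponent in \eqref{eqn: conditional measure formula}. Moreover $E_0(\bz_1) \le -\tfrac{\lambda}{2}|w(\bz_1)|^2 \le -\tfrac{\lambda}{2}|\bz_1|^2$ with $\lambda := \lambda_{\min}(\Theta^{-1}) > 0$, so $\exp(E_0)$ is integrable and $\breve\eta_0$ is well-defined for any measurable $G$.

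Third, I would upgrade pointwise convergence to uniform convergence of the normalized densities. The crucial structural fact is that $E_\beta - E_0 = \tfrac{\beta^2}{2} w^T Q_\beta w \ge 0$ with $Q_\beta \preceq \kappa I$ uniformly in $\beta$, whence $0 \le \exp(E_\beta) - \exp(E_0) = \exp(E_0)\big(\exp(\tfrac{\beta^2}{2}w^T Q_\beta w)-1\big) \le \exp(-\tfrac{\lambda}{2}|w|^2)\big(\exp(\tfrac{\beta^2\kappa}{2}|w|^2)-1\big)$. Setting $t = |w|^2$ reduces the supremum over $\bz_1$ to $\sup_{t\ge 0}(e^{-at}-e^{-bt})$ with $a = \tfrac12(\lambda-\beta^2\kappa)$ and $b = \tfrac{\lambda}{2}$, which is elementary and tends to $0$ as $\beta \to 0$. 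Hence $\sup_{\bz_1}\big|\exp(E_\beta)-\exp(E_0)\big| \to 0$; combined with $Z_\beta \to Z_0 > 0$ (dominated convergence, using that $\exp(E_\beta)$ decreases to $\exp(E_0)$ as $\beta \downarrow 0$ and is dominated by $\exp(E_{\beta_0})$ for small fixed $\beta_0$), the splitting $p_\beta - p_0 = Z_\beta^{-1}(\exp(E_\beta)-\exp(E_0)) + \exp(E_0)(Z_\beta^{-1}-Z_0^{-1})$ yields $\sup_{\bz_1}|p_\beta - p_0| \to 0$.

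The main obstacle is precisely this last step: because $G$ is only assumed measurable and may be unbounded, the error $E_\beta - E_0$ is \emph{not} small uniformly in $\bz_1$, so one cannot conclude uniform convergence from a naive Taylor estimate of the exponent. What rescues the argument is that the same quadratic form $w(\bz_1) = (\bz_1, G(\bz_1))$ driving the error also governs the Gaussian decay of the limiting density through $E_0$; the competition between these is captured by the scalar bound $e^{-at} - e^{-bt}$, whose uniform smallness as $a \uparrow b$ is immediate. I expect the only other care-requiring points to be the block algebra producing $E_\beta$ and the verification that $Q_\beta \succeq 0$, both of which are routine.
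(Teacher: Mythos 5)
Your proposal is correct, but after the common first step it takes a genuinely different route from the paper's proof. Both arguments start identically: Bayes' rule gives the unnormalized density of $\breve{\eta}_\beta$ as $\int \exp\bigl(-\tfrac12 (\bz_1,\bz_2)\,\Theta^{-1}(\bz_1,\bz_2)^T\bigr)\exp\bigl(-|\bz_2-G(\bz_1)|^2/(2\beta^2)\bigr)\,\dd\bz_2$. The paper then changes variables $\bz_2 = G(\bz_1)+\bz_3$ and treats the Gaussian factor $\rho_\beta(\bz_3)$ as a mollifier: writing $g(\bz_1,\bz_3)$ for the prior factor, it bounds the error $\bigl|\int g\,\rho_\beta\,\dd\bz_3 - g(\bz_1,0)\bigr|$ by $\sup_{\bz_1,\bz_3}|\nabla_{\bz_3}g|\int|\bz_3|\rho_\beta\,\dd\bz_3 \le C\beta$, where the key device for uniformity over arbitrary measurable $G$ is the global bound $\sup_{\bz_1,\bz_3}|\nabla_{\bz_3}g| \le \sup_{\bx}|\bx|\exp\bigl(-\tfrac12\bx^T\Theta\bx\bigr) < \infty$ --- this plays exactly the role that your competition between $e^{-at}$ and $e^{-bt}$ plays. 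You instead integrate out $\bz_2$ exactly by completing the square in the block decomposition of $\Theta^{-1}$, obtain the closed-form exponent $E_\beta = -\tfrac12 w^T(\Theta^{-1}-\beta^2 Q_\beta)w$, and reduce uniform convergence to the elementary scalar fact that $\sup_{t\ge0}(e^{-at}-e^{-bt})\to 0$ as $a\uparrow b$. Your route buys three things: an explicit formula for the finite-$\beta$ marginal, an $O(\beta^2)$ rate in place of the paper's $O(\beta)$, and --- importantly --- an explicit treatment of the normalizing constants $Z_\beta \to Z_0$, a step the paper passes over silently (uniform convergence of unnormalized densities on an unbounded domain does not by itself give convergence of the normalizers; one needs a domination such as your $\exp(E_\beta)\le\exp(-\tfrac{\lambda}{2}|\bz_1|^2)$). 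The paper's route buys brevity: the mollifier argument never needs the blocks $A$, $B$, $C$ or the matrices $P_\beta$, $Q_\beta$ at all. One small point you should make explicit: the integrability of your dominating function $\exp(E_{\beta_0})$ is not automatic but follows from your own estimates, since $E_{\beta_0} \le -\tfrac12(\lambda-\beta_0^2\kappa)|\bz_1|^2$, which requires fixing $\beta_0$ with $\beta_0^2 < \lambda/\kappa$.
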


\begin{proof}
    We can write down the density of $\breve{\eta}_\beta$ using Bayes' formula:
    \begin{equation}
    \begin{aligned}
       & \breve{\eta}_\beta(\dd \bz_1) \propto  \Lambda( \dd \bz_1) \int \exp\left(-\frac{1}{2}\left(\bz_1,\bz_2\right)\Theta^{-1}\begin{pmatrix}\bz_1 \\ \bz_2\end{pmatrix}\right) \exp\left(-\frac{|\bz_2-G(\bz_1)|^2}{2\beta^2}\right) \Lambda( \dd \bz_2)\\
        &\propto \Lambda( \dd \bz_1) \int \exp\left(-\frac{1}{2}\left(\bz_1, G(\bz_1)+\bz_3\right)\Theta^{-1} \begin{pmatrix}\bz_1 \\ G(\bz_1)+\bz_3\end{pmatrix}\right) \exp\left(-\frac{|\bz_3|^2}{2\beta^2}\right) \Lambda( \dd \bz_3)\, ,
    \end{aligned}
    \end{equation}
   where we have used the change of variables $\bz_2 = G(\bz_1) + \bz_3$. Let us define 
   \[g(\bz_1, \bz_3) := \exp\left(-\frac{1}{2}\left(\bz_1,G(\bz_1)+\bz_3\right)\Theta^{-1} \begin{pmatrix}\bz_1 \\ G(\bz_1)+\bz_3\end{pmatrix}\right) \, , \] so that we can write
 $\breve{\eta}_{\beta}({\rm d}\bz_1) \propto \Lambda( \dd \bz_1) \int  g(\bz_1,\bz_3) \rho_{\beta}(\bz_3)  \Lambda(\dd \bz_3)$  where $\rho_{\beta}$ is the density of a Gaussian random variable with mean $0$ and covariance $\beta^2I$. As $\rho_{\beta}$ is a mollifier, it holds that 
       $\lim_{\beta \to 0} \int  g(\bz_1,\bz_3) \rho_{\beta}(\bz_3) \Lambda( \dd \bz_3) = g(\bz_1,0)$ for any $\bz_1$; here such convergence is also uniform for all $\bz_1$    which 
       yields the uniform convergence of the density of $\breve{\eta}_{\beta}$ to that of $\breve{\eta}_0$, as $\beta \to 0$ as desired.
       
       To verify the claimed uniform convergence above, consider
   \begin{equation*}
       \begin{aligned}
           \Bigl|\int  g(\bz_1,\bz_3) \rho_{\beta}(\bz_3) 
           \Lambda(\dd \bz_3) - g(\bz_1,0)\Bigr| \leq & \int  \bigl|g(\bz_1,\bz_3)-g(\bz_1,0)\bigr| \rho_{\beta}(\bz_3) \Lambda( \dd \bz_3) \\\leq &\sup_{\bz_1,\bz_3} \bigl|\nabla_{\bz_3} g(\bz_1,\bz_3)\bigr| \int |\bz_3| \rho_{\beta}(\bz_3) \Lambda( \dd \bz_3) 
           \leq C \beta\, ,
       \end{aligned}
   \end{equation*}
   where $C$ is a universal constant 
   that depends only on the dimension $N$ and the eigenvalues of $\Theta$, but 
   independent of $\bz_1,\bz_3$ and $\beta$ since
   \begin{equation*}
       \begin{aligned}
           \sup_{\bz_1,\bz_3} \bigl|\nabla_{\bz_3} g(\bz_1,\bz_3)|&\leq \sup_{\bz_1,\bz_3} \Bigl|\Theta^{-1} \begin{pmatrix}\bz_1 \\ G(\bz_1)+\bz_3
\end{pmatrix}\Bigr|\cdot |g(\bz_1,\bz_3)|  \\ 
& \leq \sup_{\bx} |\bx| \exp\Bigl(-\frac{1}{2}\bx^T\Theta \bx\Bigr) \leq C_1.
       \end{aligned}
   \end{equation*}
   Here $C_1 \ge 0 $ is a constant that also  depends on $N$ 
   and the spectrum of $\Theta$.
   Moreover, by the standard  moment formula for Gaussian distributions, it holds that $\int |z_3| \rho_{\beta}(\bz_3) \Lambda( \dd \bz_3) \leq C_2\beta$. Taking $C=C_1C_2 \ge 0$ leads to the desired result. 
{}
\end{proof}

Since \Cref{prop: density of conditionals} gives a closed form  expression for the Lebesgue density of the conditional 
measure $\eta_0$, we can use standard algorithms, such 
as those discussed in \Cref{ssec:1}, to (approximately) sample this measure. Notably, letting $\bz_1^\dagger$ denote the  mode of $\eta_0$,  the Gauss-Newton approximation to \eqref{eqn: conditional measure formula} will now correspond to the measure
\begin{equation}
   \begin{aligned}
    &\frac{\dd \tilde{\eta}_0}{\dd \Lambda} (\bz_1)\\
    & \propto \exp\left(-\frac{1}{2}\left(\bz_1,G(\bz_1^\dagger) + \nabla G(\bz_1^\dagger)(\bz_1 - \bz_1^\dagger) \right)\Theta^{-1} \begin{pmatrix}\bz_1 \\ G(\bz_1^\dagger) + \nabla G(\bz_1^\dagger) (\bz_1 - \bz_1^\dagger)\end{pmatrix}\right)\,.
   \end{aligned}
\end{equation}

\subsection{Numerical Experiments} \label{sec:numerics}
Our numerical experiments contain two parts: The first part investigates 
the Laplace and Gauss-Newton approximations introduced 
in \Cref{ssec:1,ssec:2}, for (approximately) 
sampling the posterior and conditional distributions. 
The second part applies our methodology to GP-PDE solvers for 
example nonlinear PDEs.
In \Cref{sec:numerics-Demonstration of Consistency}, 
we compare, through numerical experiments, MCMC, the 
Laplace approximation and its Gauss-variant; we show that, on the examples considered,
the Laplace and Gauss-Newton approximations are good approximations to MCMC in certain regimes as the posteriors 
concentrate around the true values of the parameter, making Gauss-Newton a good 
approximation to Laplace.
We apply our methodology to perform UQ as a proxy for error estimation for 
GP-PDE solvers in \Cref{sec:numerics-Using Posterior for Error Estimates}. In \Cref{sec:numerics-Using Posterior for Sampling Informative Points}, we use 
UQ estimates for adaptive selection of collocation points for the solver.

\subsubsection{Laplace vs 
Gauss-Newton}
\label{sec:numerics-Demonstration of Consistency}
In this subsection, we numerically demonstrate, in a nonlinear elliptic PDE example, the accuracy of Laplace and Gauss-Newton approximations  when compared to (the viewed as gold standard) MCMC algorithms.
We consider the PDE \eqref{elliptic-PDE}  with $d=2$ and $\tau(u) = 10 u^3$ 
and choose the ground truth solution $u^\dagger(\bx) = \sin (\pi \bx_1)\sin(\pi \bx_2) + \sin(3\pi \bx_1)\sin(3\pi \bx_2)$ and determine the right-hand 
side $f$ which gives this solution, noticing that the Dirichlet boundary conditions are readily satisfied 
by the prescribed solution. We take $J$ collocation points on a uniform 
grid  in the interior of the domain and $M$ uniform  points on the boundary.
We denote the interior points by $\bx_1,...,\bx_J$ and the boundary points by $\bx_{J+1},...,\bx_{M}$. 
For our experiments we took 
$(J,M)= \{ (16,25), (49, 64), (81,100) \}$.
Following \Cref{subsec:motivation}, we then define $F(\bphi(u))$ and $\by$, 
based on these collocation points and on $f$, such that identity $F(\bphi(u)) = \by$ encodes the PDE constraint at the collocation points. 

Suppose $u$ is a priori distributed according to the GP $\mu = N(0,\mathcal{K})$ where $\mathcal{K}$ is the integral operator corresponding to the Mat\'ern kernel with regularity parameter $\nu = 7/2$ \cite[Sec.~4.2.1]{williams2007gaussian}. Then the conditional $\mu_0^{\by}$ encodes information
about  the solution to the PDE. We compute the conditional mode $u_0^{\by}$ using the Gauss-Newton optimization algorithm of \cite{chen2021solving}. 
Using this  mode we further compute the Laplace and Gauss-Newton approximations to 
$\mu^\by_0$ following the approach of \Cref{ssec:2}. 

In \Cref{fig-mcmc-laplace} (top row), we compare the true solution of the PDE to the
MAP estimator $u^\by_0$ and the
posterior mean of the MCMC samples with $(J, M) = (81, 100)$. We observe that the MAP and the MCMC mean 
are comparable approximations to the true solution, indicating that the posterior measure 
is concentrated around the truth.
This claim is further supported by \Cref{fig-mcmc-laplace} (bottom row) where we compare the 
pointwise standard deviations computed by MCMC, Gauss-Newton, and Laplace. We see good agreement 
between all three methods, suggesting that (a) the posterior is close to being Gaussian and (b) 
the Gauss-Newton approximation is as good as Laplace. 
In 
 \Cref{table-error-of-Gaussian-approx} we further compare the relative $L^2$ error 
 between the MCMC mean and standard deviations with those of Laplace and Gauss-Newton approximations.
 We observe that 
 not only does the MAP converge to the MCMC mean but that Laplace and Gauss-Newton approximations 
 to the standard deviation fields converge to that of the MCMC samples. Moreover, the Gauss-Newton and 
 Laplace errors are comparable, with Gauss-Newton achieving  higher errors 
 when collocation points are scarce.

In \Cref{fig-kde-comparison}, we evaluate the posterior fields at the location $x = [0.6, 0.4]$
for different mesh sizes 
and  compare the  kernel density estimator of the MCMC samples to that of Laplace and 
Gauss-Newton approximations. Here we observe that (a) the Laplace and Gauss-Newton 
approximations are very close to each other and (b) as we refine the mesh, these two 
approximations converge to the MCMC posterior. We observed this behavior consistently 
at other locations as well, supporting the claim that the posterior is 
nearly Gaussian around the MAP.

\begin{figure}[htp]
    \centering
    \begin{overpic}[width=0.9\textwidth]{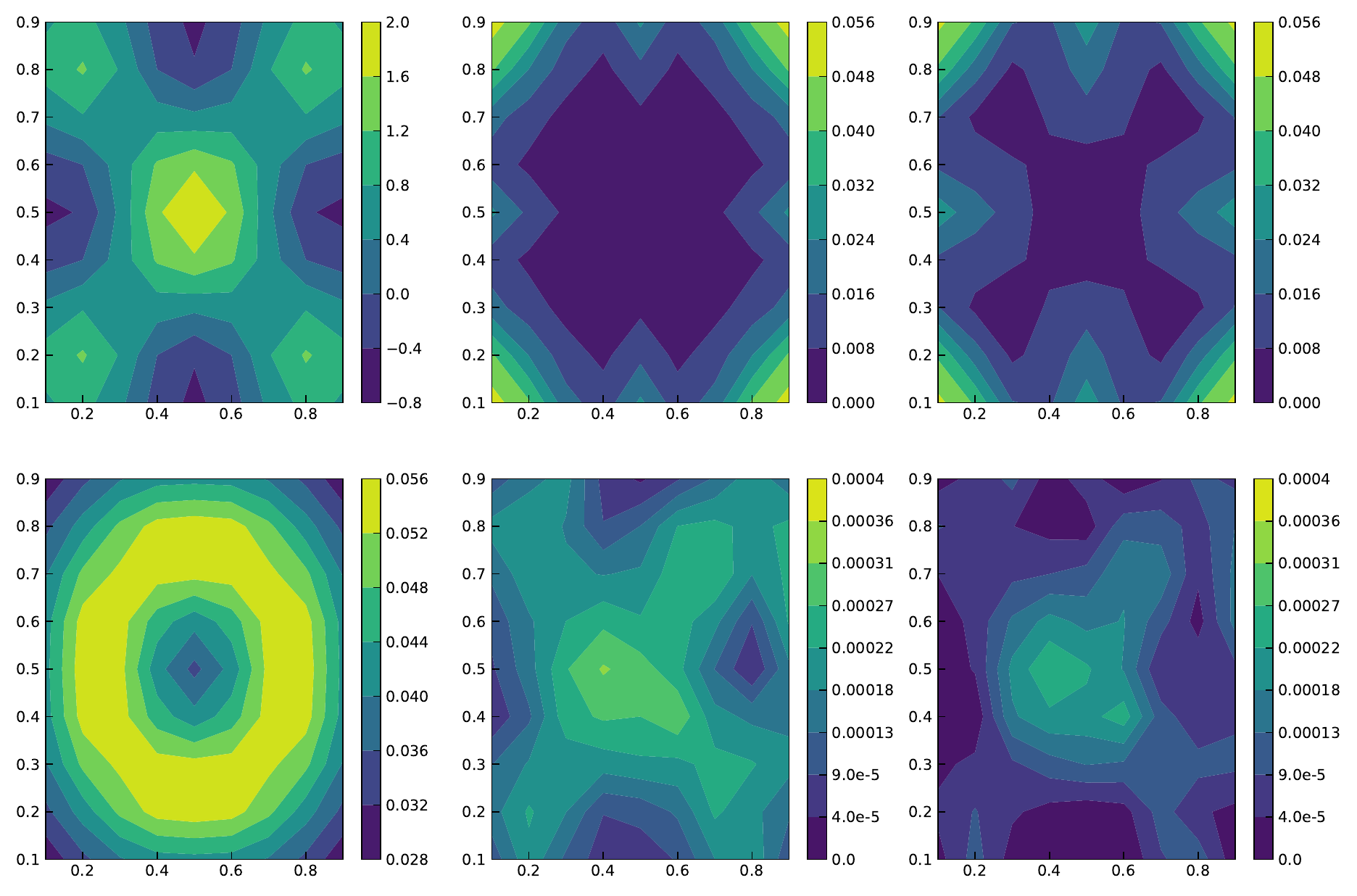}
    \put(7,65){true solution}
    \put(37,65){err: MCMC mean}
    \put(74,65){err: MAP}
    \put(7,32){MCMC std field}
    \put(38,32){err: GN std field}
    \put(68,32){err: Laplace std field}
    \end{overpic}
    \label{fig-mcmc-laplace}
    \caption{Numerical results for nonlinear elliptic \eqref{elliptic-PDE} as described in \Cref{sec:numerics-Demonstration of Consistency} with $(J,M)=(81,100)$ collocation points. Top row: True solution, error of  MCMC mean, and error of the MAP estimator obtained by the GP-PDE methodology. Bottom row: standard deviation field of MCMC samples followed by  its difference from the standard deviation fields obtained using the Gauss-Newton and Laplace approximations.}
    \end{figure}

\begin{figure}[htp]
    \centering
    \begin{overpic}[width=0.31 \textwidth]{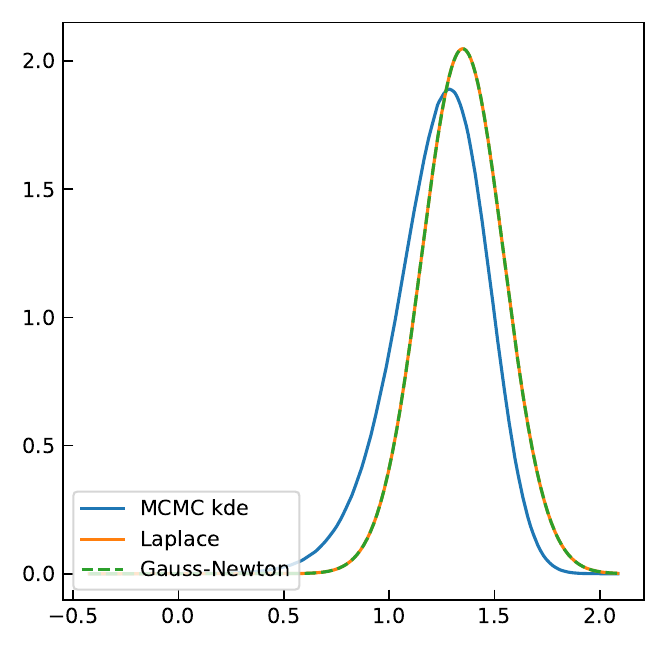}
        \put(25,98){\footnotesize $(M,J) = (16,25)$}
    \end{overpic}
    \begin{overpic}[width=0.3 \textwidth]{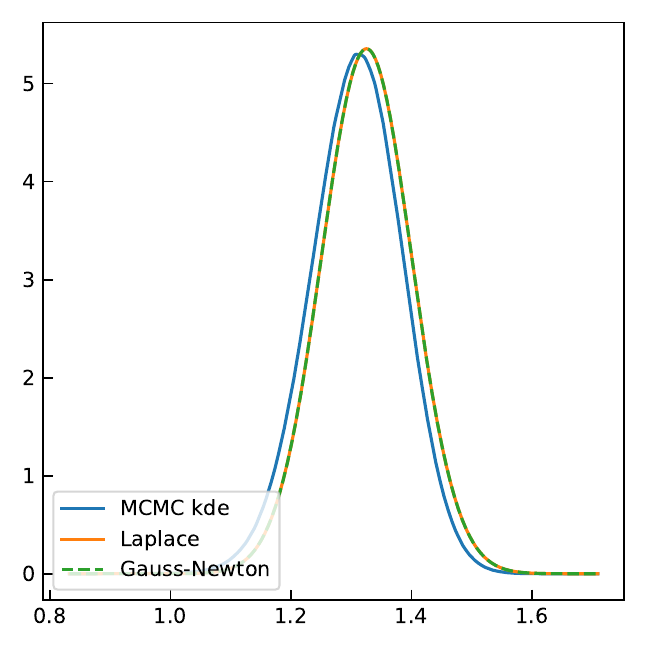}
        \put(25,100){\footnotesize $(M,J) = (49,64)$}
    \end{overpic}
    \begin{overpic}[width=0.3 \textwidth]{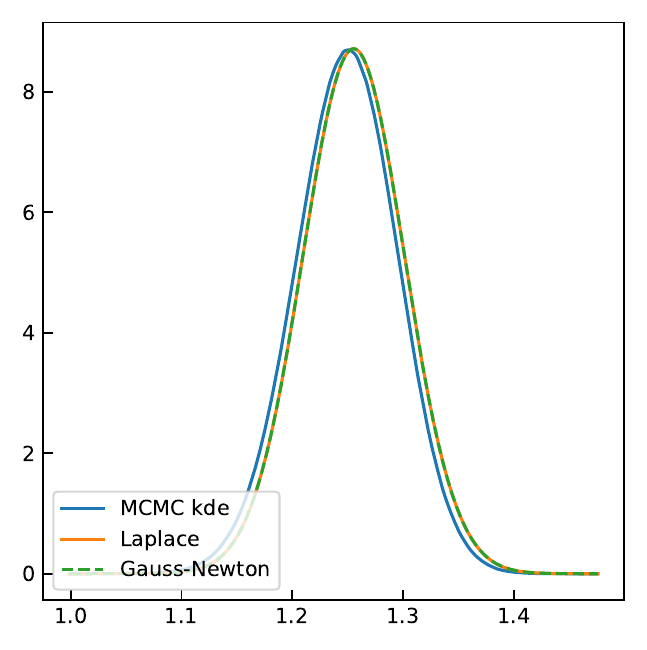}
        \put(25,100){\footnotesize $(M,J) = (81,100)$}
    \end{overpic}
    \label{fig-kde-comparison}
    \caption{Pointwise numerical results for the nonlinear elliptic PDE \eqref{elliptic-PDE} as described in \Cref{sec:numerics-Demonstration of Consistency}. Here we compared the conditional distribution of the solution to its various  approximations at a single point $[0.6,0.4]$ with (Left) $(M,J)=(16,25)$, (middle) $(M,J)=(49,64)$, and (right) $(M,J) =(81,100)$ 
    collocation points.}
    \end{figure}

\begin{table}[htp]
\label{table-error-of-Gaussian-approx}
\footnotesize
\centering
\begin{tabular}{cccc}
\hline
 Relative $L^2$ error &  $(J,M)= (16,25)$  &$(J, M) = (49, 64)$ & $(J, M) = (81,100)$  \\ \hline
MAP vs MCMC mean& 1.086e-1 & 1.682e-2 & 6.320e-3 \\
(std) Laplace vs MCMC & 6.360e-2 &5.557e-3 & 2.136e-3 \\
(std) Gauss-Newton vs MCMC & 7.934e-2 & 1.038e-2 & 4.086e-3 \\ \hline
\end{tabular}
\caption{Relative $L^2$ error for the mean and stanfard deviation of the Laplace approximation and its Gauss-Newton variant at sampled points compared with MCMC for the nonlinear 
elliptic PDE \eqref{elliptic-PDE} as described in \Cref{sec:numerics-Demonstration of Consistency}.}
\end{table}

\subsubsection{UQ for GP-PDE}
\label{sec:numerics-Using Posterior for Error Estimates}
One of the advantages of the GP-PDE perspective is that the conditional/posterior uncertainties 
can be readily computed as a priori indicators of the performance of the algorithm.
Here we will investigate the usefulness of such uncertainties in the context of 
our nonlinear elliptic PDE \eqref{elliptic-PDE} as well as Burgers' equation.

\paragraph{Nonlinear Elliptic PDE}
We  start by considering the nonlinear elliptic PDE \eqref{elliptic-PDE} once more with 
$\tau(u) = \alpha u^3$ along with prescribed solution $u^\dagger(\bx) = \sin(\bx_1)\sin(\bx_2) + \sin(10\bx_1)\sin(a\bx_2)$ with scalar parameters $\alpha, a >0$ to be chosen later.  
We solve the PDE using $(J, M) = (16, 25)$ with the prior $\mu = N(0, \mathcal{K})$
with $\mathcal{K}$ being the $7/2$-Mat\'ern kernel. To estimate the conditional mode and 
standard deviations we ran three steps of the Gauss-Newton algorithm for different choices of $(\alpha, a)$ as shown in \Cref{fig-UQ-band}. We observe that in the linear PDE setting where $\alpha= 0$, 
the resulting posterior standard deviation field is very smooth and is known to be 
independent of the PDE solution and only dependent on the collocation points. As expected, 
maximum standard deviation occurs in the middle of the domain as is often expected in GP regression. 
Interestingly, 
the posterior standard deviation fields appear to change noticeably with stronger nonlinearities. 
In particular, the maximum uncertainty no longer occurs in the middle of the domain but rather 
over a non-trivial set. 

It is well-known, in the context of GP regression \cite[Thm.5.1]{Owhadi:2014} 
that if $u^\dagger$ is the ground truth and $u^\by_0$ is its GP interpolant, that the 
following error bound holds  
\begin{equation}\label{uppser-lower-bound}
    |u^\dagger(\bx)-u^\by_0(\bx)| \leq \|u^\dagger\|_{\mH(\mu)} \sigma(\bx)\qquad \forall \bx \in \Omega,
\end{equation}
where $\sigma(\bx)$ is the standard deviation field of the conditioned GP and $\| \cdot \|_{\mH(\mu)}$
denotes the Cameron-Martin/RKHS norm of $u^\dagger$ corresponding to the GP prior $\mu$. It is 
therefore natural to investigate, numerically, whether this error bound remains valid in the 
case of the GP-PDE solver. Since in practice we do not have access to $\| u^\dagger \|_{\mH(\mu)}$,
we replace it with the Cameron-Martin norm of the MAP, i.e., $\| u^\by_0 \|_{\mH(\mu)}$.

In \Cref{fig-UQ-slice-elliptic-PDE} we show a slice of the PDE solution $u^\dagger$ along with 
the GP-PDE solution and the requisite error bounds computed 
using the standard deviation fields for our nonlinear elliptic PDE example. We observe that 
in all three cases, the conditional mode $u^\by_0$ is a good approximation to $u^\dagger$
while the upper and lower bounds computed via \eqref{uppser-lower-bound} always contain both 
the numerical and true solutions. However, we note that the computed error bands appear to 
be too large compared to the actual error of the numerical solution.


\begin{figure}[htp]
    \centering
    \begin{overpic}[width=0.32 \textwidth]{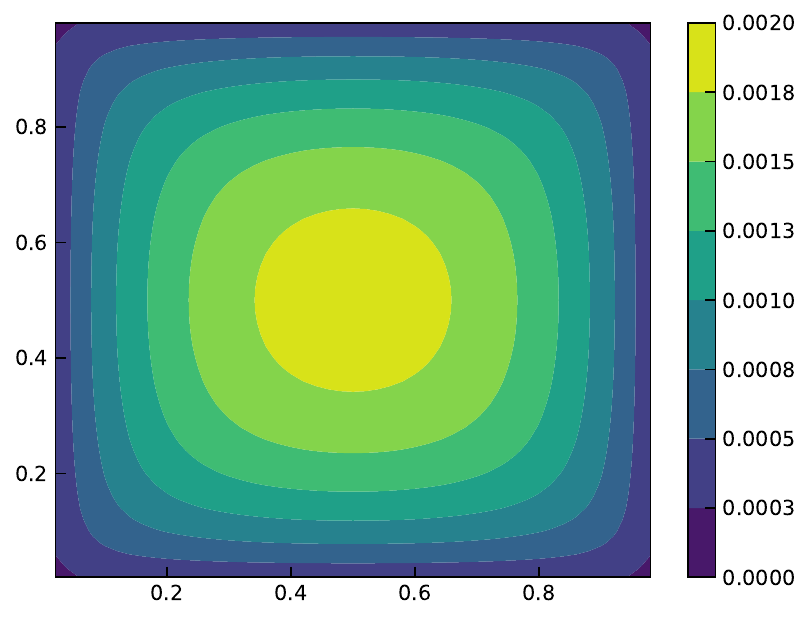}
        \put(37, 77){\footnotesize $\alpha = 0$}
    \end{overpic}
        \begin{overpic}[width=0.32 \textwidth]{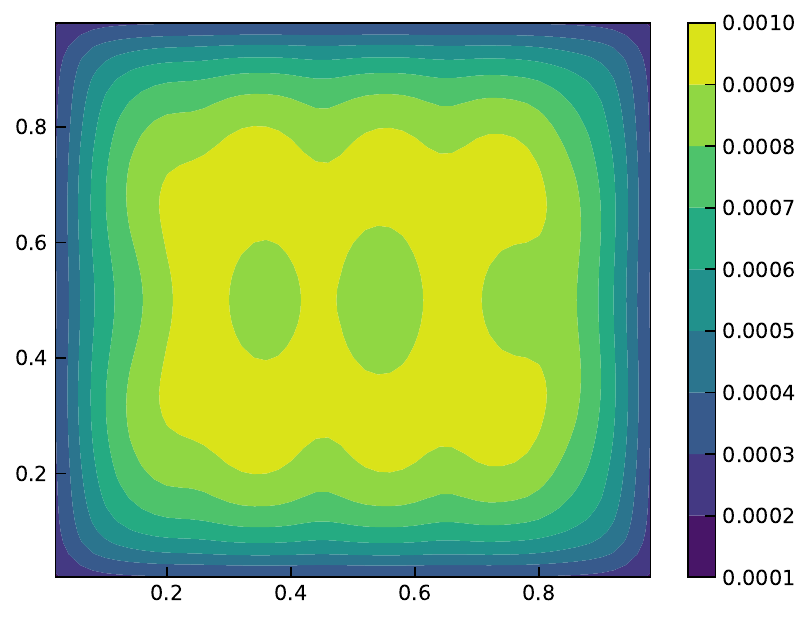}
        \put(24, 77){\footnotesize $(\alpha, a) = (10, 3)$}
    \end{overpic}
        \begin{overpic}[width=0.32 \textwidth]{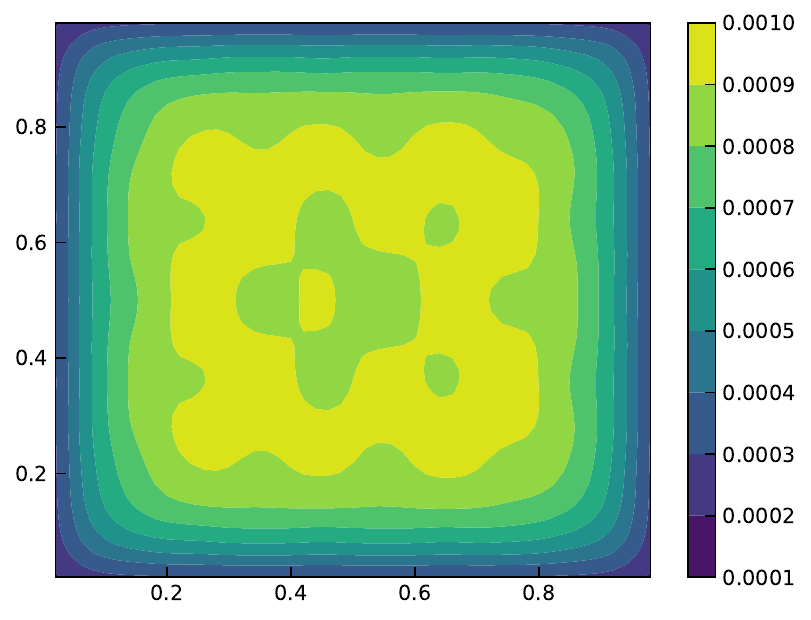}
        \put(24, 77){\footnotesize $(\alpha, a)  = (10, 7)$}
    \end{overpic}    
    \label{fig-UQ-band}
    \caption{Comparing posterior standard deviation fields for 
    the nonlinear elliptic PDE \eqref{elliptic-PDE} as described in  \Cref{sec:numerics-Using Posterior for Error Estimates}. From left to right 
    the panels show the standard deviation fields for increasingly stronger nonlinearities.}
    \end{figure}

\begin{figure}[htp]
    \centering
    \begin{overpic}[width=.32 \textwidth]{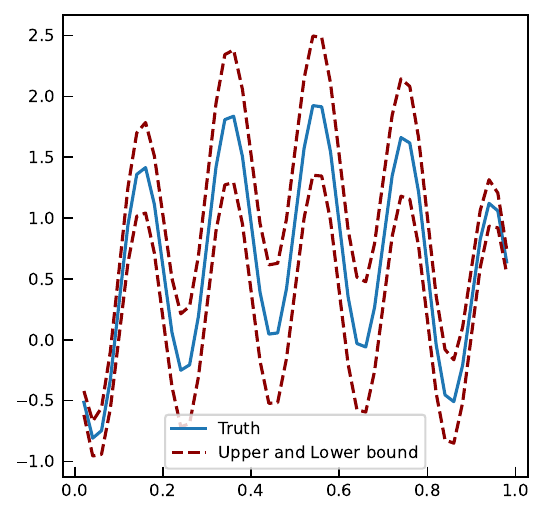}
        \put(35,95){\footnotesize $(\alpha,a) = (0,3)$}
    \end{overpic}
    \begin{overpic}[width=.32 \textwidth]{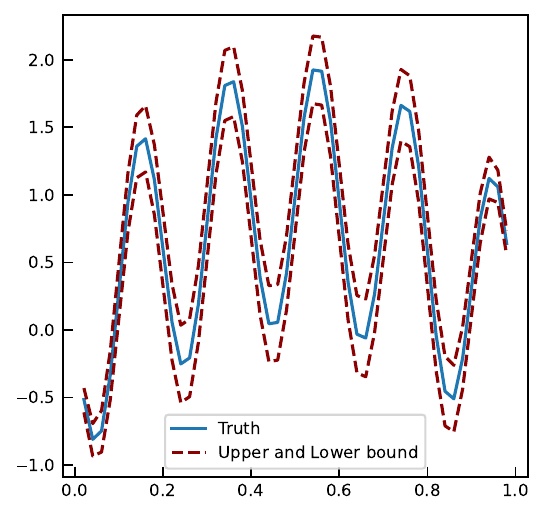}
        \put(35,95){\footnotesize $(\alpha, a) = (10, 3)$}
    \end{overpic}
    \begin{overpic}[width=.32 \textwidth]{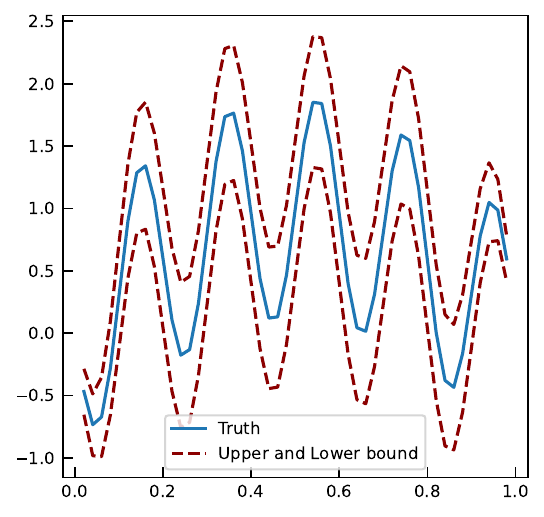}
        \put(35,95){\footnotesize $(\alpha, a) = (10, 7)$}
    \end{overpic}
    \label{fig-UQ-slice-elliptic-PDE}
    \caption{Truth and the upper and lower error bound obtained by the GP-PDE method, for the slice $\bx_2=0.5$, in the nonlinear elliptic PDE \eqref{elliptic-PDE} as described in \Cref{sec:numerics-Demonstration of Consistency}. From left to right 
    the panels show the posterior mean with uncertainty bands for increasingly stronger
    nonlinearities.}
    \end{figure}

\paragraph{Burgers' Equation}
Next we consider the viscous Burgers equation:
\begin{equation}
    \label{Burgers-proto-PDE}
    \begin{aligned}
      \partial_t u +  u \partial_x u  - 0.01  \partial_x^2 u  &= 0, \quad \forall
      (x,t) \in (-1, 1)  \times (0,1]\, , \\
      u(x, 0) & = - \sin( \pi x)\, , \\
      u(-1, t) & = u(1, t)  = 0\, .
  \end{aligned}
  \end{equation}
We solved this equation using the space-time GP-PDE approach of  \cite{chen2021solving}. 
Collocation points were uniformly distributed on a regular grid with time step size ${\rm d}t = 0.05$ and
spatial step size ${\rm d}x=0.0125$. {The kernel of the covariance function of the GP is chosen as the anisotropic Gaussian kernel, same as \cite{chen2021solving}:}
\begin{equation}
    K\Bigl((x,t),(x',t'); \sigma\Bigr) = \exp\Bigl(-\sigma_1^{-2}(x-x')^2-\sigma_2^{-2}(t-t')^2\Bigr)\, 
\end{equation}
 with  $\sigma = (1/20,1/3)$.
We ran $15$ steps of Gauss-Newton to obtain the conditional mode and the corresponding  approximation to the conditional covariance matrix. In \Cref{fig-burgers_contour_adaptive} (left and middle)
we show the GP-PDE solution to the Burgers' equation as well as the posterior standard deviation 
estimated using Gauss-Newton. We clearly observe that the standard deviation is peaked around the 
location of the (near) discontinuity in the solution, indicating that the standard deviation 
field is a good proxy for the adaptive placement of  collocation points. 

\begin{figure}[htp]
    \centering
    \begin{overpic}[width=0.32\textwidth]{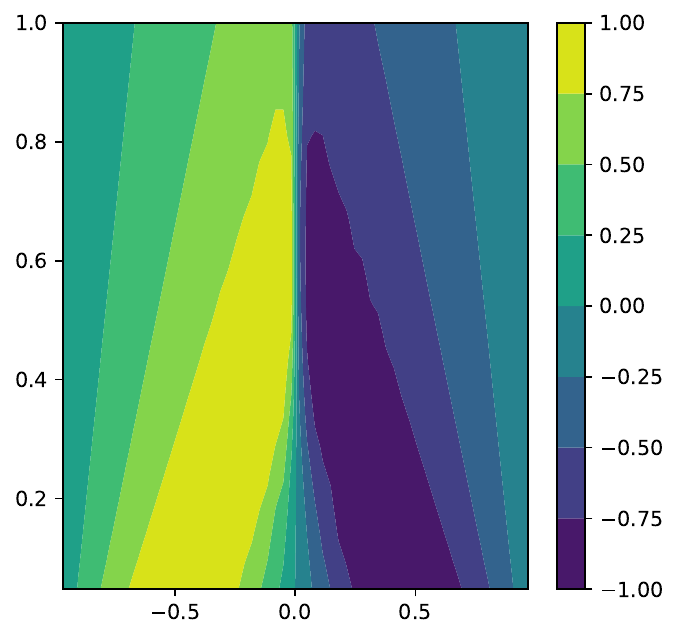}
    \put(27,95){\footnotesize true solution}
    \end{overpic}
    \begin{overpic}[width=0.32\textwidth]{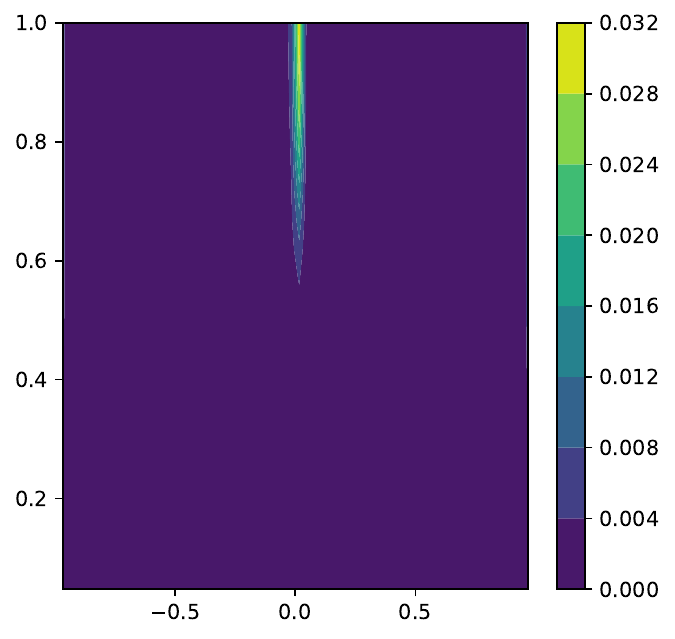}
    \put(20,95){\footnotesize Gauss-Newton std}
    \end{overpic}
    \begin{overpic}[width=0.32\textwidth]{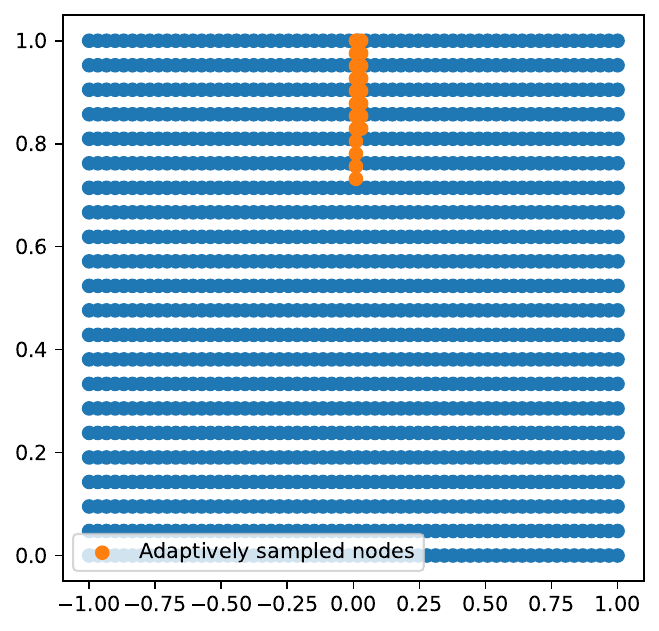}
    \put(26,96){\footnotesize adaptive sampling}
    \end{overpic}
    
    \label{fig-burgers_contour_adaptive}
    \caption{Numerical experiments for the Burgers' PDE \eqref{Burgers-proto-PDE}. Left: Contour plot of the MAP estimator of the solution in space-time; Middle: Contour plot of the conditional standard deviation field; Right: Adaptively sampled collocation points 
    guided by areas of concentrated uncertainty.}
    \end{figure}


\subsubsection{Adapting Collocation Points}
\label{sec:numerics-Using Posterior for Sampling Informative Points}
Based on our observation in the previous section (e.g. 
\Cref{fig-burgers_contour_adaptive}) it is of interest to 
investigate whether the UQ estimates from the posterior/conditional 
measure can be used for the adaptation of collocation points for PDE solvers.
For example, we may add more collocation points in areas of 
maximum variance under the posterior/conditional on the solution of 
the PDE.


For our first experiment we considered the Burgers equation \eqref{Burgers-proto-PDE} which was originally solved on a uniform grid 
and added 30 new collocation points in the region of  maximum posterior
variance which happens to surround the (smoothed) shock. This produces a non-uniform grid of collocation 
points as shown on the right panel of \Cref{fig-burgers_contour_adaptive}. In our experiments we observed that 
adding these new points leads to a factor $2$ improvement in
the $L^\infty$ error of the solution at time $t=1$.
This demonstrates the effectiveness of using UQ estimates for adaptive selection of collocation points. We observed that when we continued to select points based on this greedy approach, the improvement in accuracy was less significant and sometimes even numerical instability occurs. We attribute this phenomenon to the use of a global space-time formulation, which overlooks the causality of time dependent PDEs and could lead to numerical challenges. This could also be attributed to the ill-conditioning of the involved kernel matrices associated to a large number of points packed in a small region of the domain which further 
warrants the use of a nugget term.

For our second experiment we return to the 
nonlinear elliptic PDE \eqref{elliptic-PDE} with 
$\tau(x) = 10 x^3$. We prescribe the exact solution 
$u(\bx) = 2^{4p}  \bx_1^{2p}(1-\bx_1)^p\bx_2^{2p}(1-\bx_2)^p$
 with  $p=10$ as shown in \Cref{fig-sample-points}; this example is designed to have a highly localized feature around the location $(2/3,2/3)$. We then solve the PDE and adaptively add collocation points 
 as follows: 
 (1) Start with $100$ uniformly sampled collocation points in the interior and on the boundary of the unit box;
 (2) compute the Gauss-Newton approximation to the posterior of the 
 solution and sample $50$ new collocation points in areas of largest 
 posterior variance; (3) repeat step (2) for 10 iterations 
 to get a total of $600$ collocation points in the interior.

In the bottom left panel of \Cref{fig-sample-points}  we show an instance of 
the collocation points obtained by the above procedure which 
may be compared with the top right panel, depicting a uniform 
set of collocation points. We see that the posterior 
adapted points are blind to the concentrated features of 
the solution to the PDE, contrary to our early example for Burgers' 
equation. We further modified our adaptive 
sampling of the collocation points to place new points in regions of 
large equation residual which produced the bottom right panel of 
\Cref{fig-sample-points}. We observe that this new strategy leads to 
collocation points that are clustered around the main feature of 
the solution. We present $L^2$ and $L^\infty$ errors of the 
solutions obtained by the three sampling strategies in \Cref{table-sample-points}, showing that the conditional variance adaptation scheme 
leads to an order of magnitude improvement in the error over uniform 
points while residual adaptation leads to yet another order of magnitude improvement. 

\begin{figure}[htp]
    \centering    
    \includegraphics[width=0.45\textwidth]{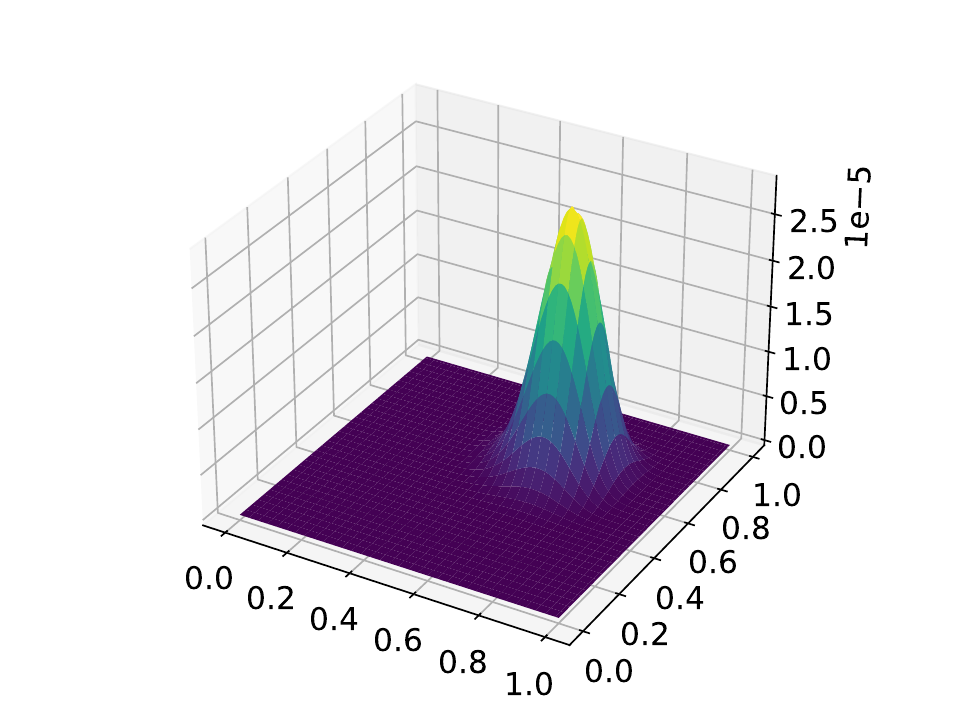}
    \includegraphics[width=0.45\textwidth]{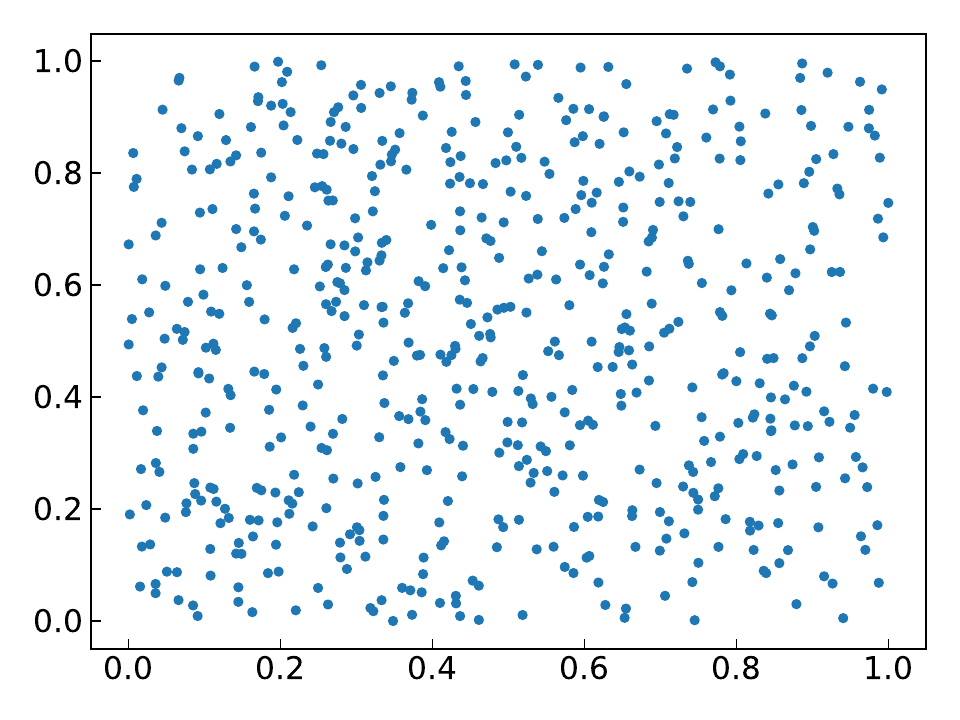}
    \\ \includegraphics[width=0.45\textwidth]{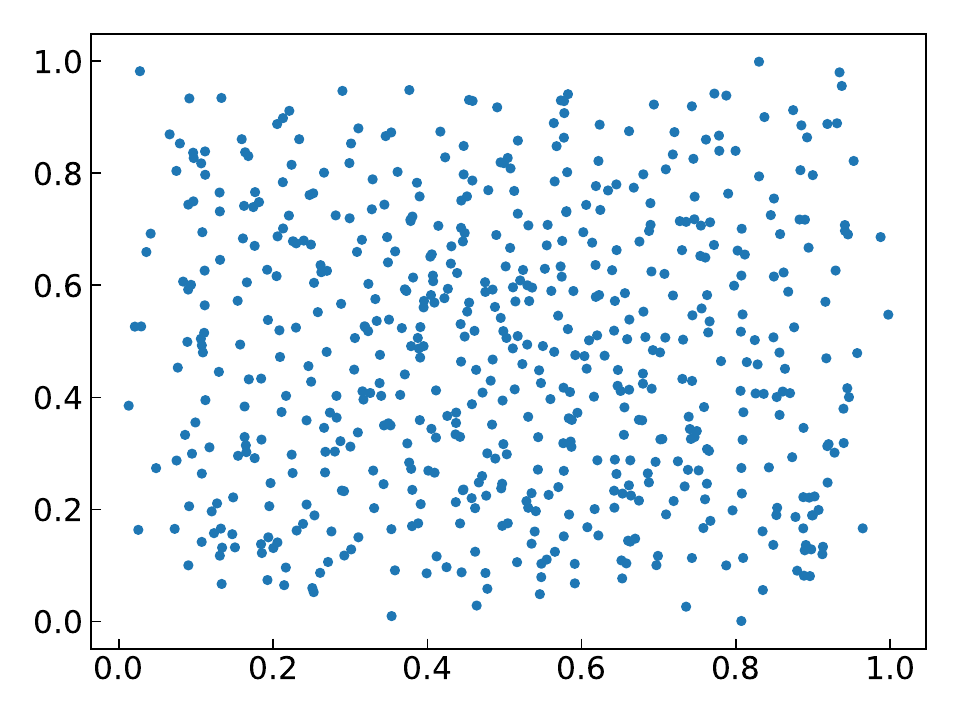} \includegraphics[width=0.45\textwidth]{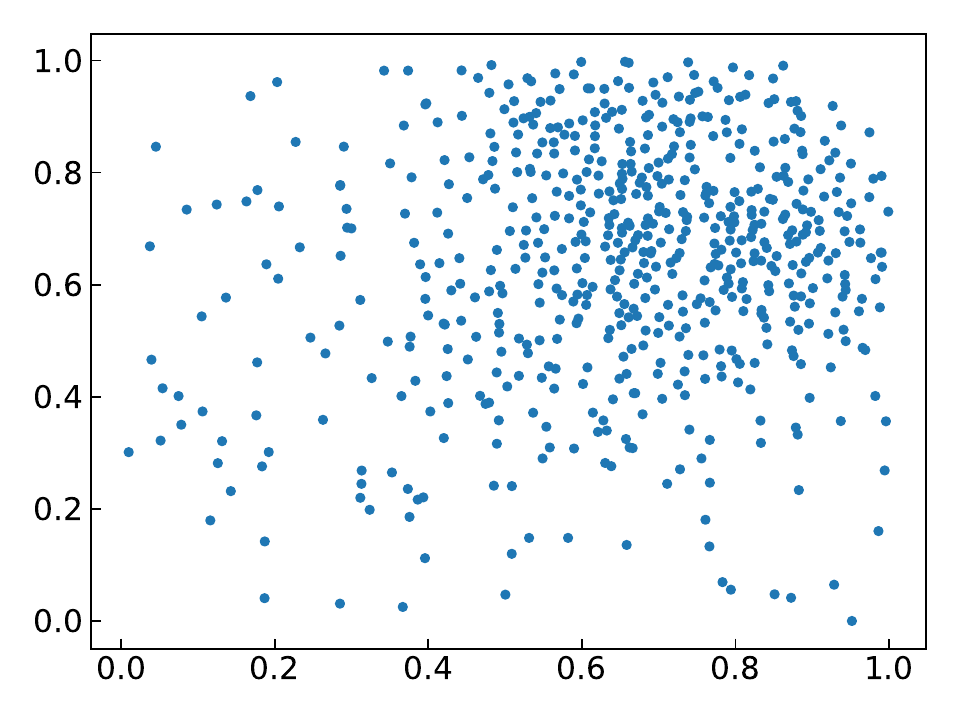}
    \label{fig-sample-points}
    \caption{
    An instance of adaptively selected collocation points for the nonlinear 
    elliptic PDE \eqref{elliptic-PDE} as described in \Cref{sec:numerics-Using Posterior for Sampling Informative Points}.
    Top left: true solution; top right: uniform sampling; bottom left: greedy sampling based on the conditional standard deviation; bottom right: greedy sampling based on equation residues.}
    \end{figure}

    \begin{table}[htp]
\centering
\begin{tabular}{cccc}
\hline
 Sampling strategy & Uniform  & Conditional variance  & Equation residue \\ \hline
Relative $L^2$ error        & 2.337e-2         & 3.345e-3                          & 1.365e-4                        \\
Relative $L^{\infty}$ error & 1.565e-2         & 2.554e-3                          & 1.046e-4                        \\ \hline
\end{tabular}
\caption{Relative $L^2$ and $L^{\infty}$ errors of the numerical solutions 
of the nonlinear elliptic PDE \eqref{elliptic-PDE} as described in \Cref{sec:numerics-Using Posterior for Sampling Informative Points}. Various strategies for adaptive 
sampling of collocation points were compared.  The errors were averaged over $20$ trials.}
\label{table-sample-points}
\end{table}

These experiments show the advantages and potential limitations of using the posterior/conditional variance for adapting collocation points. Interestingly, in the case of Burgers' equation the conditional 
variance captures the interesting structures in the solution while 
this property is not prominent in the case of our nonlinear elliptic PDE.


\section{Conclusions}\label{sec:discussion}
Our focus in this article was the characterization of 
Gaussian measures conditioned on finite nonlinear observations that 
are obtained as the composition of a nonlinear map with a bounded 
and linear operator. We showed that (1) such conditionals 
can be characterized as the limit of posterior measures with 
noisy observations with vanishing small noise standard deviation. We 
showed that this small-noise limiting argument also applied to the 
MAP estimators of the resulting conditionals leading to the novel 
definition of a conditional MAP of a Gaussian measure; (2) 
We showed that  the resulting posteriors/conditional measures 
can be decomposed as the convolution of a Gaussian measure that can 
be identified analytically with a finite-dimensional non-Gaussian measure. 
This decomposition mirrored well-known representer theorems from RKHS theory. 
Item (2) further led us to the design of novel algorithms for 
the simulation of Gaussians conditioned on nonlinear observations by 
focusing computational effort on the non-Gaussian component. 

We applied our results to the particular case of the GP-PDE methodology, 
a collocation  method for solving  nonlinear 
PDEs that models the solution of the PDE as a GP conditioned on the 
PDE constraint at the collocation points. We developed two variational 
inference techniques for simulation of the non-Gaussian component in this 
case under the conjecture that, if the collocation points are sufficiently dense then the non-Gaussian component of the posterior should be approximately 
Gaussian around its MAP. Our numerical experiments confirmed this 
claim. We also investigated the usefulness of the resulting 
uncertainty estimates for improving the accuracy of the PDE solver 
by adaptive selection of collocation points. 

While the GP-PDE setting was the main motivation for our work, our results 
have wide application in the field of inverse problems where Gaussian 
priors are widely used in a function space setting. Here one often 
discretizes the problem and samples the posterior using a 
function space MCMC algorithm. However, our results here suggest that 
significant speed up may be achieved by performing MCMC only on 
the non-Gaussian component and directly simulating the Gaussian component, for example by computing the underlying precision matrix of the prior. Our experiments also suggest that this non-Gaussian component may be well approximated by a variational technique such as a Laplace approximation. 
We also observed that our Gauss-Newton approximation (which is only first order) appears to work well in practice, a topic that warrants more detailed theoretical analysis. 

\section*{Acknowledgments}
{All four authors acknowledge support from the Air Force Office of Scientific Research under MURI award number FA9550-20-1-0358 (Machine Learning and Physics-Based Modeling and Simulation). Additionally HO acknowledges support by the Department of Energy under award number DE-SC0023163 (SEA-CROGS: Scalable, Efficient and Accelerated Causal Reasoning Operators, Graphs and Spikes for Earth and Embedded Systems). The work of AMS is also supported by a Department of Defense 
Vannevar Bush Faculty Fellowship.
BH is also supported by the National Science Foundation grant DMS-2208535 (Machine Learning for Bayesian Inverse Problems). YC is also supported by a Courant Instructorship.}

\begin{appendices}

\section{On Optimal Recovery, Game Theory, and Probabilistic Numerics}
\label{appendix:optimal-recovery}

As presented in \cite{owhadi2019operator}, the framework of optimal recovery of Micchelli and Rivlin \cite{micchelli1977survey} provides a natural setting for understanding the correspondence between numerical approximation  and Bayesian inference, which involves the counter-intuitive modeling of a perfectly known function as a sample from a random process.
To describe this consider a Banach space $(\mathcal{B},\|\cdot\|)$ and write $[\cdot,\cdot]$ for the duality product between $\mathcal{B}$ and its dual space $(\mathcal{B}^*,\|\cdot\|_*)$.
When $\mathcal{B}$ is infinite (or high) dimensional, as conceptualized in Information
Based Complexity \cite{Traub1988} (the branch of computational complexity founded on the observation that
numerical implementation requires computation with partial information and limited resources), one cannot directly compute with $u\in \mathcal{B}$ but only with a finite number of \emph{features} of $u$. The type of features we consider here are represented
 as a vector $\Phi(u):=\big([\phi_1,u],\ldots,[\phi_m,u]\big)$
 corresponding to  $m$  linearly independent measurements  $\phi_1,\ldots,\phi_m\in \mathcal{B}^*$. The objective is to  recover/approximate $u$ from  the partial information contained in the feature vector $\Phi(u)$. Then, using the relative error in $\|\cdot\|$-norm as a loss, the classical numerical analysis approach  is to approximate $u$ with the minimizer $v^\dagger$ of
\begin{equation}\label{eqoprec}
\min_{v}\max_{u}\frac{\|u-v(\Phi(u))\|}{\|u\|}\,,
\end{equation}
where the maximum is taken over all $u\in \mathcal{B}$ and the minimum is taken  over all possible functions $v$ of the $m$ linear measurements.
The minimax approximant is (\cite{micchelli1977survey} and \cite[Chap.~18]{owhadi2019operator}) then 
\begin{equation}\label{eqminuhsobfirstdeb}
v^\dagger(y)=\operatorname{argmin}\begin{cases}
\text{Minimize }\|v\|\\
\text{Subject to }v\in \mathcal{B}\text{ and }\Phi(v)=y\,.
\end{cases}
\end{equation}
Furthermore, the minmax problem \eqref{eqoprec} can be viewed as the adversarial zero sum game
in which Player I chooses an element $u$ of the linear space $\mathcal{B}$ and Player II
 (who does not see $u$) must approximate Player I's  choice based on
 seeing the finite number of linear measurements $\Phi(u)$ of $u$.
The  function $(u,v) \mapsto \frac{\|u-v(\Phi(u))\|}{\|u\|}$
 has no saddle points, so to identify a minmax  solution as a saddle point
  one can  proceed, as in 
  Wald's decision theory
   \cite{Wald:1945}, evidently influenced by von Neumann's  theory of games \cite{VNeumann28}, by
  introducing mixed/randomized strategies and lift the problem to probability measures  over all possible choices for players I and II. 
For the lifted version of the game, the optimal mixed strategy of Player I
is a cylinder measure defined by the norm $\|\cdot\|$ and the  optimal strategy of Player II is a pure strategy because $\|\cdot\|$ is convex. Furthermore if the norm $\|\cdot\|$ is quadratic, then the 
 optimal strategy of Player I is a centered Gaussian field whose covariance operator $Q\,:\mathcal{B}^*\to \mathcal{B}$ is defined by the norm $\|\cdot\|$ and the identity $\|\phi\|_*^2=[\phi, Q \phi]$.
For further references on  Gaussian measures on infinite-dimensional spaces, we refer to
Bogachev \cite{bogachev1998gaussian} and Maniglia and Rhandi \cite{maniglia2004gaussian} (for Hilbert spaces).
See also Janson\index{Janson}  \cite{janson1997gaussian} for  Gaussian fields on Hilbert spaces.
The application of optimal recovery, initially focused on solving linear PDEs \cite{Harder:1972, Duchon:1977, Owhadi:2014}, has been extended to nonlinear PDEs in \cite{chen2021solving} and to general computational graph completion problems in \cite{owhadi2022computational}.

\section{Proof of Theorem~\ref{prop:conditional-mode-optimization}}
\label{appendix:Proof-of-conditional-mode-optimization}

The main ideas required for the proof of 
Theorem~\ref{prop:conditional-mode-optimization} are contained
in Proposition \ref{prop:main-conditional-map}. The proposition
and theorem themselves rest on several lemmas which we collect
together in a preliminary subsection.

First we recall three technical results, concerning small
ball probabilities, from \cite{dashti2013map}.

\begin{lemma}[{\cite[Lem.~3.6]{dashti2013map}}]\label{lem:small-ball-probability-bound}
  Let $\mu = N(0, \mK)$, $r >0$ and $u \in \mX$. Then there exists a constant $\alpha > 0$ indepenent of $u, r$ so that
  \begin{equation*}
    \frac{\mu(B_r(u))}{\mu(B_r(0))} \le  \exp \left( \frac{\alpha}{2} r^2 \right) \exp \left( - \frac{\alpha}{2}\left( \| u\|_\mX - r\right)^2 \right).
  \end{equation*}
\end{lemma}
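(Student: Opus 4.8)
The plan is to read this as a far-field concentration estimate and reduce the infinite-dimensional statement to a one-dimensional computation. First I would dispose of the easy regime: the ball $B_r(0)$ is symmetric and convex, so Anderson's inequality for centered Gaussian measures gives $\mu(B_r(u)) \le \mu(B_r(0))$ for every $u$, i.e.\ the ratio is at most $1$. Since $(\|u\|_\mX - r)^2 \le r^2$ precisely when $\|u\|_\mX \le 2r$, the right-hand side is $\ge 1$ in that range and the claim is immediate. All the content therefore lies in the regime $\|u\|_\mX > 2r$, where genuine exponential decay must be extracted.

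For that regime I would project onto the direction of the shift. Set $\hat u = u/\|u\|_\mX$ and let $Z = \langle \cdot, \hat u\rangle$, a centered real Gaussian under $\mu$ with variance $\sigma^2 = \langle \mK \hat u, \hat u\rangle \le \|\mK\|$; this motivates the choice $\alpha = 1/\|\mK\|$, so that $1/\sigma^2 \ge \alpha$. Writing $X = Z\hat u + W$ with $W$ the orthogonal complement, one has $\|X - u\|_\mX^2 = (Z - \|u\|_\mX)^2 + \|W\|_\mX^2$, so $B_r(u)$ is exactly $B_r(0)$ translated by $\|u\|_\mX$ in the $Z$-coordinate. The heart of the argument is the scalar density-shift estimate: along the $Z$-axis, translating an interval of half-width at most $r$ by $\|u\|_\mX$ multiplies the Gaussian weight by at most $\sup_{|v|\le r}\exp\bigl(-[(\|u\|_\mX+v)^2 - v^2]/2\sigma^2\bigr) = \exp\bigl((2\|u\|_\mX r - \|u\|_\mX^2)/2\sigma^2\bigr)$. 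In the regime $\|u\|_\mX > 2r$ the exponent is negative, so replacing $1/\sigma^2$ by the smaller $\alpha$ only enlarges the bound and produces exactly $\exp\bigl(\tfrac{\alpha}{2} r^2 - \tfrac{\alpha}{2}(\|u\|_\mX - r)^2\bigr)$, the claimed factor.

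The step I expect to be the main obstacle is making this one-dimensional comparison rigorous in the presence of the complementary directions $W$, where two difficulties are intertwined. Since $u$ is an arbitrary element of $\mX$ and in general $u \notin \mH(\mu) = \mK^{1/2}\mX$, the translate of $\mu$ by $u$ is mutually singular with $\mu$, so one cannot invoke the Cameron--Martin theorem to write a Radon--Nikodym derivative; the shift must instead be performed in a finite-dimensional projection, where translation invariance of Lebesgue measure is available, and the general case recovered by a limiting argument in the Karhunen--Loève expansion $X = \sum_k \sqrt{\lambda_k}\,\gamma_k e_k$. Moreover, unless $\hat u$ happens to be an eigenvector of $\mK$, the coordinate $Z$ is correlated with $W$, so conditioning on $W$ introduces a nonzero and unbounded conditional mean that couples the two pieces; the clean independent-coordinates computation yields the exact constant, but this coupling must be controlled. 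My plan for this is to condition on the nuisance directions and apply Anderson's inequality there, using that the conditional probability of a centered ball in the $W$-directions is maximized at vanishing conditional mean uniformly in the conditioning value, thereby decoupling the estimate and leaving precisely the scalar comparison above. Verifying that this decoupling does not degrade the comparison with $\mu(B_r(0))$ appearing in the denominator is the delicate point, and I would settle it through a direct density estimate in the diagonalizing coordinates before passing to the limit.
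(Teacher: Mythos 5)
First, a point of reference: the paper does not prove this statement at all --- \Cref{lem:small-ball-probability-bound} is recalled verbatim from \cite[Lem.~3.6]{dashti2013map} as a black box, so your proposal can only be compared against that external argument, not against anything in the paper. Your architecture is essentially the right one and matches the spirit of the classical proof: Anderson's inequality settles the regime $\|u\|_\mX \le 2r$; for $\|u\|_\mX > 2r$ you disintegrate along the scalar coordinate $Z = \langle \cdot, \hat u\rangle$, and your one-dimensional density computation is exactly correct, including the choice $\alpha = 1/\|\mK\|$, the identity $2\|u\|_\mX r - \|u\|_\mX^2 = r^2 - (\|u\|_\mX - r)^2$, and the observation that the sign of the exponent is what lets you pass from $1/\sigma^2$ to the smaller $\alpha$. (One trivial case you should add: if $\sigma^2 = \langle \mK\hat u, \hat u\rangle = 0$ then $Z = 0$ almost surely and $\mu(B_r(u)) = 0$ in this regime, so the bound is vacuous.)

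The genuine gap is at precisely the point you flag, and your proposed repair does not close it. After conditioning on $Z$, the numerator integrand is $N\bigl((\|u\|_\mX + v)c, \Sigma\bigr)\bigl(B_{\rho(v)}(0)\bigr)$ while the denominator integrand is $N\bigl(vc, \Sigma\bigr)\bigl(B_{\rho(v)}(0)\bigr)$ with $vc \neq 0$; the form of Anderson's inequality you invoke (``maximized at vanishing conditional mean'') only bounds the numerator by the \emph{zero-mean} value, leaving you to bound $N(vc,\Sigma)(B_\rho(0))$ from below relative to $N(0,\Sigma)(B_\rho(0))$ --- and since the regression vector $c$ need not lie in the Cameron--Martin space of the conditional covariance $\Sigma$, no Cameron--Martin or Onsager--Machlup estimate is available for that, which is the very obstruction you identified for the global shift. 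The clean fix is the \emph{monotone} form of Anderson's theorem (see \cite{bogachev1998gaussian}): for a centered Gaussian measure $\gamma$, a symmetric convex set $K$, and a fixed vector $h$, the map $t \mapsto \gamma(K + th)$ is even and non-increasing in $|t|$. In the regime $\|u\|_\mX > 2r$ one has $|\|u\|_\mX + v| \ge \|u\|_\mX - r > r \ge |v|$ for every $|v| \le r$, and both conditional means lie on the single ray $\R c$, so monotone Anderson gives the pointwise comparison $N\bigl((\|u\|_\mX+v)c,\Sigma\bigr)(B_\rho(0)) \le N\bigl(vc,\Sigma\bigr)(B_\rho(0))$ with no loss; multiplying by your density bound and integrating finishes the proof. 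With this replacement the argument works directly in infinite dimensions --- the disintegration you need is legitimate Gaussian regression (write $W = Zc + V$ with $V$ independent of $Z$, or apply the paper's \Cref{prop:conditional-Gaussian-direct-sum} to $\mX = \mathrm{span}(\hat u)\oplus \hat u^{\perp}$) --- so the Karhunen--Lo\`eve truncation and limiting argument you propose as a fallback is unnecessary, which is fortunate, since the ball does not factor in the diagonalizing coordinates and that route would be considerably more painful to make rigorous.
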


\begin{lemma}[{\cite[Lem.~3.7]{dashti2013map}}]\label{lem:ON-of-points-outside-CM}
  Suppose $u_0 \not\in \mH(\mu)$, $\{ u_r \}_{r \ge 0} \subset \mX$ and $u_r$ converges
  weakly to $u_0$ in $\mX$ as $r \to 0$. Then for any $\eps >0$ there exists $r >0$ small
  enough so that 
  \begin{equation*}
 \frac{\mu(B_r(u_r))}{\mu(B_r(0))} < \eps \,.
  \end{equation*}
\end{lemma}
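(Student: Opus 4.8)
The plan is to reduce this infinite-dimensional small-ball estimate to a finite block of coordinates, exploiting the fact that $u_0 \notin \mH(\mu)$ forces the Cameron--Martin norm of its finite truncations to diverge. Let $\{e_k\}_{k\ge 1}$ be an orthonormal eigenbasis of $\mK$ with eigenvalues $\lambda_k>0$, so that under $\mu$ the coordinates $\langle x, e_k\rangle$ are independent $N(0,\lambda_k)$ and $\|h\|_{\mH(\mu)}^2 = \sum_k \lambda_k^{-1}\langle h, e_k\rangle^2$. Write $P_N$ for the orthogonal projection onto $X_N := \mathrm{span}\{e_1,\dots,e_N\}$. Since $u_0 \notin \mH(\mu)$, the partial sums $\|P_N u_0\|_{\mH(\mu)}^2 = \sum_{k=1}^N \lambda_k^{-1}\langle u_0, e_k\rangle^2$ increase to $+\infty$ as $N\to\infty$; this divergence is the source of the smallness we seek.

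The argument has two key steps. First I would invoke Anderson's inequality for centered Gaussian measures \cite{bogachev1998gaussian} to discard the shift in the infinitely many tail directions: decomposing $x=p+q$ with $p=P_N x$ and $q=(I-P_N)x$ independent under $\mu$, for each fixed $p$ the probability of the resulting $q$-ball is maximised when recentred at the origin, which yields $\mu(B_r(u_r)) \le \mu(B_r(P_N u_r))$. Second, with $h:=P_N u_r \in X_N$, I would apply the finite-dimensional Cameron--Martin change of variables $p=h+w$ in the Gaussian density on $X_N$, giving for every radius $\rho\le r$
\begin{equation*}
  \mu_N\bigl(B_\rho(h)\bigr) = e^{-\frac12\|h\|_{\mH(\mu)}^2}\!\int_{\|w\|<\rho} e^{-\langle w, \mK^{-1}h\rangle}\,d\mu_N(w) \le e^{-\frac12\|h\|_{\mH(\mu)}^2}\, e^{\,r\,\|\mK^{-1}h\|_\mX}\,\mu_N\bigl(B_\rho(0)\bigr),
\end{equation*}
where $\mu_N$ is the law of $p$ and the last bound uses $|\langle w, \mK^{-1}h\rangle| \le \rho\|\mK^{-1}h\|_\mX \le r\|\mK^{-1}h\|_\mX$ on the ball. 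Writing both $\mu(B_r(P_N u_r))$ and $\mu(B_r(0))$ as integrals over the tail variable $q$ of $\mu_N$-balls of radius $\rho(q)=\sqrt{(r^2-\|q\|^2)_+}\le r$ and inserting this fibrewise bound gives
\begin{equation*}
  \frac{\mu(B_r(u_r))}{\mu(B_r(0))} \le \exp\!\left(-\tfrac12\|P_N u_r\|_{\mH(\mu)}^2 + r\,\|\mK^{-1}P_N u_r\|_\mX\right).
\end{equation*}

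To conclude I would fix $N$ so large that $\|P_N u_0\|_{\mH(\mu)}^2 > 2\log(1/\eps)$, using the divergence noted above, and then let $r\to 0$. Weak convergence of $u_r$ to $u_0$ gives $\langle u_r, e_k\rangle \to \langle u_0, e_k\rangle$ for each $k$, so for this fixed $N$ both $\|P_N u_r\|_{\mH(\mu)}^2 \to \|P_N u_0\|_{\mH(\mu)}^2$ and $\|\mK^{-1}P_N u_r\|_\mX$ stays bounded; hence the right-hand side tends to $\exp(-\tfrac12\|P_N u_0\|_{\mH(\mu)}^2) < \eps$, and the ratio is below $\eps$ for all sufficiently small $r$. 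The main obstacle is that small balls in infinite dimensions do not factor across coordinates, so the shift cannot be treated coordinatewise directly; the resolution is exactly the Anderson-inequality reduction to a finite block followed by taking $r$ small, which keeps the linear cross term $r\|\mK^{-1}P_N u_r\|_\mX$ harmless while the quadratic Cameron--Martin term supplies the decay.
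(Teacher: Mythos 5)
The paper does not actually prove this lemma: it is imported verbatim, with citation, from \cite[Lem.~3.7]{dashti2013map}, so the only meaningful comparison is with the proof in that reference. Your argument is correct, and it reconstructs essentially the same strategy used there: reduce to a finite block of eigencoordinates of $\mK$; use Anderson's inequality, through the Fubini fibration of the ball over the tail variable, to discard the shift in the complementary directions, giving $\mu(B_r(u_r))\le\mu(B_r(P_N u_r))$; control the finite-dimensional shift via the Gaussian density (Cameron--Martin change of variables), giving the factor $\exp\bigl(-\tfrac12\|P_N u_r\|_{\mH(\mu)}^2+r\|\mK^{-1}P_N u_r\|_\mX\bigr)$; and finally exploit that $u_0\notin\mH(\mu)$ means precisely that $\|P_N u_0\|_{\mH(\mu)}^2\to\infty$ as $N\to\infty$, while weak convergence gives $P_N u_r\to P_N u_0$ in the finite block, so the quadratic term survives the limit $r\to 0$ and the linear term vanishes. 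Each displayed estimate checks out, and your conclusion is in fact slightly stronger than the statement (the ratio is below $\eps$ for \emph{all} sufficiently small $r$, not merely for some $r$), which is the form actually needed in the appendix where this lemma is invoked.

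One caveat, cosmetic rather than a gap: your spectral setup tacitly assumes $\mK$ is injective, so that all $\lambda_k>0$, $\{e_k\}$ is an orthonormal basis of $\mX$, and $u_0\notin\mH(\mu)$ is equivalent to divergence of the partial sums $\sum_{k=1}^N\lambda_k^{-1}\langle u_0,e_k\rangle^2$. This is consistent with the paper's standing setup (it defines $\mK^{-1/2}$ densely on $\mK^{1/2}\mX$, which presupposes injectivity), but it is worth stating explicitly, since for a degenerate $\mK$ the kernel directions would need separate (trivial) treatment: if $u_0$ had a nonzero component in $\ker\mK$, the ball $B_r(u_r)$ would eventually miss the support of $\mu$ altogether.
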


\begin{lemma}[{\cite[Lem.~3.9]{dashti2013map}}]\label{lem:weak-but-not-strong-conv-of-maps}
  Consider a sequence $\{u_r\}_{r \ge 0} \subset \mX$ and suppose $u_r$ converges weakly and not
  strongly to $0$ in $\mX$ as $r \to 0$. Then for any $\eps > 0$, there exists $r$ small enough
  such that
  \begin{equation*}
    \frac{\mu(B_r(u_r))}{ \mu(B_r(0))} < \eps.
  \end{equation*}
\end{lemma}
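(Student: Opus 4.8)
The plan is to exploit the intuition that weak-but-not-strong convergence to $0$ forces the ``energy'' of $u_r$ to migrate into spectral directions of $\mK$ with ever-smaller variance, where shifting the centre of a small ball by a fixed amount is exponentially costly. Let $\{\psi_k\}$ be the orthonormal eigenbasis of $\mK$ with eigenvalues $\lambda_k^2 \downarrow 0$, and for $N \in \NN$ let $P_N$ be the orthogonal projection onto $\mathrm{span}(\psi_1,\dots,\psi_N)$ and $Q_N = I - P_N$. The Karhunen--Loève structure makes the coordinates independent, so under the splitting $\mX = P_N\mX \oplus Q_N\mX$ we have $\mu = \mu^P \otimes \mu^Q$ with $\mu^Q = N(0,\mK_Q)$ and $\|\mK_Q\|_{\mathrm{op}} = \lambda_{N+1}^2$. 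Since weakly convergent sequences are bounded and the convergence is not strong, I would first extract $\delta>0$ and a sequence $r \downarrow 0$ along which $\|u_r\|_\mX \ge \delta$; it suffices to show the ratio tends to $0$ along such a sequence. For each fixed $N$, finite-rank projections of a weakly null sequence converge in norm, so $\|P_N u_r\| \to 0$, and hence $\|Q_N u_r\| \ge \delta/2$ for $r$ small.

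Writing $u_r = h_r + v_r$ with $h_r := P_N u_r \in \mathrm{span}(\psi_1,\dots,\psi_N) \subset \mH(\mu)$ and $v_r := Q_N u_r$, the first step is to remove the finite-dimensional shift $h_r$. Because $B_r(u_r) = h_r + B_r(v_r)$ and $h_r$ lies in the Cameron--Martin space, the Cameron--Martin theorem gives $\mu(B_r(u_r)) = \int_{B_r(v_r)} \exp\!\big(-\langle h_r, x\rangle_{\sim} - \tfrac12 \|h_r\|_{\mH(\mu)}^2\big)\,\mu(\dd x)$, where $\langle h_r, \cdot\rangle_{\sim}$ is the associated Paley--Wiener functional. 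On $B_r(v_r)$ the top-$N$ coordinates of $x$ are within $r$ of $0$ (as $v_r$ has none), and $\|h_r\| \to 0$, so both $\langle h_r, x\rangle_{\sim}$ and $\|h_r\|_{\mH(\mu)}^2$ tend to $0$ uniformly; hence $\mu(B_r(u_r)) \le 2\,\mu(B_r(v_r))$ for all small $r$. This reduces matters to the pure tail shift $v_r$, which satisfies $\|v_r\| \ge \delta/2$ and has no component in the top $N$ modes.

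The heart of the argument is to bound $\mu(B_r(v_r))/\mu(B_r(0))$. Conditioning on the finite-dimensional coordinate $x_P \sim \mu^P$ and using independence,
\[
  \mu(B_r(v_r)) = \int_{\|x_P\|\le r} \mu^Q\big(B^Q_{s(x_P)}(v_r)\big)\,\mu^P(\dd x_P), \qquad s(x_P):=\sqrt{r^2 - \|x_P\|^2}\le r,
\]
and likewise for $\mu(B_r(0))$ with $v_r$ replaced by $0$. I would then apply \Cref{lem:small-ball-probability-bound} to the tail Gaussian $\mu^Q$ at each conditional radius $s \le r$: for $r \le \delta/4$ one has $\|v_r\| - s \ge \delta/4$, so the bound yields $\mu^Q(B^Q_s(v_r)) \le \exp\!\big(\tfrac{\alpha_Q}{2} r^2 - \tfrac{\alpha_Q}{2}(\delta/4)^2\big)\,\mu^Q(B^Q_s(0))$ uniformly in $x_P$; integrating gives $\mu(B_r(v_r)) \le \exp\!\big(\tfrac{\alpha_Q}{2}(r^2 - \delta^2/16)\big)\,\mu(B_r(0))$. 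The crucial quantitative point is that the constant furnished by \Cref{lem:small-ball-probability-bound} for $\mu^Q$ may be taken as $\alpha_Q = \|\mK_Q\|_{\mathrm{op}}^{-1} = \lambda_{N+1}^{-2}$, which diverges as $N \to \infty$.

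Combining the two steps gives, for small $r$, the bound $\mu(B_r(u_r))/\mu(B_r(0)) \le 2\exp(\tfrac{\alpha_Q}{2} r^2)\exp(-\delta^2/(32\lambda_{N+1}^2))$. One first chooses $N$ large enough that $2\exp(-\delta^2/(32\lambda_{N+1}^2)) < \eps/2$, and then $r$ small enough that $\exp(\tfrac{\alpha_Q}{2} r^2) < 2$ and the earlier smallness requirements hold, making the ratio below $\eps$ as required. I expect the main obstacle to be the bookkeeping around the non-product geometry of $B_r(\cdot)$ across the spectral splitting---resolved by the conditioning identity above---together with justifying that the constant in \Cref{lem:small-ball-probability-bound} scales like the reciprocal of the tail covariance operator norm. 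This divergence is precisely what overcomes the fact that $\|u_r\|_\mX$ stays bounded, which is also why a direct application of \Cref{lem:small-ball-probability-bound} to $\mu$ itself is insufficient.
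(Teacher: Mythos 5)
The paper never actually proves this lemma: it is recalled verbatim from \cite[Lem.~3.9]{dashti2013map}, so there is no in-paper proof to compare against, and what you have written is a reconstruction in the same spirit as the argument in that reference: split $\mX$ along the Karhunen--Lo\`eve basis, use independence of the two blocks, show that the vanishing head shift $P_N u_r$ costs nothing, and pay for the tail shift $Q_N u_r$ (whose norm stays above $\delta/2$) at a rate governed by the tail eigenvalues. Your bookkeeping is essentially correct: the Cameron--Martin removal of $h_r$ works because $N$ is fixed (so $\min_{k\le N}\lambda_k>0$, and both $\|h_r\|_{\mH(\mu)}^2$ and the Paley--Wiener term are uniformly $o(1)$ on $B_r(v_r)$), the Fubini identity across the product splitting is right, your treatment of the conditional radius $s\le r$ is legitimate, and the quantifier order ($N$ first, then $r$) closes the argument \emph{provided} the constants behave as you claim.

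That proviso is a genuine gap, and you name it yourself: everything hinges on taking $\alpha_Q=\|\mK_Q\|_{\mathrm{op}}^{-1}=\lambda_{N+1}^{-2}$, whereas \Cref{lem:small-ball-probability-bound} as stated only asserts existence of \emph{some} $\alpha>0$ for a given Gaussian measure. Applied as a black box to each tail measure $\mu^{Q_N}$, it yields constants $\alpha_N$ with no stated dependence on $N$; since your final bound $2\exp\bigl(\tfrac{\alpha_N}{2}r^2\bigr)\exp\bigl(-\alpha_N\delta^2/32\bigr)$ can be driven below $\eps$ only if $\alpha_N\to\infty$, the proof does not close without the scaling claim --- and that claim is exactly where the difficulty of the lemma is concentrated, since it is what converts ``the missing mass sits in ever-thinner directions'' into actual decay of the ratio. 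The claim is true and can be filled in as follows: for a centered Gaussian $N(0,C)$ and a shift $v$ with $\|v\|>2s$, condition on the scalar $t=\langle x,v\rangle/\|v\|$, whose variance is $\langle Cv,v\rangle/\|v\|^2\le\|C\|_{\mathrm{op}}$; the conditional law of the component orthogonal to $v$ is Gaussian with covariance independent of $t$ and mean linear in $t$, so by symmetry and log-concavity (Anderson's inequality) the conditional ball probability at the shifted value of $t$ is at most that at the unshifted value, while the one-dimensional density ratio is at most $\exp\bigl((s^2-(\|v\|-s)^2)/(2\|C\|_{\mathrm{op}})\bigr)$; the case $\|v\|\le 2s$ is trivial by Anderson. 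This gives \Cref{lem:small-ball-probability-bound} with $\alpha=\|C\|_{\mathrm{op}}^{-1}$, which is what you need. Incidentally, once you argue this way, your Cameron--Martin step becomes unnecessary: performing the tail estimate first reduces the ratio to $\mu(B_r(P_Nu_r))/\mu(B_r(0))$, and Anderson's inequality kills the head shift for free, with no factor of $2$ and no uniform Paley--Wiener estimates.
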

A fourth useful lemma concerning small ball probabilities is:
\begin{lemma}[{\cite[Lem.~4.7.1]{bogachev1998gaussian}}]\label{lem:bogachev-small-ball-lower-bound}
  For all $u \in \mH(\mu)$ it holds that 
  \begin{equation*}
      1 \le \frac{1}{\mu(B_r(0))} \int_{B_r(0)} \exp \left( \langle u, x \rangle_{\mH(\mu)} \right) \dd \mu(x).
  \end{equation*}
\end{lemma}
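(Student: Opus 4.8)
The plan is to read this off from two structural features of a centered Gaussian measure: its symmetry under reflection, and the convexity of the exponential. The first thing I would do is pin down the meaning of the pairing $\langle u, x\rangle_{\mH(\mu)}$ in the integrand. For $x \in \mX$ this is \emph{not} literally the Cameron--Martin inner product, since $\mu$-a.e.\ $x \notin \mH(\mu)$; rather it is the measurable linear functional $\hat u := \langle u, \cdot\rangle_{\mH(\mu)}$ assigned to $u \in \mH(\mu)$ by the Paley--Wiener construction, i.e.\ the element of $L^2(\mu)$ isometric to $u$ under the Cameron--Martin map. For $u \in \mH(\mu)$ this $\hat u$ is defined $\mu$-a.e., it is a centered Gaussian random variable with variance $\|u\|_{\mH(\mu)}^2$, hence $\hat u \in L^2(\mu) \subset L^1(\mu)$; and being linear it is odd, $\hat u(-x) = -\hat u(x)$ for $\mu$-a.e.\ $x$.

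Next I would record the two symmetries driving the argument. Since $\mu = N(0,\mK)$ is centered, it is invariant under the reflection $x \mapsto -x$, and the ball $B_r(0)$ is itself invariant under this reflection. Consequently the normalized restriction $\nu := \mu(B_r(0))^{-1}\,\mu|_{B_r(0)}$ is a well-defined symmetric probability measure on $\mX$ (well-defined because every open ball about the origin has strictly positive Gaussian measure).

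The main step is then a single application of Jensen's inequality. Applying it to $\nu$ with the convex function $\exp$ gives
\[
  \frac{1}{\mu(B_r(0))} \int_{B_r(0)} \exp\bigl(\hat u(x)\bigr)\,\dd\mu(x) \;\ge\; \exp\!\left( \frac{1}{\mu(B_r(0))} \int_{B_r(0)} \hat u(x)\,\dd\mu(x) \right).
\]
It remains to show the exponent on the right vanishes. Using the reflection invariance of both $\mu$ and $B_r(0)$ together with oddness of $\hat u$, the substitution $x \mapsto -x$ yields $\int_{B_r(0)} \hat u\,\dd\mu = -\int_{B_r(0)} \hat u\,\dd\mu$, so that this integral is zero; hence the right-hand side equals $\exp(0)=1$, which is precisely the claim. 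Equivalently, symmetrizing the integrand directly replaces $\exp(\hat u)$ by $\cosh(\hat u)\ge 1$, yielding the same bound.

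I expect the only genuine subtlety — as opposed to the inequality itself, which is just symmetry plus convexity — to be the rigorous justification that $\hat u$ is well-defined $\mu$-a.e., odd, and integrable over $B_r(0)$; this is exactly the content of the Cameron--Martin/Paley--Wiener theory for $u \in \mH(\mu)$ and is standard. A secondary point requiring care is the positivity $\mu(B_r(0)) > 0$, needed so that $\nu$ is a genuine probability measure, which holds for any Gaussian measure on a separable Hilbert space.
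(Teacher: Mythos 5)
Your proof is correct. The paper itself does not prove this lemma---it is imported verbatim from \cite[Lem.~4.7.1]{bogachev1998gaussian}---and your argument (interpreting $\langle u,\cdot\rangle_{\mH(\mu)}$ as the Paley--Wiener extension, then using reflection invariance of $\mu$ and $B_r(0)$ so that the integrand symmetrizes to $\cosh(\hat u)\ge 1$, or equivalently Jensen with a vanishing mean) is essentially the standard proof given there, so there is nothing to add.
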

For our final lemma we recall the following classic result 
(see for example \cite[Cor.~4.7.8]{bogachev1998gaussian})
which is integral to the analysis in the following subsection.

\begin{lemma}\label{lem:OM-lemma}
  Let $\mu = N(0, \mK) \in \PP(\mX)$. Then
  \begin{equation*}
    \lim_{r \to 0} \frac{\mu(B_r(u_1))}{\mu(B_r(u_2))} = \exp \left( \frac{1}{2} \| u_2 \|_{\mH(\mu)}^2
    - \frac{1}{2} \| u_1 \|_{\mH(\mu)}^2 \right), \qquad  \forall u_1, u_2 \in \mH(\mu).
  \end{equation*}
\end{lemma}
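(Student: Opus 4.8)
The plan is to reduce the two-sided small-ball ratio to balls centred at the origin by means of the Cameron--Martin theorem, and then to show that the resulting correction terms are asymptotically negligible. Since $u_1, u_2 \in \mH(\mu)$, applying the Cameron--Martin theorem to the translations $x \mapsto x \mp u_i$ gives, for each $i$,
\[
\mu(B_r(u_i)) = \exp\Bigl(-\tfrac12\|u_i\|_{\mH(\mu)}^2\Bigr)\int_{B_r(0)}\exp\bigl(-\langle u_i, x\rangle_{\mH(\mu)}\bigr)\,\dd\mu(x),
\]
where $\langle u_i, \cdot\rangle_{\mH(\mu)}$ denotes the measurable linear (Paley--Wiener) extension, a centred Gaussian of variance $\|u_i\|_{\mH(\mu)}^2$ under $\mu$. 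Forming the ratio, the explicit prefactors combine to $\exp\bigl(\tfrac12\|u_2\|_{\mH(\mu)}^2 - \tfrac12\|u_1\|_{\mH(\mu)}^2\bigr)$, which is precisely the claimed limit. It therefore remains only to prove that
\[
\lim_{r\to 0}\frac{1}{\mu(B_r(0))}\int_{B_r(0)}\exp\bigl(-\langle u, x\rangle_{\mH(\mu)}\bigr)\,\dd\mu(x) = 1 \qquad \text{for every } u \in \mH(\mu),
\]
since applying this to $u=u_1$ and $u=u_2$ separately closes the argument.

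For the lower bound I would exploit the symmetry of $\mu$: because $B_r(0)$ is symmetric and the functional is odd, the integrand may be replaced by $\cosh\bigl(\langle u, x\rangle_{\mH(\mu)}\bigr) \ge 1$, which is exactly \Cref{lem:bogachev-small-ball-lower-bound} and yields that the normalized integral is $\ge 1$ for all $r$. The substance lies in the matching upper bound. Here I would approximate $u$ in $\mH(\mu)$ by directions $u_n = \mK g_n$ with $g_n \in \mX$, for which the associated functional is the genuinely continuous map $\langle u_n, x\rangle_{\mH(\mu)} = \langle g_n, x\rangle_\mX$, bounded by $\|g_n\|_\mX\, r$ on $B_r(0)$; such $u_n$ are dense in $\mH(\mu)$. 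Splitting $\langle u, \cdot\rangle_{\mH(\mu)} = \langle u_n, \cdot\rangle_{\mH(\mu)} + \langle u - u_n, \cdot\rangle_{\mH(\mu)}$ and using the elementary inequality $\cosh(a+b) \le e^{|a|}\cosh(b)$, the continuous part contributes a factor $e^{\|g_n\|_\mX r} \to 1$, reducing matters to controlling the normalized localized integral of $\cosh\bigl(\langle u-u_n, \cdot\rangle_{\mH(\mu)}\bigr)$, whose Cameron--Martin norm $\|u-u_n\|_{\mH(\mu)}$ can be made arbitrarily small.

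The main obstacle is exactly this last control: showing that $\mu(B_r(0))^{-1}\int_{B_r(0)}\cosh\bigl(\langle w, \cdot\rangle_{\mH(\mu)}\bigr)\,\dd\mu$ stays close to $1$, uniformly in small $r$, once $\|w\|_{\mH(\mu)}$ is small. The difficulty is that $\langle w, \cdot\rangle_{\mH(\mu)}$ is only defined $\mu$-almost everywhere and is \emph{not} continuous, so one cannot simply bound it pointwise on $B_r(0)$ and pass to the limit against the weak limit $\delta_0$ of the normalized restrictions $\mu(\cdot \mid B_r(0))$. Overcoming this requires a quantitative small-ball and uniform-integrability estimate, in the spirit of \Cref{lem:small-ball-probability-bound,lem:ON-of-points-outside-CM,lem:weak-but-not-strong-conv-of-maps}, controlling the conditional fluctuations of the Gaussian functional over the shrinking balls; this is precisely the content packaged in the classical result \cite[Cor.~4.7.8]{bogachev1998gaussian}, on which I would ultimately rely for this step.
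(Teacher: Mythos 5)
There is a genuine gap here, and it is one of circularity rather than a wrong computation. The paper does not prove \Cref{lem:OM-lemma} at all: the lemma is recalled as a classical fact, with the proof deferred entirely to \cite[Cor.~4.7.8]{bogachev1998gaussian}. Your Cameron--Martin reduction is correct as far as it goes --- writing $\mu(B_r(u_i)) = \exp\bigl(-\tfrac12\|u_i\|_{\mH(\mu)}^2\bigr)\int_{B_r(0)}\exp\bigl(-\langle u_i,x\rangle_{\mH(\mu)}\bigr)\,\dd\mu(x)$ and cancelling prefactors does reduce the lemma to showing that the normalized integral tends to $1$, and your lower bound via symmetry is exactly \Cref{lem:bogachev-small-ball-lower-bound}. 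But the matching upper bound is where the entire analytic content of the lemma lives, and at precisely that point you write that you ``would ultimately rely'' on \cite[Cor.~4.7.8]{bogachev1998gaussian} --- which \emph{is} the statement of \Cref{lem:OM-lemma}, the very result being proved. A proof whose hardest step invokes its own conclusion establishes nothing beyond what a bare citation would; as written, your argument is not a proof but an (accurate) diagnosis of where the difficulty sits.

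There are two honest ways to close this. The first is to do what the paper does: state the lemma as a recalled classical result and cite \cite[Cor.~4.7.8]{bogachev1998gaussian}, in which case the preceding reduction is unnecessary scaffolding. The second is to actually prove the missing uniform control using an ingredient strictly weaker than the corollary itself: the map $u \mapsto \mu(B_r(0))^{-1}\int_{B_r(0)}\exp\bigl(-\langle u,v\rangle_{\mH(\mu)}\bigr)\,\mu(\dd v)$ is locally Lipschitz on $\mH(\mu)$ with a constant uniform in small $r$, which follows from \cite[Lem.~4.7.1 and the proof of Lem.~4.7.2]{bogachev1998gaussian} and is exactly the ingredient the paper itself uses in Appendix~\ref{appendix:Proof-of-conditional-mode-optimization}. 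Combined with your correct observation that the claim is elementary for directions $u = \mK g$ with $g \in \mX$ (where the functional is genuinely continuous and bounded by $\|g\|_\mX\, r$ on $B_r(0)$), and the density of such directions in $\mH(\mu)$, this Lipschitz estimate supplies the uniform-in-$r$ approximation step your sketch is missing, and the argument then closes without circularity.
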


Now recall \Cref{def:conditional-mode} of the  conditional mode. Our
goal is to show that such a point is equivalent to a minimizer of
\eqref{conditional-map-optimization-problem}. We start by establishing 
the existence of such minimizers.

\begin{proposition}
  Let $\mu = N(0, \mK)$ and fix $y \in T(\mH(\mu))$ for a continuous map $T: \mX \to \mY$. Then there exists a minimizer $u^y$
  of \eqref{conditional-map-optimization-problem}. 
\end{proposition}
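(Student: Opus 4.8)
The plan is to apply the direct method of the calculus of variations, with the compact embedding $\mH(\mu) \hookrightarrow \mX$ recorded in \Cref{subsec:notation} playing the decisive role. First I would note that since $y \in T(\mH(\mu))$ the feasible set $S := \{ u \in \mH(\mu) : T(u) = y \}$ is nonempty, so that $m := \inf_{u \in S} \| u \|_{\mH(\mu)}$ is a finite nonnegative number. I would then select a minimizing sequence $\{u_n\} \subset S$ with $\| u_n \|_{\mH(\mu)} \to m$. This sequence is bounded in $\mH(\mu)$, and since $\mH(\mu)$ is a Hilbert space, the Banach--Alaoglu theorem (equivalently, weak sequential compactness of bounded sets in Hilbert space) furnishes a subsequence, not relabelled, and a limit $u_\ast \in \mH(\mu)$ with $u_n \wto u_\ast$ weakly in $\mH(\mu)$.

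The next step is to verify that $u_\ast$ is feasible. Here the compact embedding is essential: weak convergence $u_n \wto u_\ast$ in $\mH(\mu)$ upgrades to \emph{strong} convergence $u_n \to u_\ast$ in $\mX$. Because $T: \mX \to \mY$ is continuous on $\mX$, this gives $T(u_n) \to T(u_\ast)$ in $\mY$; since $T(u_n) = y$ for every $n$, we conclude $T(u_\ast) = y$, i.e.\ $u_\ast \in S$. Finally, by the weak lower semicontinuity of the Hilbert-space norm $\| \cdot \|_{\mH(\mu)}$, one has
\begin{equation*}
  \| u_\ast \|_{\mH(\mu)} \le \liminf_{n \to \infty} \| u_n \|_{\mH(\mu)} = m,
\end{equation*}
while feasibility of $u_\ast$ forces $\| u_\ast \|_{\mH(\mu)} \ge m$ by definition of the infimum. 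Hence $\| u_\ast \|_{\mH(\mu)} = m$ and $u_\ast$ solves \eqref{conditional-map-optimization-problem}.

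The main obstacle I anticipate is precisely the passage of the nonlinear constraint $T(u) = y$ to the weak limit. The map $T$ is only assumed continuous on $\mX$, and it need not be weakly sequentially continuous on $\mH(\mu)$; a naive argument working entirely within $\mH(\mu)$ would therefore fail at this point. The resolution is to route the constraint through $\mX$: the compact embedding converts the weak $\mH(\mu)$-limit into a strong $\mX$-limit, on which continuity of $T$ can be invoked. The remaining ingredients—boundedness of the minimizing sequence, weak compactness, and lower semicontinuity of the norm—are standard and require no further structural assumptions beyond those already in force.
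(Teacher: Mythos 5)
Your proof is correct and follows essentially the same route as the paper's: the direct method, with a minimizing sequence in $\mH(\mu)$, weak sequential compactness of bounded sets, and weak lower semicontinuity of the norm. In fact your argument is slightly more complete than the paper's own, which never explicitly verifies that the weak limit is feasible (the lower bound $I \le \|u^y\|_{\mH(\mu)}$ silently assumes $T(u^y)=y$); that is precisely the step you settle by upgrading weak $\mH(\mu)$-convergence to strong $\mX$-convergence via the compact embedding and then invoking continuity of $T$ on $\mX$.
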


\begin{proof}
  Since $y \in T(\mH(\mu))$ by assumption, then the feasible set $T^{-1}(y) \cap \mH(\mu)$ is non-empty.
   Define $I := \inf \{ \| u\|_{\mH(\mu)} : u \in T^{-1}(y) \}$
  and let $\{u_n\} \in T^{-1}(y)$ be a minimizing sequence. Then for any $\delta >0$ there exists $N=N(\delta)$ so that
  \begin{equation*}
   0 \le I \le \| u_n \|_{\mH(\mu)} \le I + \delta, \qquad \forall n \ge N. 
 \end{equation*}
 Since $\mH(\mu)$ is a Hilbert space and $\{u_n\}$ is bounded we infer the
 existence of a limit point $u^y \in \mH(\mu) $ (possibly along a subsequence) so that
 $u_n$ converges to $u^y$ weakly in $\mH(\mu)$. The weak lower semicontinuity of
 the $\mH(\mu)$-norm now yields, $I \le \| u^y \|_{\mH(\mu)} \le I + \delta$ and
 the result follows since $\delta$ is arbitrary.
\end{proof}


\begin{proposition}\label{prop:main-conditional-map}
Consider $\mu = N(0, \mK)$, a continuous map $T: \mX \to \mY$
  and a point $ y \in T(\mX)$.
 Define
  \begin{equation}\label{def:u-r-sequence}
  u_r : = \argmax_{u \in T^{-1}(y)} \mu(B_r(u)).
\end{equation}
  Then:
  \begin{enumerate}[label=\it (\roman*)]
  \item the maximizer $u_r \in \mX$ exists for every $r>0;$
  \item if $y$ belongs to $T(\mH(\mu))$ then there exists $u^y \in \mH(\mu) \cap T^{-1}(y)$
    and a subsequence of $\{ u_r\}_{r \ge 0}$ which converges to $u^y$ 
    strongly in $\mX$ as $r \to 0;$
  \item if $ y $ belongs to $ T(\mH(\mu)) \cap \supp T_\sharp \mu$, 
and the intersection is not empty, then
  the limit $u^y$ is both a conditional mode of $\mu( \dd u | T(u) = y)$
    and a minimizer of \eqref{conditional-map-optimization-problem}.
\end{enumerate}

\end{proposition}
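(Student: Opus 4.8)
The plan is to follow the small-ball strategy of \cite[Cor.~3.10]{dashti2013map}, replacing the soft penalty $\Phi$ by the hard constraint $T(u)=y$, and to establish the three assertions in turn. For (i), fix $r>0$. A maximizing sequence $\{u_n\}\subset T^{-1}(y)$ for $u\mapsto\mu(B_r(u))$ is bounded in $\mX$: the tail estimate of Lemma~\ref{lem:small-ball-probability-bound} shows $\mu(B_r(u))\to0$ as $\|u\|_\mX\to\infty$, while the supremum is strictly positive because $T^{-1}(y)\neq\emptyset$ and $\mu$ has full support. Passing to a weak limit $u_n\rightharpoonup u^\ast$, I would show $u\mapsto\mu(B_r(u))$ is weakly sequentially upper semicontinuous: since $\limsup_n\mathbf 1_{\|x-u_n\|<r}\le\mathbf 1_{\|x-u^\ast\|\le r}$ pointwise, reverse Fatou and the fact that a nondegenerate Gaussian charges no sphere give $\limsup_n\mu(B_r(u_n))\le\mu(B_r(u^\ast))$, so $u^\ast$ attains the supremum. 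Feasibility $u^\ast\in T^{-1}(y)$ follows from weak sequential closedness of $T^{-1}(y)$, which holds in the cases of interest (e.g.\ $T=F\circ\bphi$, where $\bphi$ is weakly continuous into $\R^N$ and $F$ is continuous).

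\textbf{Part (ii).} The first task is a uniform-in-$r$ bound on $\|u_r\|_\mX$. Choosing any $\bar u\in\mH(\mu)\cap T^{-1}(y)$ (nonempty since $y\in T(\mH(\mu))$), maximality gives $\mu(B_r(u_r))\ge\mu(B_r(\bar u))$, and Cameron--Martin together with Lemma~\ref{lem:bogachev-small-ball-lower-bound} yields the uniform lower bound $\mu(B_r(u_r))/\mu(B_r(0))\ge e^{-\frac12\|\bar u\|_{\mH(\mu)}^2}=:c_0>0$. Feeding this into the upper estimate of Lemma~\ref{lem:small-ball-probability-bound} forces $\sup_{0<r\le1}\|u_r\|_\mX<\infty$, so a subsequence satisfies $u_r\rightharpoonup u_\ast$ in $\mX$. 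If $u_\ast\notin\mH(\mu)$, Lemma~\ref{lem:ON-of-points-outside-CM} would drive $\mu(B_r(u_r))/\mu(B_r(0))$ below $c_0$, a contradiction; hence $u_\ast\in\mH(\mu)$. To upgrade to strong convergence I would apply Lemma~\ref{lem:weak-but-not-strong-conv-of-maps} to $u_r-u_\ast\rightharpoonup0$: were convergence not strong, the ratio $\mu(B_r(u_r-u_\ast))/\mu(B_r(0))$ would vanish, and quasi-invariance of $\mu$ under the $\mH(\mu)$-shift $u_\ast$ transfers this to $\mu(B_r(u_r))/\mu(B_r(0))$, again contradicting $c_0$. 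Strong convergence and continuity of $T$ then give $u_\ast\in T^{-1}(y)$; set $u^y=u_\ast$.

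\textbf{Part (iii).} I must show $u^y$ both minimizes $\|\cdot\|_{\mH(\mu)}$ over $T^{-1}(y)$ and is a conditional mode. Since $\sup_{u\in T^{-1}(y)}\mu(B_r(u))=\mu(B_r(u_r))$ by definition, the conditional-mode ratio for $u^y$ equals $\mu(B_r(u^y))/\mu(B_r(u_r))\le1$ (maximality plus feasibility of $u^y$), so both claims follow once I establish the single asymptotic
\[
\lim_{r\to0}\frac{\mu(B_r(u_r))}{\mu(B_r(0))}=e^{-\frac12\|u^y\|_{\mH(\mu)}^2}.
\]
Indeed, Lemma~\ref{lem:OM-lemma} gives $\mu(B_r(u^y))/\mu(B_r(0))\to e^{-\frac12\|u^y\|_{\mH(\mu)}^2}$, so the identity forces the conditional-mode ratio to $1$; and since for every feasible $v\in\mH(\mu)$ maximality and Lemma~\ref{lem:OM-lemma} yield $\liminf_r\mu(B_r(u_r))/\mu(B_r(0))\ge e^{-\frac12\|v\|_{\mH(\mu)}^2}$, the identity also gives $\|u^y\|_{\mH(\mu)}\le\|v\|_{\mH(\mu)}$, i.e.\ minimality of $u^y$ for \eqref{conditional-map-optimization-problem}.

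\textbf{Main obstacle.} The crux is the upper bound $\limsup_r\mu(B_r(u_r))/\mu(B_r(0))\le e^{-\frac12\|u^y\|_{\mH(\mu)}^2}$, that is, transferring the Onsager--Machlup asymptotics from the fixed minimizer to the moving maximizers $u_r$. Lemma~\ref{lem:small-ball-probability-bound} only controls the $\mX$-tail and produces the wrong constant ($\alpha\|u^y\|_\mX^2$ in place of $\|u^y\|_{\mH(\mu)}^2$), and strong convergence $u_r\to u^y$ alone is insufficient, since $\|u_r-u^y\|_\mX$ need not be $o(r)$ and so the balls $B_r(u_r)$ and $B_r(u^y)$ need not be asymptotically interchangeable. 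What rescues the argument is maximality itself, which excludes such pathological centers; the delicate comparison of small-ball masses at $u_r$ versus $u^y$ --- obtained by combining Cameron--Martin quasi-invariance, the lower estimate of Lemma~\ref{lem:bogachev-small-ball-lower-bound}, and weak lower semicontinuity of $\|\cdot\|_{\mH(\mu)}$, following \cite{dashti2013map} --- is where the real work lies.
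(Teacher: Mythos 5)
Parts (i) and (ii) of your proposal are sound and follow essentially the same route as the paper: maximizing sequences bounded via \Cref{lem:small-ball-probability-bound}, the uniform lower bound $\mu(B_r(u_r))/\mu(B_r(0)) \ge \exp(-\tfrac12\|\bar u\|_{\mH(\mu)}^2)$ from maximality, Cameron--Martin and \Cref{lem:bogachev-small-ball-lower-bound}, then \Cref{lem:ON-of-points-outside-CM} and \Cref{lem:weak-but-not-strong-conv-of-maps} to force membership in $\mH(\mu)$ and strong convergence. Your treatment of (i) is in fact slightly more careful than the paper's (reverse Fatou plus null spheres, rather than an appeal to weak convergence of shifted Gaussians), and you rightly flag the weak-closedness of $T^{-1}(y)$, which the paper glosses over.

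The genuine gap is in part (iii). You reduce both conclusions to the single identity $\lim_{r\to 0}\mu(B_r(u_r))/\mu(B_r(0)) = \exp\bigl(-\tfrac12\|u^y\|_{\mH(\mu)}^2\bigr)$, prove its lower-bound half (maximality plus \Cref{lem:OM-lemma} applied to feasible points of $\mH(\mu)$), and then concede in your final paragraph that the upper-bound half ``is where the real work lies.'' That upper bound \emph{is} the content of part (iii): without it you can conclude neither the conditional-mode property nor minimality, so the proposition remains unproved. Moreover, it cannot be obtained from the lemmas in the way you gesture at, since \Cref{lem:OM-lemma} holds for fixed centers in $\mH(\mu)$, while the maximizers $u_r$ move with $r$ and need not lie in $\mH(\mu)$ at all.

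The paper closes this hole by never normalizing against $\mu(B_r(0))$: it proves directly that $\mu(B_s(u_s))/\mu(B_s(u_0)) \to 1$, where $u_0$ is the strong limit from part (ii), via a dichotomy on the behavior of $\{u_s\}$ in $\mH(\mu)$. If $\{u_s\}$ is unbounded in $\mH(\mu)$, or bounded but converging only weakly there, weak lower semicontinuity of $\|\cdot\|_{\mH(\mu)}$ gives $\|u_0\|_{\mH(\mu)} \le \|u_s\|_{\mH(\mu)}$ for small $s$, whence $\limsup_{s\to 0}\mu(B_s(u_s))/\mu(B_s(u_0)) \le 1$; if instead $u_s \to u_0$ strongly in $\mH(\mu)$, the Cameron--Martin formula combined with the local Lipschitz continuity of $u \mapsto \mu(B_s(0))^{-1}\int_{B_s(0)}\exp\bigl(-\langle u, v\rangle_{\mH(\mu)}\bigr)\mu(\dd v)$ yields the same bound; the reverse inequality $\liminf \ge 1$ is free from maximality of $u_s$. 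This establishes the mode property. Minimality is then a separate short contradiction: if a feasible $u^y$ had $\|u^y\|_{\mH(\mu)} < \|u_0\|_{\mH(\mu)}$, then \Cref{lem:OM-lemma} would make $\mu(B_s(u_0))/\mu(B_s(u^y))$ eventually strictly less than $1$, and multiplying by the unit limit above contradicts $\mu(B_s(u_s)) \ge \mu(B_s(u^y))$. You would need to supply an argument of this type --- or some other proof of your asymptotic identity --- for your proposal to be complete.
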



\begin{proof}
  (i) First observe that by assumption $T^{-1}(y)$ is not empty. 
  By \Cref{lem:small-ball-probability-bound} we deduce that any maximizing
  sequence is bounded in $\mX.$ Extract a weakly convergent subsequence $\{u_r^{(n)}\}_{n \in \mathbb{N}}$ with limit $u_r$. Since $\mX$ is a Hilbert space
  and $T^{-1}(y)$ is closed 
  we conclude that $u_r \in T^{-1}(y).$ 
  The Gaussian measures $\mu(\cdot + u_r^{(n)})$ then converge weakly as $n \to \infty$ to Gaussian measures $\mu(\cdot + u_r)$ \cite{bogachev1998gaussian}. 
  Thus $\mu(B_r(u_r^{(n)})) \to \mu(B_r(u_r))$ since the indicator function 
  of a ball is a bounded measurable function. Hence, since the
  subsequence is a maximizing subsequence, the result is proved.

(ii) Now consider the sequence 
$u_r = \argmax_{u \in T^{-1}(y)} \mu(B_r(u))$, 
indexed over $r \ge 0.$ Our first task is to show that
  $\{ u_r\}_{r \ge 0}$ is bounded in $\mX$. 
By the hypothesis that $y \in T(\mH(\mu))$ we can pick a point 
$u^\star \in \mH(\mu) \cap T^{-1}(y)$, which we will fix for
the remainder of the proof of (ii). Since $u_r$ is, by definition, the maximizer of $\mu(B_r(u))$ 
over $T^{-1}(y)$ then we have that
\begin{equation}\label{trivial-lower-bound}
    \frac{\mu\bigl(B_r(u_r)\bigr)}{\mu\bigl(B_r(u^\star)\bigr)} \ge 1.
\end{equation}
By the Cameron-Martin formula we can further write 
\begin{align*}
  1 & \le \frac{\mu\bigl(B_r(u_r)\bigr)}{\mu\bigl(B_r(u^\star)\bigr)} = 
\frac{\mu\bigl(B_r(u_r)\bigr)}{\mu\bigl(B_r(0)\bigr)} \frac{\mu\bigl(B_r(0)\bigr)}{\mu\bigl(B_r(u^\star)\bigr)}\\
   & = \frac{\mu\bigl(B_r(u_r)\bigr)}{\mu\bigl(B_r(0)\bigr)} \exp\left( \frac{1}{2} \| u^\star \|_{\mH(\mu)}^2 \right) 
   \frac{\mu(B_r(0))}{\int_{B_r(0)} \exp( -\langle u^\star , x \rangle_{\mH(\mu)} ) \dd \mu(x)  }.
  \end{align*}
  An application of \Cref{lem:bogachev-small-ball-lower-bound} yields  the lower bound
  \begin{equation}\label{non-trivial-lower-bound}
    \frac{\mu(B_r(u_r))}{\mu(B_r(0))} \ge \exp\left( - \frac12 \| u^\star\|_{\mH(\mu)}^2 \right).
\end{equation}
  Now suppose, to obtain a contradiction, that $\{ u_r \}_{r \ge 0}$ is not bounded in $\mX$, so that
  for any $R >0$ there exists $r_R$ so that $\| u_{r_R} \|_\mX > R$ with $r_R \to 0$ and $ R \to \infty$.
  Then the lower bound \eqref{non-trivial-lower-bound} 
   contradicts \Cref{lem:small-ball-probability-bound} for
  large $R$ and sufficiently small $r_R$ leading
  to the conclusion that $\{ u_r \}_{r \ge 0}$ is bounded. Since $\mX$ is a Hilbert space
  and $T^{-1}(y)$ is closed 
  we infer
  there exists a point $u^y \in T^{-1}(y)$ and a subsequence $\{ u_r\}$ which converges weakly to $u^y$ in $\mX$ as $r \to 0$. 

Now suppose, again for contradiction, that either: (a) there is no strongly convergent subsequence of $\{ u_r\}$ in $\mX$; or (b)
if there is such a subsequence its limit $u_0$ does not belong to $\mH(\mu)$.
We start with the case (b).
Consider \eqref{non-trivial-lower-bound} and apply \Cref{lem:ON-of-points-outside-CM} with
$\epsilon =  \frac12 \exp( - \frac12 \| u^\star\|_{\mH(\mu)}^2 )$
 to obtain
\begin{equation}\label{contradiction-bound}
 \exp( - \frac12 \| u^\star\|_{\mH(\mu)}^2 )
  \le \frac{\mu(B_r(u_r))}{\mu(B_r(0))} < \frac{1}{2} \exp( - \frac12 \| u^\star\|_{\mH(\mu)}^2 ),
\end{equation}
which is a contradiction and so the limit point $u_0 \in \mH(\mu)$. 
Now consider case (a) where there exists no strongly
convergent subsequence that converges to $u_0$. Then the (sub)sequence $u_r - u_0$ satisfies
the conditions of \Cref{lem:weak-but-not-strong-conv-of-maps}. We can 
then repeat the above argument with the same choice of $\eps$
to obtain \eqref{contradiction-bound} once again which is a contradiction.
This concludes the proof of part (ii).

(iii) 
In what follows we let $u_0 \in T^{-1}(y) \cap \mH(\mu)$ denote the limit of the relabelled subsequence
$\{ u_s \}$ of $\{u_r\}$ as in part (ii). Now suppose either $\{u_s\}$ 
is not bounded in $\mH(\mu)$ or
if it is, it only converges weakly to $u_0$ and not strongly in $\mH(\mu)$.
This implies that $\| u_0 \|_{\mH(\mu)} \le \lim \inf_{s \to 0} \| u_s \|_{\mH(\mu)}$
which in turn implies the existence of a sufficiently small $s$ for which 
$\| u_0 \|_{\mH(\mu)} \le \| u_s \|_{\mH(\mu)}$. Therefore \Cref{lem:OM-lemma} implies that 
$  \limsup_{s \to 0} \frac{\mu(B_s(u_s))}{\mu(B_s(u_0))} \le 1.$
On the other hand, by the definition of $u_s$ we have that $\mu(B_s(u_s)) \ge \mu(B_s(u_0))$ and
so $  \liminf_{s \to 0} \frac{\mu(B_s(u_s))}{\mu(B_s(u_0))} \ge 1,$
from which we conclude that
\begin{equation}\label{interim-display-limit-of-balls}
  \lim_{s \to 0} \frac{\mu(B_s(u_s))}{\mu(B_s(u_0))} = 1.
\end{equation}
By \Cref{def:conditional-mode} it follows that $u_0$ is a
conditional mode. It remains to consider
the setting where $\{ u_s \}$ converges strongly to $u_0$ in $\mH(\mu)$.
Then by the Cameron-Martin formula we have
\begin{equation*}
  \frac{\mu(B_s(u_s))}{\mu(B_s(u_0))}
  = \exp \left( \frac{1}{2} \| u_0 \|_{\mH(\mu)}^2 - \frac{1}{2} \| u_s \|_{\mH(\mu)}^2  \right)
  \frac{\int_{B_s(0)} \exp \left( -\langle u_s, v \rangle_{\mH(\mu)} \right) \mu(\dd v)  }
  {\int_{B_s(0)} \exp \left( -\langle u_0, v \rangle_{\mH(\mu)} \right) \mu(\dd v)}.
\end{equation*}
It follows, from \cite[Lem.~4.7.1; see also proof of Lem.~4.7.2]{bogachev1998gaussian},
that the maps
\begin{equation*}
 u \mapsto \mu(B_s(0))^{-1} \int_{B_s(0)} \exp \left( -\langle u, v \rangle_{\mH(\mu)} \right) \mu(\dd v),
\end{equation*}
are locally Lipschitz on $\mH(\mu)$ from which we infer \eqref{interim-display-limit-of-balls} once again.

We now show that $u_0$ solves \eqref{conditional-map-optimization-problem}. Suppose otherwise,
so that $\| u_0 \|_{\mH(\mu)} - \| u^y \|_{\mH(\mu)} > 0$. By \Cref{lem:OM-lemma} we
have that
\begin{equation*}
  \frac{\mu(B_s(u_0))}{\mu(B_s(u^y))} \le K(s) \exp \left( \frac{1}{2} \| u^y \|_{\mH(\mu)}^2 -
  \frac{1}{2} \| u_0 \|_{\mH(\mu)}^2 \right),
\end{equation*}
with $K(s) \to 1$ as $s \to 0$. Now choose $\tilde{s}$ sufficiently small so that
\begin{equation*}
1 \le K(s) < \exp \left( \frac{1}{2} \| u_0 \|_{\mH(\mu)}^2 -
  \frac{1}{2} \| u^y \|_{\mH(\mu)}^2 \right)
\end{equation*}
for any $s < \tilde{s}$. Then by the above display we
have
\begin{equation*}
  \frac{\mu(B_s(u_0))}{\mu(B_s(u^y))} <1.
\end{equation*}
Using this bound and \eqref{interim-display-limit-of-balls} we can then write
\begin{equation*}
  \limsup_{s \to 0} \frac{\mu(B_s(u_s))}{\mu(B_s(u^y))}
  = \limsup_{s \to 0} \frac{\mu(B_s(u_s))}{\mu(B_s(u_0))} \frac{\mu(B_s(u_0))}{\mu(B_s(u^y))}
  < \limsup_{s\to 0} \frac{\mu(B_s(u_s))}{\mu(B_s(u_0))} \le 1, 
\end{equation*}
which is a contradiction since by the definition of $u_s$ we have $\mu(B_s(u_s)) \ge \mu(B_s(u^y))$
for any $s >0$. Thus $u_0$ solves \eqref{conditional-map-optimization-problem}.
{}
\end{proof}

\begin{proof}[Proof of Theorem~\ref{prop:conditional-mode-optimization}]
  First let $u^y$ be a conditional mode and take the sequence $\{u_r\}_{r \ge 0}$ as in \eqref{def:u-r-sequence}.
  By \Cref{prop:main-conditional-map} there exists a relabelled subsequence $\{u_s\}$
  which converges strongly in $\mX$ to $u_0 \in \mH(\mu) \cap T^{-1}(y)$ and 
  $u_0$ is
  also a conditional mode and so by \Cref{def:conditional-mode}
 it holds that
$    \lim_{r \to 0} \frac{\mu(B_r(u_r))}{\mu(B_r(u_0))} = 1. $
  Since $u^y$ is also a conditional mode we have
  \begin{equation*}
    \lim_{r \to 0} \frac{\mu(B_r(u^y))}{\mu(B_r(u_0))} =
    \lim_{r \to 0} \frac{\mu(B_r(u^y))}{\mu(B_r(u_r))}
    \lim_{r \to 0} \frac{\mu(B_r(u_r))}{\mu(B_r(u_0))} =1. 
  \end{equation*}
  We infer from \Cref{lem:ON-of-points-outside-CM} that $u^y \in \mH(\mu) \cap T^{-1}(y)$
  since otherwise the limit $\lim_{r \to 0} \frac{\mu(B_r(u^y))}{\mu(B_r(u_r))}$ would vanish.
  Now suppose $u^y$ does not solve \eqref{conditional-map-optimization-problem}. We can
  obtain a contradiction by repeating the last step of the proof of \Cref{prop:main-conditional-map}.

  To prove the converse statement let $u^y$ be a solution of
  \eqref{conditional-map-optimization-problem} with $u_0$ defined as before. Then \Cref{lem:OM-lemma} implies  $\lim_{r \to 0} \frac{\mu(B_r(u_0))}{\mu(B_r(u^y))}=1$,
  and so we have
  \begin{equation*}
    \lim_{r \to 0} \frac{\mu(B_r(u_r))}{\mu(B_r(u^y))}
    = \lim_{r \to 0} \frac{\mu(B_r(u_r))}{\mu(B_r(u_0))}
     \lim_{r \to 0} \frac{\mu(B_r(u_0))}{\mu(B_r(u^y))} = 1.
   \end{equation*}
The result follows from \Cref{def:conditional-mode}.
\end{proof}

\end{appendices}


\bibliography{KF_PDE_references.bib}

\end{document}